\documentclass{article}

\usepackage{microtype}
\usepackage{graphicx}
\usepackage{subcaption}
\usepackage{booktabs} 

\usepackage{hyperref}

\usepackage[accepted]{icml2026}

\usepackage{amsmath}
\usepackage{amssymb}
\usepackage{mathtools}
\usepackage{amsthm}

\newcommand{\showfont}{encoding: \f@encoding{},
  family: \f@family{},
  series: \f@series{},
  shape: \f@shape{},
  size: \f@size{}
}
\usepackage{amssymb}
\usepackage{amsfonts}
\usepackage{amsthm}
\usepackage{mathtools}
\usepackage{mathrsfs}
\usepackage{bm}
\usepackage{bbm}
\usepackage{algorithm}
\usepackage{algorithmic}
\usepackage{tabularx}
\usepackage{hyperref}
\newcommand{\parag}[1]{{\bf #1~~}}

\newcommand{\argmax}{\operatornamewithlimits{argmax}}

\newcommand{\R}{\mathbb{R}} 

\newcommand{\E}[2][]{\mathbb{E}_{#1} \left[ {#2} \right]} 

\renewcommand{\epsilon}{\varepsilon}

\newcommand{\cA}{\mathcal{A}}

\newcommand{\cM}{\mathcal{M}}

\newcommand{\cQ}{\mathcal{Q}}
\newcommand{\cR}{\mathcal{R}}
\newcommand{\cS}{\mathcal{S}}

\usepackage[most]{tcolorbox}
\usepackage{xcolor,xparse}

\definecolor{cqback}{HTML}{F2F2F2}    
\definecolor{cqframe}{HTML}{D1D1D1}   
\definecolor{cqtitle}{HTML}{404040}   

\NewDocumentEnvironment{callout}{O{} m}{%
  \tcolorbox[
    enhanced,
    colback=cqback,
    colframe=cqframe,
    coltitle=cqtitle,
    fonttitle=\bfseries,
    title={#2},
    toptitle=1.5mm, bottomtitle=1.5mm,
    titlerule=0.5pt, 
    titlerule style=cqframe,
    before skip=10pt, after skip=10pt,
    left=10pt, right=10pt, top=8pt, bottom=8pt,
    boxrule=0.8pt,
    arc=3mm, 
    #1
  ]%
}{\endtcolorbox}

\usepackage{booktabs}

\usepackage[T1]{fontenc}
\usepackage[utf8]{inputenc} 
\usepackage{listings}
\usepackage{xcolor}

\usepackage[T1]{fontenc}
\usepackage[utf8]{inputenc} 
\usepackage{listings}
\usepackage{xcolor}

\definecolor{pykeyword}{HTML}{0033B3}   
\definecolor{pystring}{HTML}{067D17}    
\definecolor{pycomment}{HTML}{8C8C8C}   
\definecolor{pynumber}{HTML}{1750EB}    
\definecolor{pybuiltin}{HTML}{8000FF}   
\definecolor{pydecorator}{HTML}{B58900} 
\definecolor{pybg}{HTML}{FBFBFB}        
\definecolor{pyframe}{HTML}{D1D1D1}     
\definecolor{pylineno}{HTML}{B0B0B0}    

\lstdefinestyle{py}{
  language=Python,
  backgroundcolor=\color{pybg},
  basicstyle=\ttfamily\small,
  keywordstyle=\color{pykeyword}\bfseries,
  stringstyle=\color{pystring},
  commentstyle=\color{pycomment}\itshape,
  numberstyle=\tiny\color{pylineno},
  identifierstyle=\color{black},
  emphstyle={[1]\color{pybuiltin}\bfseries},
  emphstyle={[2]\color{pydecorator}},
  emph={[1]jnp,jax,np,numpy,torch,self,True,False,None},
  emph={[2]int,float,bool,str,list,tuple,dict,set,ndarray},
  morekeywords={match,case},
  numbers=left, numbersep=8pt,
  frame=single, rulecolor=\color{pyframe},
  breaklines=true, breakatwhitespace=true,
  showstringspaces=false, tabsize=4,
  columns=fullflexible, keepspaces=true,
  literate={->}{{$\rightarrow$}}{2}
           {<-}{{$\leftarrow$}}{2}
}
\usepackage{bm}

\newcommand{\proposed}{ReMax\xspace}

\usepackage{hyperref}
\usepackage{url}
\usepackage{xspace}
\usepackage{wrapfig}
\usepackage{caption}

\usepackage[capitalize,noabbrev]{cleveref}

\theoremstyle{plain}
\newtheorem{theorem}{Theorem}[section]
\newtheorem{proposition}[theorem]{Proposition}

\theoremstyle{definition}
\newtheorem{definition}[theorem]{Definition}

\theoremstyle{remark}

\usepackage[textsize=tiny]{todonotes}

\icmltitlerunning{Emergence of exploration in policy gradient reinforcement learning via retrying\looseness=-1}

\begin{document}
\addtocontents{toc}{\protect\setcounter{tocdepth}{-10}}

\twocolumn[
  \icmltitle{Emergence of Exploration in Policy gradient reinforcement learning via Retrying}



  \icmlsetsymbol{equal}{*}

  \begin{icmlauthorlist}
    \icmlauthor{Soichiro Nishimori}{utokyo}
    \icmlauthor{Paavo Parmas}{utokyo}
    \icmlauthor{Sotetsu Koyamada}{kobe,kyoto,atr} \\
    \icmlauthor{Tadashi Kozuno}{isara} 
    \icmlauthor{Toshinori Kitamura}{alberta}
    \icmlauthor{Shin Ishii}{kyoto,atr}
    \icmlauthor{Yutaka Matsuo}{utokyo}
  \end{icmlauthorlist}

  \icmlaffiliation{utokyo}{The University of Tokyo, Tokyo, Japan}
  \icmlaffiliation{kobe}{Kobe University, Kobe, Japan}
  \icmlaffiliation{kyoto}{Kyoto University, Kyoto, Japan}
  \icmlaffiliation{isara}{Isara Laboratories, Inc., Tokyo, Japan}
  \icmlaffiliation{atr}{ATR, Kyoto, Japan}
  \icmlaffiliation{alberta}{University of Alberta, Edmonton, Canada}

  \icmlcorrespondingauthor{Soichiro Nishimori}{nishimori@ms.k.u-tokyo.ac.jp}
  \icmlcorrespondingauthor{Paavo Parmas}{paavo.parmas@weblab.t.u-tokyo.ac.jp}

  \icmlkeywords{Machine Learning, ICML}

  \vskip 0.3in
]



\printAffiliationsAndNotice{}  

\begin{abstract}
    In reinforcement learning (RL), agents benefit from exploration \emph{only} because they repeatedly encounter similar states: trying different actions can then improve performance or reduce uncertainty; without such retries, a greedy policy is optimal.
    We formalize this intuition with \textbf{ReMax}, an objective that evaluates a policy by the expected maximum return over a retry set of size \(M\) (\(M \in \mathbb{N}\)), while accounting for return uncertainty.
    Optimizing this objective induces stochastic exploration as an emergent property, without explicit bonus terms.
    For efficient policy optimization, we derive a new policy-gradient formulation for ReMax and introduce \textbf{Re}Max \textbf{PPO} (\textbf{RePPO}), a PPO variant that optimizes ReMax while generalizing the discrete retry count \(M\) to a continuous parameter \(m > 0\), enabling fine-grained control of exploration.
    Empirically, RePPO promotes exploration---without any explicit exploration bonuses---on the MinAtar and Craftax benchmarks.
    The official code is available at \url{https://github.com/nissymori/remax-rl}.
\end{abstract}

\section{Introduction}
Exploration is a central problem in reinforcement learning (RL) and has been extensively studied \citep{sutton1998intro, haarnoja2018sac, Ziebart_undated-ju, mnih2016asynchronous}.
A prevailing approach adds bonuses, such as entropy \citep{haarnoja2018sac, Ziebart_undated-ju} or count-based bonuses \citep{bellemare2016unifying, ostrovski2017count}, to the environment’s reward to encourage exploration explicitly.
Other lines of work instantiate posterior sampling \citep{thompson1933likelihood} using ensembles or Bayesian networks \citep{osband2016deep, osband2018randomized, osband2019deep}.
We study a distinct mechanism that drives exploration via greedy reward maximization.
\looseness=-1

The objective of RL is not to maximize the reward on the current trial, but rather to learn to maximize the rewards after several trials.
Exploration matters because it enables achieving higher rewards on subsequent trials.
Without the opportunity for retrying, exploration is unnecessary: the rational choice is the action currently believed to yield the highest reward.
Environmental uncertainty also motivates exploration, encouraging attempts at alternative actions for information gathering.
If there is no uncertainty, we do not need to explore: the problem is reduced to pure optimization.
\looseness=-1

Building on this principle—\emph{decision-makers retry under uncertainty}—we propose the \textbf{ReMax} objective, which formalizes exploration as reward maximization under uncertainty.
We first present ReMax in a bandit setting and compare it to the standard RL objective.

\textbf{The ReMax objective (bandit).}
Let $K\ge 2$ be the number of actions, let $\mu=(\mu_1,\ldots,\mu_K)$ denote per-action values, and let $\Pi$ be a distribution over $\mu$ (e.g., the agent's current belief/posterior over unknown values).
For a policy \mbox{$\pi\in\Delta^{K-1}$} ($\Delta^{K-1}$ is the probability simplex over $K$ actions), $M\in\mathbb{N}$ and $[M]\coloneq \{1,\ldots, M\},$
\begin{equation}\label{eq:remax-obj}
\begin{aligned}
&J_{\mathrm{RL}}(\pi)
:= \E[A\sim\pi]{\mu_{A}},\\
&J^M_{\scriptscriptstyle\mathrm{ReMax}}(\pi)
:= \E[\textcolor{red!60}{\bm{\mu\sim\Pi}}]{\E[\textcolor{blue!60}{\bm{A_{[M]}\sim\pi}}]{\textcolor{blue!60}{\max_{m\in[M]}}\mu_{A_m} \mid \mu}}.
\end{aligned}
\end{equation}

Blue captures \textcolor{blue!60}{\textbf{retrying}}: we score a policy by the best of $M$ draws.
For $M=1$ with fixed rewards, $J^M_{\scriptscriptstyle\mathrm{ReMax}}$ reduces to $J_{\mathrm{RL}}(\pi)$ and the optimum is deterministic \citep{sutton1998intro}; for $M\ge 2$ it can be stochastic (Sec.~\ref{sec:bandit}).
Red captures (epistemic) \textcolor{red!60}{\textbf{uncertainty}} \citep{ghosh2021generalization} over $\mu$---uncertainty due to limited data rather than inherent randomness---which evolves during exploration; we address it via explicit modeling \citep{osband2016deep} or by sampling from nonstationary return estimates \citep{moalla2024no}.
\looseness=-1

Comparatively, one canonical approach to exploration is to augment the reward with curiosity-based bonuses, including pseudo-count methods \citep{bellemare2016unifying,lobel2023flipping} and prediction-error-based approaches \citep{pathak2017curiosity,burda2018exploration}.
While these methods have proven effective in ALE game domains, they typically require additional models to estimate the statistics needed to construct the bonuses, increasing algorithmic complexity and imposing extra computational overhead.

Unlike methods that add explicit bonuses, ReMax induces exploration \emph{without} bonuses by optimizing a purely reward-based objective.
Recent work in LLM training for reasoning tasks, subsequent or concurrent to ours, has studied retry-based objectives such as pass@$K$ \citep{walder2025pass, tang2025optimizing, chen2025pass,bagirov2025bonworlds,takashiro2026maxk}, which share a similar idea to ours.
\footnote{See \citep{anonymous2022remax} for an early preprint of the current paper that proposed the ReMax objective and its simple optimization with REINFORCE already in 2022, preceding later max@$K$/pass@$K$ policy-optimization works in LLM training.}
The key distinction is that ReMax explicitly accounts for reward uncertainty, whereas LLM reasoning tasks typically assume fixed, verifiable rewards.
For a broader discussion and connections to related work, see App.~\ref{app:related_work}.

Throughout this paper, we address the following question:
\begin{callout}{}
Can we promote exploration without adding explicit bonuses by optimizing ReMax in RL?
\end{callout}
To answer this question, we organize the paper as follows.

\textbf{Step 1: Empirical study in Bandits.}
We empirically illustrate the core idea of how ReMax (defined in Eq.~\eqref{eq:remax-obj}) induces effective exploration in bandits in Sec.~\ref{sec:bandit}.
As the retry parameter $M$ increases, the optimal policy becomes more exploratory, and ReMax adapts exploration to the scale of reward uncertainty; in a posterior bandit setting, it exhibits \emph{empirically} sublinear regret (as observed in our experiments).
\looseness=-1

\textbf{Step 2: ReMax in RL.}
We define the ReMax objective for RL in Sec.~\ref{sec:remax-rl}.
Unlike bandits, state transitions hinder retrying multiple actions from the same state to observe returns, so we emulate retries via queries to a $Q$-function and discuss possible instantiations of ReMax in RL.

\textbf{Step 3: Policy Gradient for ReMax.}
To optimize ReMax, we develop a practical policy-gradient (PG) method in Sec.~\ref{sec:remax-pg}.
We derive a new PG formulation that is estimable from trajectory returns and generalize the integer draw count $M$ to a positive real parameter $m > 0$, enabling finer control of the exploration–exploitation trade-off.
Building on this formulation, we introduce \textbf{Re}Max \textbf{PPO} (RePPO), a PPO-based deep actor–critic algorithm \citep{schulman2017proximal}.
\looseness=-1

\textbf{Step 4: Experiments.}
Finally, we evaluate RePPO on MinAtar \citep{young2019minatar} and Craftax \citep{matthews2024craftax} in Sec.~\ref{sec:exp}.
RePPO optimizes ReMax without exploration bonuses, achieves better performance, and maintains higher policy entropy than PPO with an entropy bonus; peak performance occurs around $m=1.2$–$1.4$.
On Craftax, a larger-scale open-ended RL environment, RePPO achieves competitive performance compared to a tuned entropy-regularized PPO, despite not using an exploration bonus.
Overall, ReMax emerges as a promising objective for exploration in reinforcement learning.

\begin{figure*}[t]
  \centering
  \begin{minipage}[t]{0.32\linewidth}
    \centering
    \includegraphics[width=\linewidth]{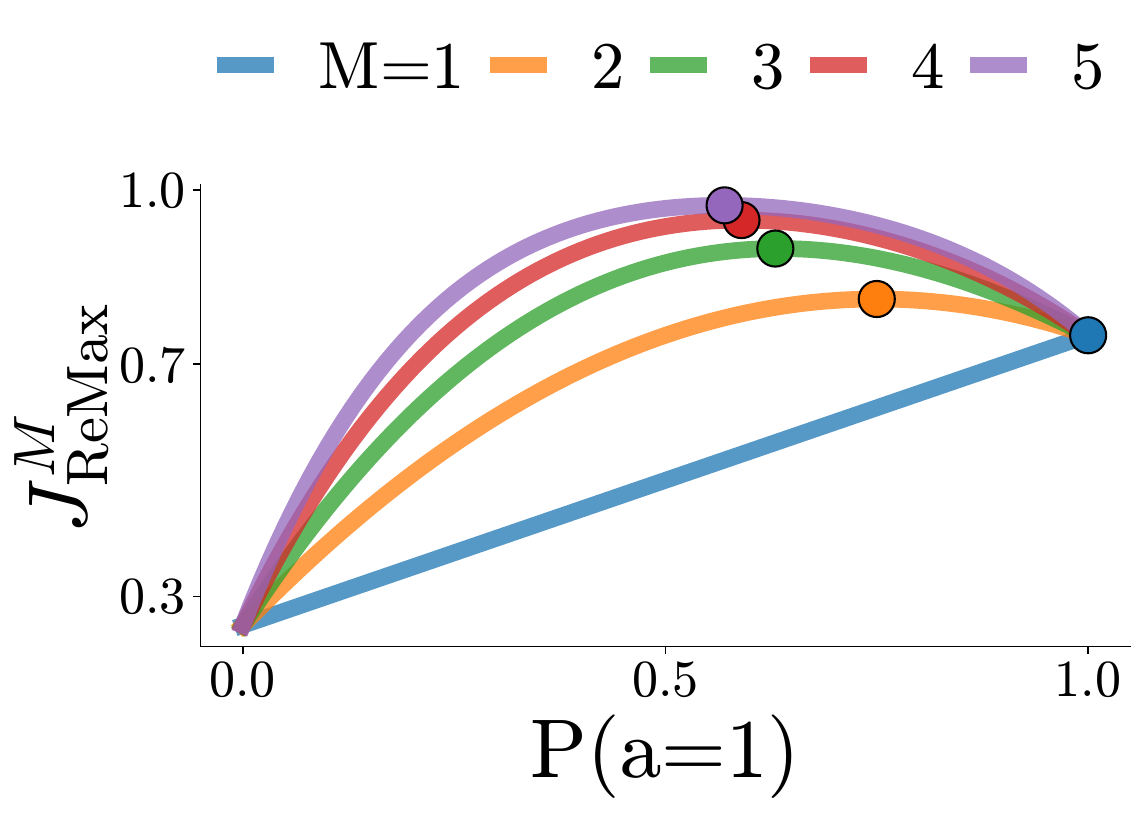}
  \end{minipage}\hfill
  \begin{minipage}[t]{0.32\linewidth}
    \centering
    \includegraphics[width=\linewidth]{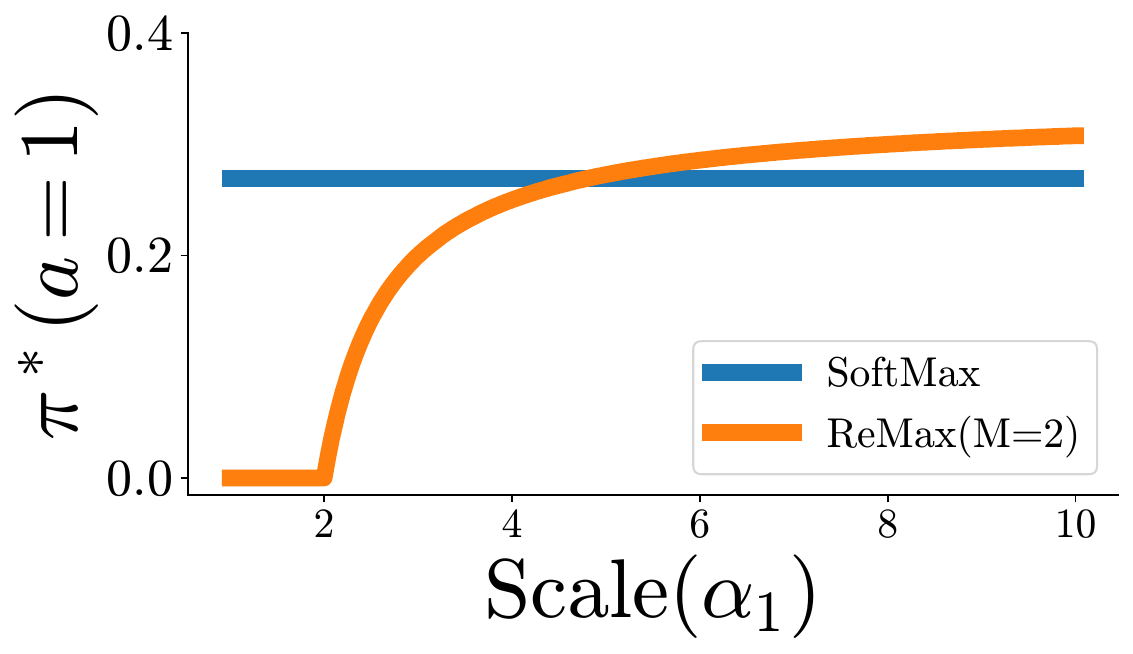}
  \end{minipage}\hfill
  \begin{minipage}[t]{0.32\linewidth}
    \centering
    \includegraphics[width=\linewidth]{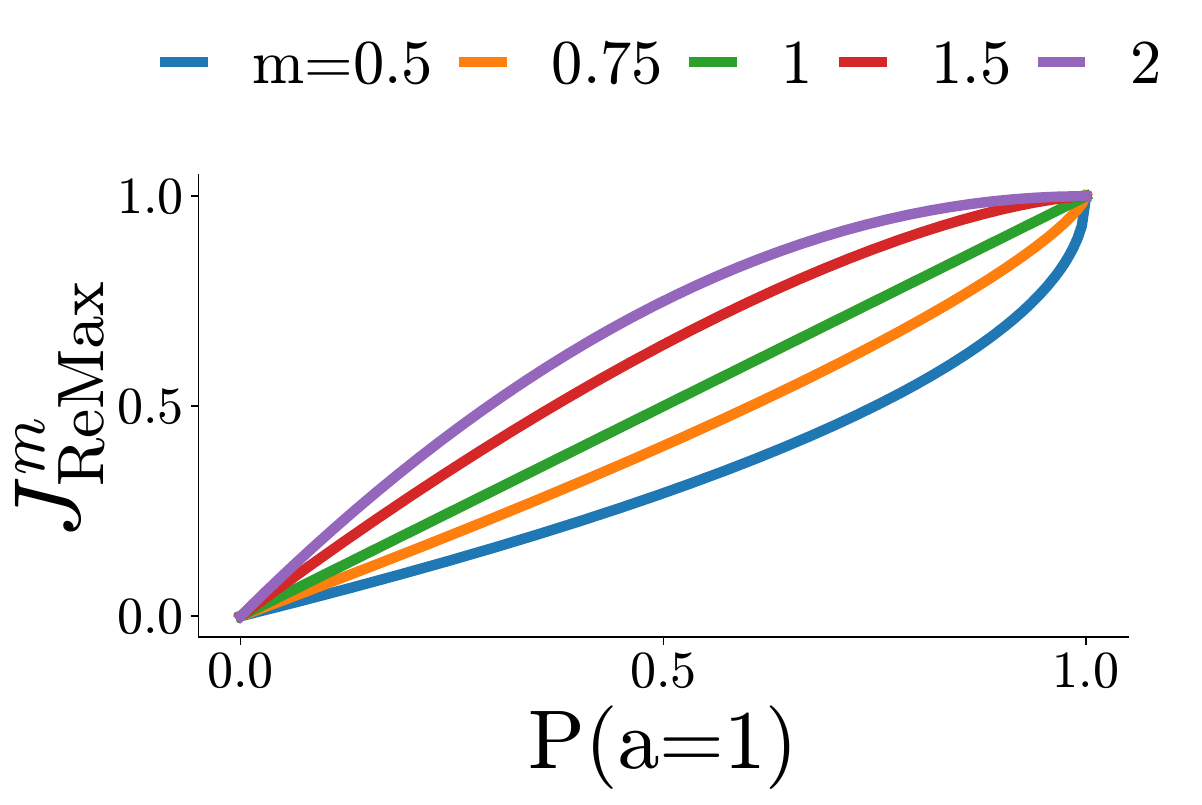}
  \end{minipage}
  \caption{
    Bandit problems.
    (\textbf{Left}) ReMax objective for a binary bandit.
    The standard reinforcement learning objective ($M=1$) has a deterministic optimal policy ($p^*=1$); in contrast, increasing the retry count $M \ge 2$ shifts the optimal policy toward stochastic exploration to hedge against reward uncertainty.
    (\textbf{Center}) A plot of how the optimal policy changes as the reward variance changes.
    While Softmax remains fixed regardless of variance, ReMax automatically increases exploration as reward uncertainty (scale $\alpha_1$) grows, targeting rare high-reward outcomes.
    (\textbf{Right}) ReMax objective for a fixed deterministic reward vector.
    The continuous parameter $m$ reshapes the objective's curvature. Values of $m > 1$ flatten the gradient to slow convergence and sustain exploration, while $m < 1$ accelerates updates.
  }
  \label{fig:bandit}
\end{figure*}

\section{ReMax in Bandits: An Empirical Study}\label{sec:bandit}
This section builds intuition for how ReMax (Eq.~\eqref{eq:remax-obj}) promotes exploration via retrying and uncertainty.
We first design simple reward distributions to illustrate ReMax's optima (Sec.~\ref{sec:bandit-basic}), then move to a posterior-updating bandit where $\Pi$ is learned from data (i.e., a standard Bayesian bandit learning setting as in Thompson sampling; Sec.~\ref{sec:bandit-with-posterior}).

\subsection{Warm-up: Properties of ReMax.}\label{sec:bandit-basic}

\paragraph{ReMax optima yield stochastic policies.}
This example shows how retrying can induce stochastic policies.
Consider a two-armed bandit with arms indexed by $a\in\{0,1\}$ (so $\mu=(\mu_0,\mu_1)$): $\mu=(0,1)$ w.p.\ $0.75$ and $\mu=(1,0)$ w.p.\ $0.25$.
For the RL objective, the optimal policy is deterministic and always chooses arm $1$.
For ReMax (Eq.~\eqref{eq:remax-obj}), the optimal policy is stochastic: mixing between arms hedges which one is rewarding (repeating the same arm cannot improve the max-over-$M$ value).
Since $\mu=(0,1)$ is more likely, a limited retry budget $M$ still assigns substantial mass to arm $1$ to avoid missing it.

As $M$ grows, the policy can explore arm $0$ more often and the optimum becomes increasingly exploratory.
Fig.~\ref{fig:bandit} (Left) plots $J^M_{\scriptscriptstyle\mathrm{ReMax}}(p)$ vs.\ $p:=\pi(a{=}1)$ for $M=1,\dots,5$ (analytic values; see App.~\ref{app:bandit:binary}).
Dots mark the maximizer $p^*$: $p^*=1$ for $M=1$ (value 0.75), and for $M \ge 2$ it shifts toward exploration as the value approaches $1$. Therefore, the retry mechanism induces stochastic behavior in the presence of uncertainty.

\paragraph{ReMax adapts exploration to reward uncertainty.}
The previous example showed that ReMax induces stochastic behavior by the retry mechanism; here we show that it adapts to the magnitude of reward uncertainty.
We consider a two-armed Bernoulli bandit with $\mu_i=\alpha_i X_i$, where $X_i\sim\mathrm{Bernoulli}(p_i)$ and $\mathbb{E}[\mu_i]=\alpha_i p_i$.
We fix $p_0=1$ and $\alpha_0=2$, and vary $\alpha_1$ from $1$ to $10$, adjusting $p_1$ so that $\alpha_1 p_1=1$ remains constant (fixed mean, varying variance).
Fig.~\ref{fig:bandit} (Center) shows the optimal probability of selecting arm 1, $\pi^*(a=1)$, for $M=2$, alongside that of the softmax policy, which is the analytical optimum of the RL objective with an entropy bonus (App.~\ref{app:bandit:bernoulli}).
When $\alpha_1 \le 2$, arm 1 is never chosen since its maximum cannot exceed that of arm 0.
As $\alpha_1$ increases beyond $2$, ReMax increasingly favors arm 1, reflecting its adaptiveness to rare but high-reward outcomes.
In contrast, Softmax remains flat across $\alpha_1\in[1,10]$ as it depends solely on the mean, which is fixed.

\paragraph{ReMax with deterministic rewards.}\label{sec:bandit-efficient}
The previous examples focused on ReMax with stochastic rewards.
We now turn to the deterministic setting and show that, \emph{even with fixed rewards}, ReMax reshapes the objective geometry and thus modulates convergence via the retry parameter.

Consider a binary bandit with rewards fixed to $\mu=(0,1)$ and let $p:=\pi(a{=}1)\in[0,1]$.
In this case, the ReMax objective admits a closed form: for $m>0$, $J_{\scriptscriptstyle\mathrm{ReMax}}^{\,m}(p)=1-(1-p)^{m}$, where $m$ can be an integer or a positive real.
Fig.~\ref{fig:bandit} (Right) plots $J_{\scriptscriptstyle\mathrm{ReMax}}^{\,m}(p)$ for $m=0.5, 0.75, 1, 1.5, 2$, foreshadowing our continuous-$m$ formulation in Sec.~\ref{sec:remax-rl} and Sec.~\ref{sec:remax-pg}.
The maximizer remains $p^\star=1$ for all $m$, indicating that after sufficient exploration removes epistemic uncertainty, the policy will converge to the optimal policy.
However, the local geometry near high $p$ depends strongly on $m$: larger $m$ flattens the objective and reduces gradient magnitudes, whereas smaller $m$ sharpens curvature and amplifies gradients.
Thus, tuning $m$ controls convergence even in deterministic settings: $m>1$ slows updates (encouraging exploration), while $m<1$ accelerates them, mitigating the slow convergence often observed with softmax policies \citep{hennes2019neural}.
Furthermore, non-integer $m$ naturally interpolates between integer retry counts (e.g., $m=1.5$ fits between $m=1$ and $m=2$), enabling finer-grained control.

\subsection{Bandit with Posterior: Empirical Sublinear Regret.}\label{sec:bandit-with-posterior}
In the previous section, we intentionally designed the distribution over reward $\Pi$ to illustrate properties of ReMax.
In practice, the distribution is \emph{estimated} and updated from observed data as the agent explores the environment.
To validate ReMax in this more realistic setting, we consider a $K$-armed bandit with posterior updates, optimize ReMax using samples from the evolving posterior \citep{thompson1933likelihood}, and evaluate cumulative regret.

\textbf{Problem setup.}
At each run, draw means $(\mu_1,\ldots,\mu_K)\sim\Pi^*$ and fix them, with $\mu^*=\max_i \mu_i$.
At round $t\in[T]$, we choose $A_t$, observe $R_t$ (mean $\mu_{A_t}$), and update the posterior $\Pi_{t+1}$ (prior $\Pi_0=\Pi^*$).
ReMax optimizes $\widehat{J}^M_{\scriptscriptstyle\mathrm{ReMax}}(\pi_\theta)$ using samples from $\Pi_t$ to produce $\pi_t$ (see App.~\ref{app:bandit:bandit-with-posterior}).
We study (i) \textbf{Beta--Bernoulli} ($\Pi_0=\mathrm{Beta}(1,1)$, $R_t\sim\mathrm{Bernoulli}(\mu_{A_t})$) and (ii) \textbf{Gaussian--Gaussian} ($\Pi_0=\mathcal{N}(0,1)$, $R_t\sim\mathcal{N}(\mu_{A_t},1)$).
We compare with Thompson sampling \citep{thompson1933likelihood,agrawal2017near,honda2014optimality}, UCB \citep{auer2002finite} (both sublinear-regret), and a Softmax baseline, where we took the softmax of the posterior means and selected the arm following the distribution. We use $K=10$, $T=1000$, and $M\in\{2,3\}$, and report mean $\pm$ standard error cumulative regret over 256 runs (instantaneous regret $\mu^*-\mu_{A_t}$); full details are in App.~\ref{app:bandit:bandit-with-posterior}.

\textbf{Results.}
ReMax exhibits empirically sublinear cumulative regret, comparable to the classic UCB and Thompson sampling baselines, while Softmax incurs higher cumulative regret, especially for the Gaussian-Gaussian bandit (Fig.~\ref{fig:regret}).
We do not claim superiority; rather, these results demonstrate that ReMax yields effective exploration in practice.
We leave theoretical regret bounds to future work \footnote{See our follow-up work with \citet{tong2026finite} for sublinear regret bounds of ReMax under posterior sampling when $M=2$.}.

\begin{figure*}[t]
  \centering
  \begin{minipage}[t]{0.45\linewidth}
    \centering
    \includegraphics[width=\linewidth]{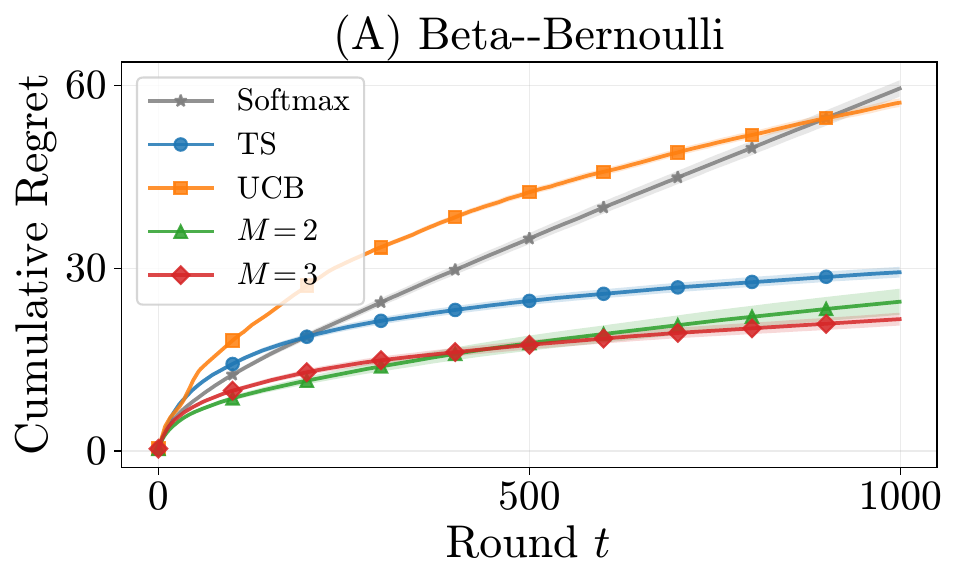}
  \end{minipage}\hfill
  \begin{minipage}[t]{0.45\linewidth}
    \centering
    \includegraphics[width=\linewidth]{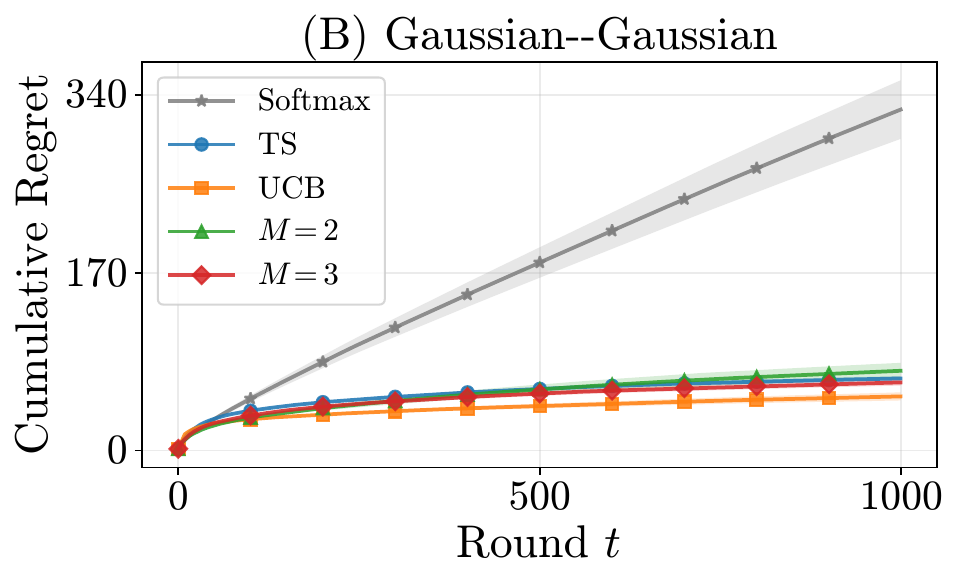}
  \end{minipage}
  \caption{Average cumulative regret and standard error over 256 runs.
  ReMax with $M=2$ and $M=3$, optimized by gradient ascent.}
  \label{fig:regret}
\end{figure*}

\section{ReMax in RL}\label{sec:remax-rl}
We extend ReMax from bandits to episodic Markov Decision Processes (MDPs) \citep{puterman2014markov} $\cM=(\cS,\cA,r,P,T)$ with discrete actions $\cA=\{1,\dots,K\}$.
For $\tau\sim(\pi,P)$, let $\cR(\tau):=\sum_{t=0}^{T-1} r(s_t,a_t)$ and maximize $\E{\cR(\tau)}$ (subscript by $\cM$, i.e., $\cR_{\cM}, \cQ_{\cM}$, when needed).
\looseness=-1

In extending ReMax from bandits to RL, two issues arise: the unit of \textcolor{red!60}{\textbf{uncertainty}} and the feasibility of \textcolor{blue!60}{\textbf{retries}}.
Uncertainty in RL concerns the entire environment, including rewards and transitions, so we place a distribution over MDPs, \textcolor{red!60}{$\cM\sim P(\cM)$}, capturing (epistemic) uncertainty over unexplored regions of the environment \citep{ghosh2021generalization}.
Retrying in RL would na\"ively require multiple rollouts from the same state until termination, which is infeasible without a resettable simulator \citep{ecoffet2021firstReturn} and often prohibitively expensive even with one.
To address this, we introduce function approximation via a Q-function $Q_{\cM}^\pi(s,a):=\E[\tau\sim(\pi,P)]{\cR_\cM(\tau)\mid s,a}$, i.e., the expected episodic return from starting in state $s$, taking action $a$, and then following $\pi$; this replaces Monte Carlo returns from $(s,a)$ with queries to $Q_{\cM}^\pi$.
To make the retry structure explicit, we fix a state $s\in\cS$ at which retries are considered.
In practice, $s$ ranges over states encountered during training; our actor--critic implementation optimizes an average of this objective over the on-policy state distribution.
The ReMax objective is defined as

\begin{equation}
\begin{aligned}
&J^M_{\scriptscriptstyle\mathrm{ReMax}}(\pi, s)\\
&\; := \E[\textcolor{red!60}{\bm{\cM \sim P(\cM)}}]{\E[\textcolor{blue!60}{\bm{A_{[M]}\sim\pi}}]{\textcolor{blue!60}{\max_{m\in[M]}} Q_{\cM}^\pi(s,A_m)}}.
\end{aligned}
\label{eq:q-based-remax-rl}
\end{equation}
As in the bandit setting, setting $M=1$ with a fixed environment $\cM$ recovers the standard RL objective, and the optimal policy is deterministic \citep{sutton1998intro}.
In Eq.~\eqref{eq:q-based-remax-rl}, all within-environment randomness is already absorbed into $Q_{\cM}^\pi$, so the dependence on $\cM$ is only through $Q_{\cM}^\pi$.
Let $\cQ$ denote the induced distribution over Q-functions (from $\cM\sim P(\cM)$ or an uncertainty model); replacing the outer expectation over $\cM$ by an expectation over $Q\sim\cQ$ yields a practical form for optimization:

\begin{definition}[ReMax objective with Q distribution]
\begin{equation}
\begin{aligned}
&J^M_{\scriptscriptstyle\mathrm{ReMax}}(\pi, s, \cQ)\\
&\; := \E[\textcolor{red!60}{\bm{Q\sim\cQ}}]{\E[\textcolor{blue!60}{\bm{A_{[M]}\sim\pi}}]{\textcolor{blue!60}{\max_{m\in[M]}} Q(s,A_m)}}.
\end{aligned}
\label{eq:q-based-remax-rl-q}
\end{equation}
\end{definition}
For a fixed state, the Q-values $Q(s,\cdot)$ form a $K$-dimensional vector, reducing the problem to the bandit case.
Therefore, we can expect that ReMax enjoys the exploration advantages observed in Sec.~\ref{sec:bandit}.
In practice, optimizing the ReMax objective involves several algorithmic considerations.

\textbf{Modeling Q-function uncertainty.}
The objective in Eq.~\eqref{eq:q-based-remax-rl-q} can be estimated given samples of Q-values from the distribution $\cQ$, via two approaches:
(1) \textbf{Explicit modeling:} explicitly model $\cQ$ with ensembles or Bayesian methods \citep{osband2016deep, osband2018randomized,osband2019deep, an2021uncertainty}; model-based methods can likewise capture environment-level uncertainty \citep{hafner2019dream}.
(2) \textbf{Implicit modeling:} in standard deep RL, critics $Q_\phi(s,a)$ are inherently nonstationary due to distribution shift and algorithmic randomness \citep{moalla2024no, tang2024improving}, so we treat $Q_\phi(s,\cdot)$ at each update as a sample from this \emph{implicit} distribution.
As in the bandit setting (Sec.~\ref{sec:bandit}), ReMax is expected to adapt to this variability for exploration and further, even with nearly deterministic Q-values, ReMax with $M>1$ can still promote exploration by slowing down the convergence as demonstrated in the deterministic bandit in Sec.~\ref{sec:bandit-efficient}.

\textbf{Computing the expected maximum over $M$ trials.}
Suppose a sample of Q-values for a state $s$ is given by $q=(Q(s,a_1),\dots,Q(s,a_K))$ with $Q\sim\cQ$ via either approach in the previous paragraph.
The question is how to compute the inner expectation for a fixed $q$, defined as $J^M_{\scriptscriptstyle\mathrm{ReMax}}(\pi, s, q) := \E[A_{[M]}\sim\pi]{\max_{m\in[M]} q_{A_m}}$.
Sampling $M$ actions yields an unbiased but high-variance estimator, so we provide a closed form.

\begin{proposition}[Closed-form expression of the inner expectation]
    \label{prop:efficient-exact-remax}
    Given a Q-value vector $q \in \mathbb{R}^K$, sort it as
    $q_{(1)} \ge \cdots \ge q_{(K)}$, breaking ties arbitrarily, with aligned policy masses
    $\pi_{(j)}$.
    Define $C_0 := 0$ and $C_j := \sum_{u=1}^j \pi_{(u)}$, $j=1,\ldots,K$.
    Then,
    \begin{equation}
    \label{eq:efficient-exact-remax}
    J^M_{\scriptscriptstyle\mathrm{ReMax}}(\pi,s,q)
    =
    q_{(1)}
    +
    \sum_{j=1}^{K-1}
    \bigl(q_{(j+1)}-q_{(j)}\bigr)
    \bigl(1-C_j\bigr)^M .
    \end{equation}
    \end{proposition}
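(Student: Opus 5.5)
The plan is to write the maximum of the $M$ sampled Q-values as a telescoping sum of indicator-weighted gaps, and then take expectations term by term. Let $A_1,\dots,A_M\sim\pi$ be i.i.d. Let $R:=\min_{m\in[M]} r(A_m)$, where $r(a)$ denotes the rank of action $a$ in the sorted order; with ties broken arbitrarily but fixed, $r$ is a bijection onto $[K]$. Then $\max_{m\in[M]} q_{A_m}=q_{(R)}$, since the best sampled action is the one of smallest rank. This holds even with ties, because the sorted values are nonincreasing in rank, so any tie-breaking gives the same value $q_{(R)}$.

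Next comes the telescoping step. For any rank $r_0\in[K]$,
\begin{equation*}
q_{(r_0)} = q_{(1)} + \sum_{j=1}^{r_0-1}\bigl(q_{(j+1)}-q_{(j)}\bigr) = q_{(1)} + \sum_{j=1}^{K-1}\bigl(q_{(j+1)}-q_{(j)}\bigr)\mathbbm{1}\{r_0>j\}.
\end{equation*}
Substituting $r_0=R$ and taking expectations over $A_{[M]}$ gives, by linearity,
\begin{equation*}
J^M_{\scriptscriptstyle\mathrm{ReMax}}(\pi,s,q)=q_{(1)}+\sum_{j=1}^{K-1}\bigl(q_{(j+1)}-q_{(j)}\bigr)\Pr(R>j).
\end{equation*}

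The remaining step is to compute the tail probability. The event $\{R>j\}$ means that every draw $A_m$ has rank strictly greater than $j$. For a single draw, this event has probability $1-\sum_{u=1}^j\pi_{(u)}=1-C_j$. By independence of the $M$ draws, $\Pr(R>j)=(1-C_j)^M$, which yields Eq.~\eqref{eq:efficient-exact-remax}.

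There is no real obstacle. The only point needing care is the handling of ties, which is dispatched by noting that $q_{(R)}$ equals the maximum regardless of how ties are broken. It is also worth remarking that the identity is the discrete version of $\mathbb{E}[X]=q_{(1)}-\int \Pr(X<t)\,dt$ applied to $X=\max_m q_{A_m}$. Written as $\Pr(R>j)=(1-C_j)^M$, the formula extends verbatim to real $m>0$, which motivates the continuous-$m$ generalization used later.
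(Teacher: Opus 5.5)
Your proof is correct and takes essentially the same route as the paper: both reduce the maximum to $q_{(R)}$, with $R$ the best sampled rank, and both rest on $\Pr(\text{all } M \text{ draws miss the top-}j)=(1-C_j)^M$. The only difference is the order of steps: the paper first computes the point masses $\Pr(R=j)=(1-C_{j-1})^M-(1-C_j)^M$ and then sums by parts, whereas you telescope inside the expectation and use the tail probabilities $\Pr(R>j)$ directly, which removes the separate summation-by-parts step.
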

Refer to App.~\ref{app:proof:efficient-exact-remax} for the proof.
The computational cost is $O(K\log K)$ to sort $q$ and does not depend on $M$.
This closed-form expression allows us to calculate the inner expectation exactly, and it is differentiable with respect to $\pi$.
Note that this relies on a (relatively small) discrete action space, where Q-values for all actions are available and we can sort them; for continuous or vast action spaces (e.g., language models), sample-based estimation is required.

\textbf{Base algorithm.}
ReMax can be integrated into a broad family of RL algorithms that rely on Q-functions as critics. We highlight two classes:
(1) \textbf{Actor--critic:} Actor--critic methods \citep{schulman2017proximal, haarnoja2018sac} are a natural fit since they already optimize policies with respect to critic signals.
Thus, ReMax can be instantiated by replacing the policy-optimization module of the actor--critic algorithm by ReMax.
(2) \textbf{Q-learning:} ReMax can also be combined with Q-learning variants \citep{mnih2013playing, gallici2024simplifying, vieillard2020munchausen}, though it requires training a policy model in addition to the Q-function.

\textbf{Our instantiation.}
Because ReMax is a new policy-optimization objective, we adopt a simple and efficient instantiation: an on-policy actor--critic method with implicit Q-uncertainty and closed-form computation.
Specifically, we use PPO \citep{schulman2017proximal} as the base method due to its strong performance with discrete actions.
We do not use Eq.~\eqref{eq:efficient-exact-remax} directly; instead, we derive a closed-form policy gradient that similarly leverages sorting in Sec.~\ref{sec:remax-pg}.
Other instantiations, such as explicit Q-distribution modeling, optimizing ReMax through Eq.~\eqref{eq:efficient-exact-remax}, or integrating ReMax with other RL algorithms (e.g., Q-learning \citep{mnih2013playing}), are left for future work.
\section{Policy Gradient in ReMax}\label{sec:remax-pg}
To optimize the ReMax objective in Eq.~\eqref{eq:q-based-remax-rl-q} within on-policy actor--critic methods, we develop a practical policy gradient (PG) approach.
Since only a single-trajectory return from $(s,a)$ is observable, we design a PG estimator based on such returns.
In Sec.~\ref{subsec:naive-to-ei}, we show that a na\"ive PG derivation is not directly estimable from trajectory returns and derive an estimation-friendly reformulation.
We then provide a closed form of our proposed PG and generalize the number of draws to a positive real $m$ to enable fine-grained control over exploration in Sec.~\ref{subsec:generalized-ei}.
Finally, we present an actor--critic instantiation, \textbf{Re}Max \textbf{PPO} (RePPO), based on Q-critic PPO \citep{schulman2017proximal}.
\looseness=-1

\subsection{Estimation-Friendly Policy Gradient for ReMax}\label{subsec:naive-to-ei}
We seek an unbiased PG for ReMax that is estimable from single-trajectory returns; we first recall why standard RL admits the REINFORCE estimator \citep{williams1992simple}.

\paragraph{Policy Gradient in Standard RL.}
Let $\pi_\theta: \cS \times \cA \rightarrow [0,1]$ be a parametrized policy, and define $J_{\scriptscriptstyle\mathrm{RL}}(\pi_\theta,s):=\E[\tau\sim(\pi_\theta,P)]{\cR(\tau)\mid s}$.
The policy gradient theorem \citep{sutton1998intro} gives
\begin{equation}\label{eq:rl-pg}
\nabla_\theta J_{\scriptscriptstyle\mathrm{RL}}(\pi_\theta,s) = \E[a\sim\pi_\theta]{\nabla_\theta \log \pi_\theta(a\mid s)\, Q^{\pi_\theta}(s,a)}.
\end{equation}
Because the outer expectation is over a single action $\E[a\sim\pi_\theta]{\cdot}$, replacing $Q^{\pi_\theta}(s,a)$ by the trajectory return $\cR(s,a)$ yields an unbiased REINFORCE estimator: $\hat{g}_{\scriptscriptstyle\mathrm{RL}} := \nabla_\theta \log \pi_\theta(a\mid s)\, \cR(s, a)$.

\paragraph{Problem with the naïve PG for ReMax.}
For ReMax with fixed Q-values $J^M_{\scriptscriptstyle\mathrm{ReMax}}(\pi_\theta,s,q)$, where $q \in \R^K$, applying the policy gradient theorem yields
\begin{equation}\label{eq:reinforce-remax}
\begin{aligned}
&\nabla_\theta J^M_{\scriptscriptstyle\mathrm{ReMax}}(\pi_\theta,s,q)\\
&\; = \E[A_{[M]}\sim\pi_\theta]{\left(\max_{m\in[M]} q_{A_m}\right)\; \sum_{m=1}^{M} \nabla_\theta \log \pi_\theta(A_m\mid s)}.
\end{aligned}
\end{equation}
Unlike Eq.~\eqref{eq:rl-pg}, the expectation is over $M$ actions, so an unbiased estimator would require observing returns for all $M$ sampled actions, which is infeasible in episodic RL.
Moreover, $A_1,\dots,A_M$ are coupled through the $\max$ operator, so a single-action expectation does not follow directly.
We resolve this by introducing a baseline that decouples the max and enables a single-action expectation.

\paragraph{Policy gradient via expected improvement.}
Following \citep{tang2025optimizing}, for each $m$ in Eq.~\eqref{eq:reinforce-remax}, we can insert a baseline $b_m$ that may depend on $(s, A_{-m})$ but not on $A_m$:
\begin{equation}
\begin{aligned}
&\nabla_\theta J^M_{\scriptscriptstyle\mathrm{ReMax}}(\pi_\theta, s, q)
= \mathbb{E}_{A_{[M]}\sim\pi_\theta}\bigg[
\\
&~~~~\;\;\;
   \sum_{m=1}^M \nabla_\theta\log\pi_\theta(A_m\mid s)\left(\max_{j\in[M]}(q_{A_j}-b_m)\right)\bigg] ,
\end{aligned}
    \label{eq:q-based-remax-pg-baseline}
\end{equation}
which preserves unbiasedness of the policy gradient because $\E[A_m]{\nabla_\theta\log\pi_\theta(A_m\mid s) b_m} = 0$.
We choose $b_m$ as $W_{-m} := \max\{q_{A_1}, \dots, q_{A_{m-1}}, q_{A_{m+1}}, \dots, q_{A_{M}}\}$, having
\begin{equation*}
  \max_{j\in[M]}(q_{A_j}- W_{-m}) = (q_{A_m}-W_{-m})_+,
\end{equation*}
where $(x)_+ = \max(x, 0)$ for $x \in \R$.
Intuitively, $W_{-m}$ is the best value among the other $M{-}1$ sampled actions, so $(q_{A_m}-W_{-m})_+$ measures how much $A_m$ improves on that best alternative (clipped at zero).
This turns the $\max$ into an action-specific term plus an ``others'' term, enabling the following single-action form:

\begin{proposition}\label{prop:ei-based-pg}
Let $W_{M-1} := \max\{q_{A_1}, \dots, q_{A_{M-1}}\}$. Then, we have
\begin{equation}\label{eq:take_a_outside}
\begin{aligned}
&\nabla_\theta J^M_{\scriptscriptstyle\mathrm{ReMax}}(\theta, s, q)
= \mathbb{E}_{a\sim\pi_\theta}\bigg[
\\
&\;\; M\,\nabla_\theta \log \pi_\theta(a\mid s)\,
\textcolor{blue!60}{\E[A_{[M-1]}\sim\pi_\theta]{(q_{a}-W_{M-1})_+}}\bigg].
\end{aligned}
\end{equation}
\end{proposition}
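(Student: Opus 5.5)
I will start from the baseline-augmented gradient in Eq.~\eqref{eq:q-based-remax-pg-baseline} with the choice $b_m = W_{-m}$. Using the identity $\max_{j\in[M]}(q_{A_j}-W_{-m}) = (q_{A_m}-W_{-m})_+$ stated just before the proposition, this becomes
\begin{equation*}
\nabla_\theta J^M_{\scriptscriptstyle\mathrm{ReMax}}(\pi_\theta,s,q)
= \sum_{m=1}^M \E[A_{[M]}\sim\pi_\theta]{\nabla_\theta\log\pi_\theta(A_m\mid s)\,(q_{A_m}-W_{-m})_+}.
\end{equation*}
Before relying on this, I will quickly check that the baseline is legitimate. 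Since $W_{-m}$ depends only on $A_{-m}$, conditioning on $A_{-m}$ and using independence of the draws gives $\E[A_m\sim\pi_\theta]{\nabla_\theta\log\pi_\theta(A_m\mid s)}\,W_{-m} = W_{-m}\nabla_\theta\sum_a\pi_\theta(a\mid s)=0$. This is the only place where the baseline argument matters.

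Next I will treat each summand separately and use the i.i.d.\ structure of $A_{[M]}$. Fix $m$ and condition on $A_m=a$. The remaining $M-1$ actions $A_{-m}$ are i.i.d.\ from $\pi_\theta(\cdot\mid s)$ and independent of $A_m$. Hence $W_{-m}$ has the same law as $W_{M-1}=\max\{q_{A_1},\dots,q_{A_{M-1}}\}$ with $A_{[M-1]}\sim\pi_\theta$. By the tower property, the $m$-th summand equals
\begin{equation*}
\E[a\sim\pi_\theta]{\nabla_\theta\log\pi_\theta(a\mid s)\,\E[A_{[M-1]}\sim\pi_\theta]{(q_a-W_{M-1})_+}}.
\end{equation*}
This expression no longer depends on $m$, so by exchangeability the $M$ summands are identical. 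Summing them produces the factor $M$ in Eq.~\eqref{eq:take_a_outside}.

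Some bookkeeping remains. The case $M=1$ needs a convention: $W_0=-\infty$ would make the right side ill-defined, so I would instead note that for $M=1$ no baseline is used and the result reduces directly to Eq.~\eqref{eq:rl-pg}. Exchanging the finite sums, expectations, and gradients is harmless because the action space is finite and $q$ is fixed. The step needing the most care is the identity $\max_j(q_{A_j}-W_{-m})=(q_{A_m}-W_{-m})_+$. The term $j=m$ gives $q_{A_m}-W_{-m}$, while the terms $j\neq m$ are all $\le 0$, with the largest one equal to $0$. So the maximum is $\max(q_{A_m}-W_{-m},0)$, and ties cause no trouble. Once that identity is confirmed, the rest follows from conditioning and symmetry.
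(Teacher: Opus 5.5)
Your proposal is correct and follows the same route as the paper's proof. Both insert the leave-one-out baseline $b_m = W_{-m}$, check that it has zero mean via $\sum_i \nabla_\theta \pi_\theta(i)=0$, collapse $\max_j q_{A_j} - W_{-m}$ to $(q_{A_m}-W_{-m})_+$, and use the i.i.d.\ symmetry of the draws to get $M$ identical terms, each reduced to a single-action expectation by conditioning on the other $M-1$ draws; your remarks that the identity needs $M\ge 2$ and that $M=1$ needs a convention for $W_0$ are details the paper leaves implicit.
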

See App.~\ref{app:proof:ei-based-pg} for the proof.
The \textcolor{blue!60}{blue} term is the expected improvement of the $M$-th draw when that draw is fixed to action $a$: it compares $q_a$ to $W_{M-1}$, the max over the other $M{-}1$ draws.
Crucially, we take the expectation over the other $M{-}1$ actions $A_{[M-1]}\sim\pi_\theta$, which turns the coupled $\max$ into a scalar weight that depends only on $a$, $\pi_\theta$, and $q$.
This is the key step that yields a single-action expectation and makes the PG estimable from single-trajectory returns.
Since this quantity is central to our reformulated policy gradient, we refer to it as \textbf{Expected Improvement (EI)}, borrowing terminology from Bayesian optimization \citep{jones1998efficient}\footnote{In Bayesian optimization, EI is the expected gain over the current best; here it is the improvement of an action over the best of $M{-}1$ other draws, a related but distinct notion.}.
For a reference $R\in \R$, policy $\pi$, and Q-values $q\in\R^K$, define $\mathrm{EI}_M(R, \pi, q) := \E[A_{[M-1]}\sim\pi]{(R-W_{M-1})_+}$. Then, we have the EI-based PG.

\begin{definition}[EI-based PG]\label{def:ei-based-pg}
For fixed Q-values $q\in\R^K$, the EI-based PG is
\begin{equation}\label{eq:ei-based-pg}
\begin{aligned}
&\nabla_\theta J^M_{\scriptscriptstyle\mathrm{ReMax}}(\theta, s, q)\\
&\; = M\,\E[a\sim \pi_\theta]{\nabla_\theta \log \pi_\theta(a\mid s)\, \mathrm{EI}_M(q_{a}, \pi_\theta, q)}.
\end{aligned}
\end{equation}
\end{definition}

Taking the expectation over q values ($q \in \R^K$) will recover the PG for ReMax in Eq.~\eqref{eq:q-based-remax-rl-q}.
As desired, a single-trajectory return from $(s,a)$ then yields the estimator \mbox{$\hat{g}_{\scriptscriptstyle\mathrm{ReMax}} := M \nabla_\theta \log \pi_\theta(a\mid s)\, \mathrm{EI}_M(R, \pi_\theta, q)$} with $R=\cR(s,a)$.
While \citep{walder2025pass, tang2025optimizing} use related comparator baselines mainly for variance reduction in sampling-based estimators for retry-style objectives, our reformulation expresses the gradient in terms of the policy probabilities $\pi_\theta$ and is therefore estimable from single-trajectory returns.
In the following section, we show that EI also admits a closed-form computation.

\subsection{Efficient and Generalized Computation of EI}\label{subsec:generalized-ei}
We provide a closed-form computation of the EI by leveraging the sorting of Q-values as in Prop.~\ref{prop:efficient-exact-remax}.
\begin{proposition}[Closed-form computation of EI]\label{prop:ei-expected-min}
Let $q\in\R^K$ be Q-values at a state, $\pi\in\Delta^{K-1}$ a policy, $R\in\R$ a reference, and $M\in\mathbb{N}$.
Define $v_i:=(R-q_i)_+$ and sort $q$ as $q_{(1)}\ge\cdots\ge q_{(K)}$, breaking ties arbitrarily, with aligned masses $\pi_{(j)}$.
Define $C_0 := 0$ and $C_j := \sum_{u=1}^j \pi_{(u)}$, $j=1,\ldots,K$.
Then, we have
\begin{equation}\label{eq:def-EI-telescope}
\mathrm{EI}_M(R;\pi,q) = v_{(1)} + \sum_{j=1}^{K-1}\big(v_{(j+1)}-v_{(j)}\big)\,(1-C_j)^{M-1}.
\end{equation}
\end{proposition}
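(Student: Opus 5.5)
The plan is to reduce the statement to Prop.~\ref{prop:efficient-exact-remax}. The key observation is that for fixed $R$ the map $x\mapsto (R-x)_+$ is non-increasing. Hence the largest of the $M-1$ drawn values $q_{A_m}$ gives the smallest $v$:
\[
(R-W_{M-1})_+ \;=\; \min_{m\in[M-1]} (R-q_{A_m})_+ \;=\; \min_{m\in[M-1]} v_{A_m}.
\]
So $\mathrm{EI}_M(R;\pi,q)=\E[A_{[M-1]}\sim\pi]{\min_{m\in[M-1]} v_{A_m}}$, which is the expected minimum of $M-1$ i.i.d.\ draws of the vector $v$ under $\pi$. Equivalently, it is $-\E{\max_m (-v_{A_m})}$.

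Next, I will check the ordering. Sorting $q$ decreasingly as $q_{(1)}\ge\cdots\ge q_{(K)}$ gives $v_{(1)}\le\cdots\le v_{(K)}$, because $v$ is a monotone non-increasing function of $q$. Ties in $q$ produce ties in $v$, and extra ties in $v$ arising from the clipping at $0$ do not matter. Thus the sorted order for $q$ is also a valid increasing order for $v$, and equivalently a decreasing order for $-v$, with the same aligned masses $\pi_{(j)}$ and partial sums $C_j$.

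Then I apply Prop.~\ref{prop:efficient-exact-remax} to the vector $-v$ with $M-1$ draws, using this ordering. It yields
\[
\E{\max_m(-v_{A_m})} = -v_{(1)} + \sum_{j=1}^{K-1}\bigl(-v_{(j+1)}+v_{(j)}\bigr)(1-C_j)^{M-1}.
\]
Negating gives exactly Eq.~\eqref{eq:def-EI-telescope}. As a self-contained alternative, I could redo the layer-cake argument directly. Write the minimum as $v_{(1)}+\sum_j (v_{(j+1)}-v_{(j)})\mathbf 1\{\text{all draws lie in positions } >j\}$, and note that this event has probability $(1-C_j)^{M-1}$ by independence.

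The main obstacle is the degenerate case $M=1$, where $W_0$ is the maximum over an empty set. I will adopt the convention $W_0=-\infty$, so that $\mathrm{EI}_1=(R-W_0)_+$ is not the intended quantity. Instead I will interpret the empty minimum of $v$ as $\max_i v_i$, or simply treat $M\ge 2$. With $(1-C_j)^0=1$, the formula telescopes to $v_{(K)}$, consistent with the min-of-$v$ reading. I will also verify that arbitrary tie-breaking is harmless, since tied positions contribute zero differences to the sum.
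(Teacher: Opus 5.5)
Your proposal is correct and essentially matches the paper's proof. Both arguments rewrite $(R-W_{M-1})_+$ as $\min_{m\in[M-1]} v_{A_m}$, use the decreasing $q$-order to make $v$ nondecreasing, and then reuse the best-rank/telescoping computation of Prop.~\ref{prop:efficient-exact-remax} with $M-1$ draws; you apply that proposition as a black box to $-v$, while the paper re-derives the rank probabilities $(1-C_{j-1})^{M-1}-(1-C_j)^{M-1}$ directly for $v$.

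Your treatment of $M=1$ covers a degenerate case the paper leaves implicit. Your reading is the natural one, since it amounts to the convention $W_0:=\min_i q_i$ and reproduces the value $v_{(K)}$ that the formula yields.
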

See App.~\ref{app:proof:ei-expected-min} for the proof.
This also costs $\mathcal{O}(K\log K)$ time.
Although ReMax is motivated by an integer number of draws $M$, Eq.~(10) naturally extends to real $m>0$ by replacing $(1-C_j)^{M-1}$ with $(1-C_j)^{m-1}$.
For $m<1$, this finite-valued extension requires $C_j<1$ for all $j<K$, which holds for full-support policies; in our implementation, we additionally clip $1-C_j$ from below for numerical stability as in App.~\ref{app:reppo}.
We therefore define the \textbf{generalized EI}, $\mathrm{EI}_m(R;\pi,q)$, by substituting $m$ for $M$, enabling finer control of the exploration--exploitation trade-off.
The closed form of ReMax with fixed Q-values (Eq.~\ref{eq:efficient-exact-remax}) is also valid for any real $m>0$.
\begin{algorithm}[t]
  \caption{RePPO}
  \label{alg:reppo-5line}
  \begin{algorithmic}[1]
    \REPEAT
      \STATE Collect trajectories under $\pi_\theta$ and compute returns $R^\lambda_t$.
      \STATE For each $(s_t,a_t)$: form $q\!\leftarrow\!Q_\phi(s_t,\cdot)$; compute $R_{+}(t)\!:=\!\mathrm{EI}_{m}(R^\lambda_t;\pi_\theta,q)$, $Q_{+}(s_t,a)\!=\!\mathrm{EI}_{m}(Q_\phi(s_t,a);\pi_\theta,q)$, and advantage $A_+(t)\!=\!R_{+}(t)\!-\!b_+(s_t)$.
      \STATE Update actor by PPO objective (Eq.~\eqref{eq:reppo-returns}) with $A_+(t)$; update critic $Q_\phi$ toward $R^\lambda_t$.
    \UNTIL convergence
  \end{algorithmic}
\end{algorithm}

\begin{figure*}[t]
  \centering
  \begin{subfigure}[t]{0.62\linewidth}
    \centering
    \includegraphics[width=\linewidth]{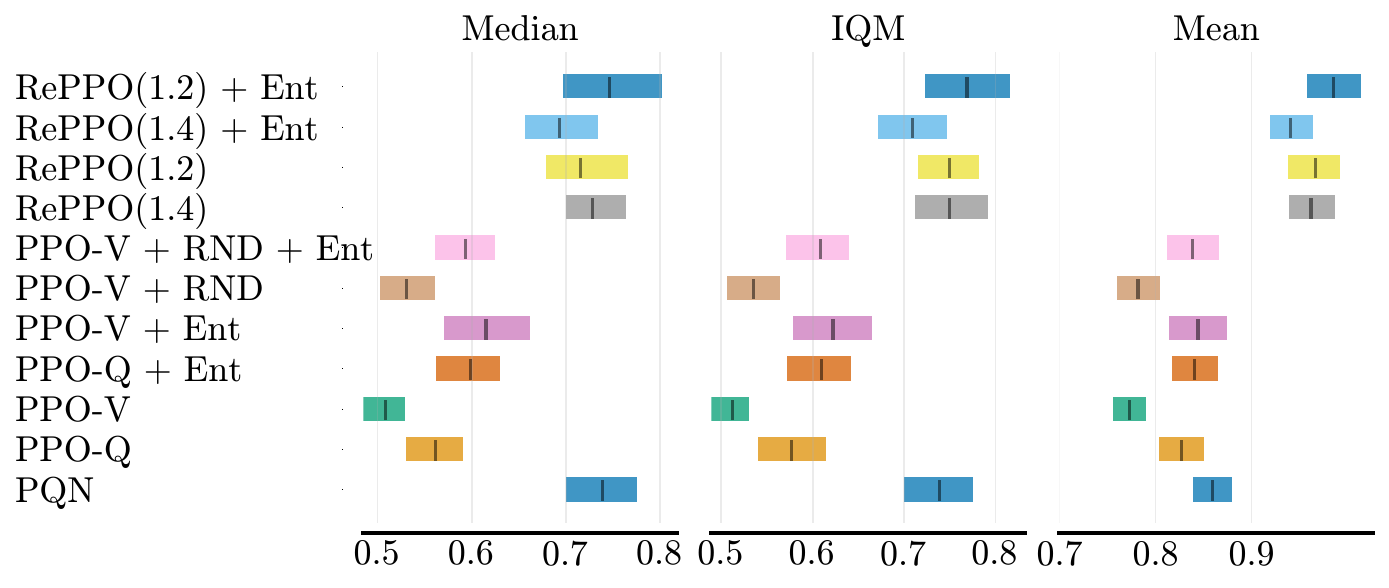}
  \end{subfigure}\hfill
  \begin{subfigure}[t]{0.32\linewidth}
    \centering
    \includegraphics[width=\linewidth]{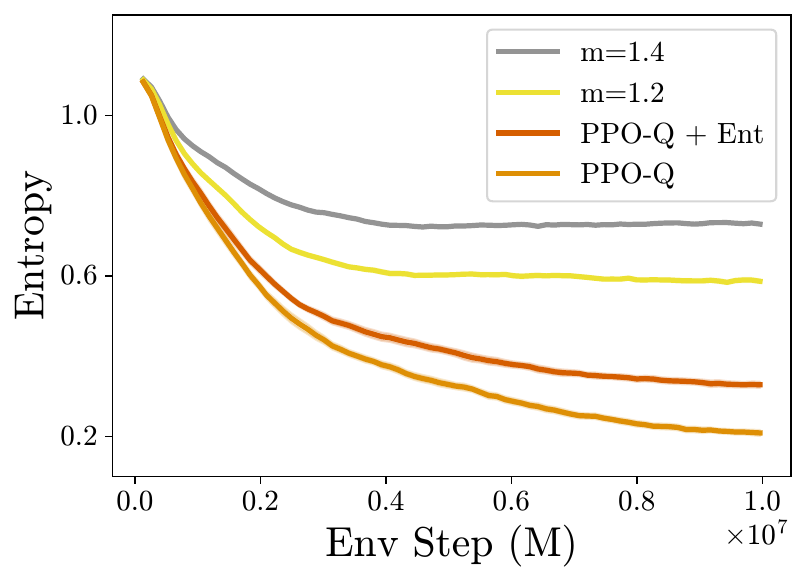}
  \end{subfigure}
  \caption{
    MinAtar results.
    \textbf{Left:} normalized scores aggregated with median, IQM, and mean across four games; boxes denote RLiable summaries over 10 seeds.
    RePPO, without entropy bonus, outperforms PPO-V, PPO-Q with entropy, and PPO-V + RND.
    \textbf{Right:} policy entropy during Breakout training.
    We observe that RePPO keeps high entropy without entropy bonus, indicating the promoted exploration.
  }
  \label{fig:results}
\end{figure*}

\begin{figure*}[t]
  \centering
  \begin{subfigure}[t]{0.55\linewidth}
    \centering
    \includegraphics[width=\linewidth]{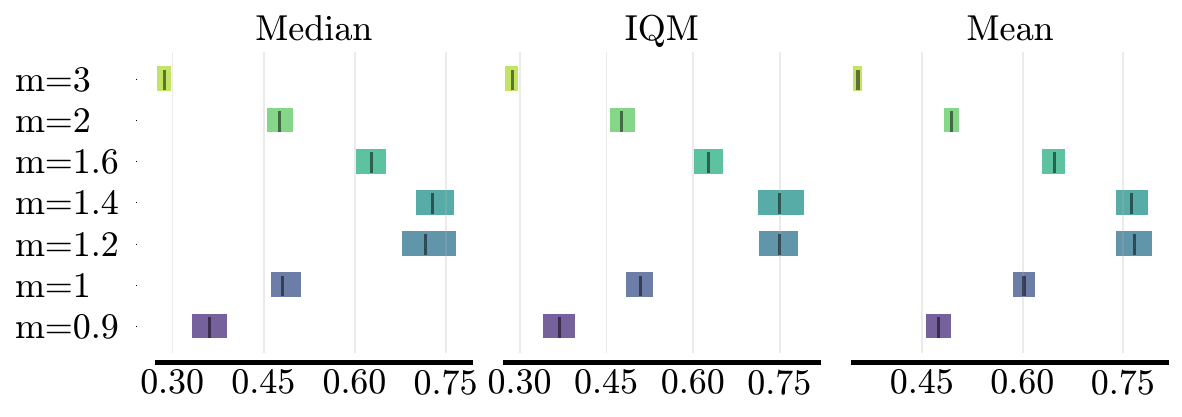}
  \end{subfigure}\hfill
  \begin{subfigure}[t]{0.40\linewidth}
    \centering
    \includegraphics[width=\linewidth]{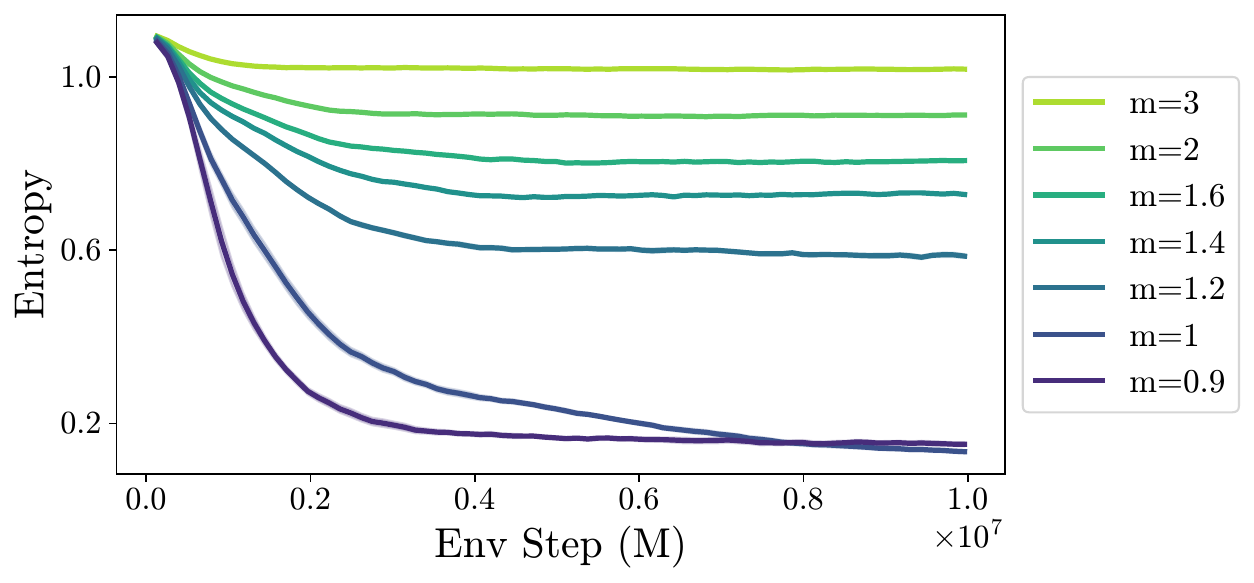}
  \end{subfigure}
  \caption{
    MinAtar results. \textbf{Effect of retry parameter \(m\)}.
    \textbf{Left:} median, IQM, and mean of normalized evaluation return across all games.
    The best performance occurred around \(m\in[1.2,1.4]\).
    \textbf{Right:} policy entropy on Breakout.
    Larger \(m\) slowed entropy decay and encouraged exploration, while smaller \(m\) led to faster entropy decay.
  }
  \label{fig:entropy_ablation_results}
\end{figure*}

\subsection{RePPO: Practical Policy Gradient for ReMax}\label{subsec:reppo}
We obtain a policy gradient that can be estimated directly from trajectory returns (Def.~\ref{def:ei-based-pg}) and computed in closed form (Eq.~\eqref{eq:def-EI-telescope}).
Incorporating this into PPO \citep{schulman2017proximal} leads to our new algorithm, \textbf{Re}Max \textbf{PPO} (RePPO).
We begin by revisiting the PPO surrogate.

\paragraph{PPO surrogate.}
PPO collects trajectories under $\pi_{\mathrm{old}}$, computes $\lambda$-returns $R^\lambda_t$, and forms advantages $A(t)=R^\lambda_t - V_\phi(s_t)$.
It then optimizes a clipped, importance-weighted surrogate with $r_\theta(t)=\pi_\theta(a_t\mid s_t) / \pi_{\mathrm{old}}(a_t\mid s_t)$ and clip $\epsilon>0$, yielding:
\begin{equation}
\begin{aligned}
&\mathcal{L}_{\scriptscriptstyle\text{PPO}}(\theta)\\
&\; :=\E{\min{\left(r_\theta(t)A(t),\ \mathrm{clip}\big(r_\theta(t),1{-}\epsilon,1{+}\epsilon\big)A(t)\right)}}.
\end{aligned}
\label{eq:ppo-returns}
\end{equation}

\paragraph{RePPO surrogate.}
RePPO modifies PPO in two ways: (1) use a Q-critic $Q_\phi$; (2) use an EI-based advantage.
Given $R^\lambda_t$, define $R_{+}(t):=\mathrm{EI}_m(R^\lambda_t,\pi_\theta, Q_\phi(s_t, \cdot))$ and $Q_{+}(s_t,a):=\mathrm{EI}_m(Q_\phi(s_t,a),\pi_\theta,q)$ with $q=Q_\phi(s_t,\cdot)$.
Set $b_+(s_t):=\E[a\sim\pi_\theta]{Q_{+}(s_t,a)}$ and $A_+(t):=R_{+}(t)-b_+(s_t)$, so the surrogate objective for RePPO becomes
\begin{equation}
\begin{aligned}
&\mathcal{L}_{\scriptscriptstyle\text{RePPO}}(\theta)\\
&\; :=\E{\min{\left(r_\theta(t)A_+(t),\ \mathrm{clip}\big(r_\theta(t),1{-}\epsilon,1{+}\epsilon\big)A_+(t)\right)}}.
\end{aligned}
\label{eq:reppo-returns}
\end{equation}
The algorithm is summarized in Alg.~\ref{alg:reppo-5line}.
Since RePPO differs from PPO solely through the use of a Q-critic and an EI-based advantage, its implementation is simple and the extra computation is minimal (see App.~\ref{app:minatar_experiment:additional_results}).
We provide the advantage computation code in App.~\ref{app:reppo}.

\textbf{Q-replacement for efficient exploration.}
When we compute the EI for the trajectory return, $R_{+}(s,a):=\mathrm{EI}_m(R(s,a);\pi_\theta,Q_\phi(s,\cdot))$, we expect $v_{a} = (R(s,a)-Q_\phi(s,a))_+$ to be zero in Eq.~\eqref{eq:def-EI-telescope}, since it measures an action's improvement over itself.
In practice, an underspecified critic may yield $R(s,a)>Q_\phi(s,a)$, overestimating $R_{+}$ and causing the policy to overfit to $a$, harming exploration.
To mitigate this, we \emph{replace} the $a$-th element of $q=Q_\phi(s,\cdot)$ with $\cR(s,a)$ when evaluating EI.
This enforces $v_a=0$ for the sampled action by construction and reduces spurious ``self-improvement'' caused by critic underestimation.

\section{Experiments}
\label{sec:exp}
To confirm the emergence of exploration in RePPO, we used three benchmark environments: \emph{MinAtar} \citep{young2019minatar}, \emph{Atari} \citep{bellemare2013arcade}, and \emph{Craftax} \citep{matthews2024craftax} (an open-ended, long-horizon exploration benchmark).
MinAtar is a simplified version of Atari 2600 games providing Breakout, Asterix, Freeway, and Space Invaders; we used the \texttt{pgx} implementation \citep{koyamada2023pgx} for efficient vectorized simulation.
For Atari, we used 10 games from Bellemare's hard-exploration problems \citep{bellemare2016unifying} to verify exploration benefits (App.~\ref{app:atari_experiment}).
To further validate scalability, we used Craftax \citep{matthews2024craftax}, a vectorizable version of Crafter \citep{hafner2021benchmarking}, an open-ended RL environment.

\subsection{MinAtar}
Here, we evaluate RePPO on the MinAtar benchmark to demonstrate its effectiveness in promoting exploration.
We also analyze the impact of key components, specifically the continuous retry parameter $m$ and the Q-replacement strategy, on the agent's performance and behavior.

\textbf{Baselines and hyperparameters.}
We compared \textbf{RePPO} to \textbf{PPO-V} (state-value critic \citep{schulman2017proximal}), \textbf{PPO-Q} (Q-critic), \textbf{PPO-V + RND} (Random Network Distillation \citep{burda2018exploration}), and \textbf{PQN} \citep{gallici2024simplifying}, a strong Q-learning baseline.
PPO baselines followed the default \texttt{pgx} settings (based on PureJaxRL \citep{lu2022discovered}), and PQN followed the official implementations.
We report runs with and without entropy regularization (\texttt{+Ent} indicates it was enabled).
Main results used RePPO with $m\in\{1.2,1.4\}$; for speed comparisons vs.\ PPO-V and PPO-Q, see App.~\ref{app:minatar_experiment}.
For RePPO, we tuned \(m\) and set \(\lambda=0.8\) for the \(\lambda\)-return, keeping all other hyperparameters identical to PPO-V.
PQN’s official setup used 128 parallel environments; we used 1024 and adjusted the number of environments, minibatch size, and update epochs (tuning \(\lambda\) and the learning rate) to match the number of gradient updates.
Additional comparisons with the unmodified PQN hyperparameters (yielding $\sim$5$\times$ more updates) are provided in App.~\ref{app:minatar_experiment:additional_results}.
Full hyperparameters are listed in App.~\ref{app:minatar_experiment:hyperparameters}.

\textbf{Training and evaluation.}
Agents trained for \(10\)M environment steps with \(10\) seeds; evaluation averaged \(100\) episodes/seed.
For PPO variants, during evaluation, the action was selected by argmax over policy logits; during training, we sampled from the policy.
We reported \emph{normalized scores} across games (median, IQM, mean) via RLiable \citep{agarwal2021deep}, normalized by the best returns across all methods; per-game scores are in App.~\ref{app:minatar_experiment:additional_results}.

\parag{Main results.}
\Cref{fig:results} (Left) aggregates median, IQM, and mean across the four games.
Among PPO variants, \textbf{RePPO} with \(m\in[1.2,1.4]\) performed best overall, even compared to entropy or RND bonuses.
RePPO was comparable to PQN on median and IQM, and it consistently outperformed PQN on mean, indicating better tail performance.
In \Cref{fig:results} (Right), RePPO \emph{without} an entropy bonus maintained higher policy entropy than PPO \emph{with} an entropy bonus.
Finally, PPO-V + RND performed poorly when entropy was disabled, highlighting RND's reliance on entropy bonuses and that RePPO promotes exploration without bonuses.

\textbf{Effect of \(m\): exploration--exploitation trade-off.}
We tested \(m\in\{0.9,1.0,1.2,1.4,1.6,2,3\}\) and reported median, IQM, and mean of normalized scores and policy entropy on Breakout (\Cref{fig:entropy_ablation_results}).
The sweep showed that increasing \(m\) systematically slowed entropy decay, with returns peaking near \(m\approx1.2\text{–}1.4\) and falling when \(m\) was larger or smaller, showing that the continuous parameter \(m\) enabled more precise control of the exploration.

\begin{figure}[t]
  \centering
  \includegraphics[width=0.80\linewidth]{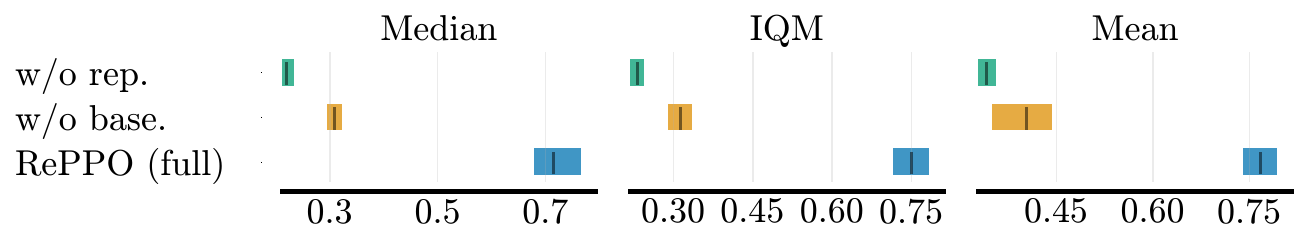}
  \caption{
    MinAtar results. Median (all games).
    The results show that either removing the action-independent baseline or the Q-replacement strategy substantially degrades performance.
  }
  \label{fig:baseline_qreplace_ablation_results}
\end{figure}

\textbf{Ablation on baseline and Q-replacement.}
We further evaluated two key components of RePPO: the action-independent baseline and the Q-replacement strategy.
Using \(m=1.2\) without an entropy bonus, we compared (i) \emph{w/o base} (no baseline, Q-replacement enabled), (ii) \emph{w/o rep} (baseline enabled, no Q-replacement), and (iii) \emph{RePPO (full)} with both.
Removing either component substantially degraded performance (\Cref{fig:baseline_qreplace_ablation_results}), indicating that both are necessary to realize the full benefits of RePPO (App.~\ref{app:minatar_experiment:additional_results}).
\looseness=-1

\subsection{Craftax}
Craftax \citep{matthews2024craftax} is an open-ended RL environment built on Crafter \citep{hafner2021benchmarking} and NetHack \citep{kuttler2020nethack}, in which the agent must both plan over long horizons and continually adapt to newly revealed parts of the environment.
We use the symbolic version of Craftax.

\parag{Setup.}
We compared RePPO ($m\in\{1.2,1.4\}$, no entropy bonus) against PPO-V, PPO-Q, and RND, each with/without an entropy bonus (coef.\ $0.01$).
All methods used the official Craftax implementation and hyperparameters\footnote{\url{https://github.com/MichaelTMatthews/Craftax_Baselines}}; RePPO changed only method-specific settings.
Following \citet{matthews2024craftax}, agents were trained for 1B timesteps with 5 seeds and evaluated for 100 episodes per seed; we reported mean and standard deviation across seeds as \% of the max reward~(226).
Hyperparameters are in App.~\ref{app:craftax_experiment}.

\begin{table}[t]
  \centering
  \caption{
    Craftax results. Mean and std of the \% of the max reward~(226) over 5 seeds.
    RePPO (1.2) is \textbf{bolded} and is comparable to the PPO with entropy and RND bonuses (\underline{underlined}).
  }
  \label{tab:craftax_results}
  \begin{tabular}{lc}
    \toprule
    \textbf{Method} & \textbf{\% of max (std)}\\
    \midrule
    PPO-V          & $9.31 (1.00)$ \\
    \underline{PPO-V + Ent}    & $11.66 (0.33)$ \\
    RND    & $9.62 (1.54)$ \\
    \underline{RND + Ent} & $11.68 (0.30)$ \\
    PPO-Q          & $9.31 (0.93)$ \\
    \underline{PPO-Q + Ent}    & $11.85 (0.26)$ \\
    \textbf{RePPO (1.2)}    & $\mathbf{11.87 (0.16)}$ \\
    RePPO (1.4)    & $10.79 (0.19)$ \\
    \bottomrule
  \end{tabular}
  \looseness=-1
\end{table}

\parag{Results.} \Cref{tab:craftax_results} shows that RePPO (1.2), without entropy or intrinsic bonuses, was competitive with PPO/RND variants that use such bonuses and outperformed PPO without them.
App.~\ref{app:craftax_experiment:additional_results} shows RePPO maintained higher entropy even without bonuses (RND degraded without entropy), supporting that RePPO promotes exploration at larger scale.

\section{Concluding Remarks}
We conclude with the discussion and summary of the paper.

\subsection{Scope and Future Work.}
Our study focuses on discrete-action problems with a relatively small action space, in which the full action-value vector is available and the ReMax gradient can be computed efficiently by sorting the action values.
Although we empirically observed that ReMax achieves sublinear regret, we did not provide a corresponding theoretical analysis.
Our follow-up work has extended ReMax in several directions, including continuous action spaces \citep{nishimori2026retry}, LLM post-training \citep{takashiro2026maxk}, and theoretical regret guarantees \citep{tong2026finite}.
For complete descriptions, we refer the reader to App.~\ref{app:related_work:retry-based-objectives}.

\paragraph{Stochastic and deep exploration.}
One can distinguish between stochastic exploration and deep exploration \citep{gupta2018meta, osband2016deep}.
The former injects randomness into action selection to diversify data collection, as in entropy-regularized methods~\citep{haarnoja2018sac}.
The latter aims at temporally coherent information gathering, for example through visitation counts~\citep{ostrovski2017count} or posterior sampling~\citep{osband2016deep}, based on the mechanisms behind bandit algorithms with sublinear regret, such as UCB~\citep{auer2002finite} and Thompson sampling~\citep{thompson1933likelihood}.
In principle, both forms of exploration can emerge under ReMax.
In the deterministic bandit example (Sec.~\ref{sec:bandit-basic}), ReMax encourages the stochastic policy by slowing down the convergence, while in posterior bandits it leverages return uncertainty and yields sublinear regret bounds \citep{tong2026finite}.
In our deep RL experiments, however, the observed behavior was more consistent with stochastic exploration, as indicated by increased policy entropy.
We conjecture that more structured exploration may also have contributed to RePPO's performance gains, but confirming this requires more targeted empirical analysis.
A promising direction is therefore to instantiate posterior-based ReMax by modeling epistemic uncertainty in Q-values explicitly, for example with ensembles or Bayesian critics.

\subsection{Summary.}
We introduced \textbf{ReMax}, a novel RL objective that encourages exploration by \emph{directly} maximizing reward, without auxiliary bonus terms.
We demonstrated its effectiveness as an exploration mechanism in bandit settings and extended the formulation to RL.
To optimize ReMax efficiently, we derived a new policy–gradient expression and relaxed the retry count to a continuous parameter \(m\), enabling fine-grained control over the exploration–exploitation trade-off.
We instantiated these ideas in \textbf{RePPO}, a PPO-style deep actor–critic method with a Q-critic, and showed on MinAtar that optimizing ReMax induces exploratory behavior and improves performance without entropy bonuses.

\section*{Impact Statement}
This work proposes a novel objective for exploration in RL.
Its main potential impact is to improve exploration in domains where repeated attempts or diverse trials are important, including reasoning tasks with LLMs.
Unlike exploration methods based on explicit bonuses or posterior sampling, ReMax does not require a separately designed exploration bonus or an explicit posterior model, but rather is optimized using return samples.
This may broaden the applicability of RL algorithms to settings where reliable bonus design or posterior modeling is difficult.

\section*{Author Contributions}
The project started in ~2021 when Paavo Parmas proposed the idea to Sotetsu Koyamada who initially worked on it based on Paavo’s guidance. Paavo moved to UTokyo and the project restarted in August 2024 when Soichiro Nishimori started working on it as his internship project under the guidance of Paavo Parmas. The contributions in the final manuscript are as follows:

Soichiro Nishimori: Lead writer, key implementations (the RePPO implementation and all experiments concerning it, as well as other final implementations) and experimentation, figures, discussion and literature search, proposal and experimentation with some early versions of continuous $m$ (not included in the final paper), and proposal of KL-extension to our earlier AC style approaches and its experiment (also not included in the final paper).

Paavo Parmas: Conceptualized the idea, did all of the key mathematical derivations (derivation of the gradient estimator, the expected improvement formulation, proposal of the final version of continuous $m$, came up with the bandit algorithm, proposed RePPO, hypothesized the adaptive properties of ReMax, e.g., the ones in Figure 1), example implementations (first REINFORCE implementation for ReMax, first ReMax bandit implementations, some debugging), significant comments and editing on the paper.

Sotetsu Koyamada: Initial development and implementation on the project. This led to a resetting based ReMax implementation similar to the A2C algorithm with experiments on MinAtar and maze domains. Wrote an early preprint on the project based on guidance mainly from Paavo.

Tadashi Kozuno was involved in some early discussion and guidance of Sotetsu.
Toshinori Kitamura provided comments on the paper.
Shin Ishii is the PI of the lab at Kyoto University where Paavo and Sotetsu belonged to when the project started. General supervision of Sotetsu at the time.
Yutaka Matsuo is the PI of the UTokyo lab where Paavo belongs to. Funding acquisition and general management and supervision at the lab.

\section*{Acknowledgment}
This work was supported by JSPS KAKENHI Grant Number JP22H04998. SN was supported by JSPS KAKENHI Grant Number JP24KJ0818.

\newpage
\bibliography{ref}
\bibliographystyle{icml2026}

\newpage
\appendix
\onecolumn
\makeatletter
\def\addcontentsline#1#2#3{%
  \addtocontents{#1}{\protect\contentsline{#2}{#3}{\thepage}{\@currentHref}}%
}
\makeatother
\addtocontents{toc}{\protect\setcounter{tocdepth}{2}}
\renewcommand{\contentsname}{Appendix Contents}
\tableofcontents
\newpage

\paragraph{Reproducibility Statement.}
We strive to make our results easy to reproduce. The ReMax objective, its closed‑form components, and the policy‑gradient estimator are specified in Secs.~\ref{sec:remax-rl} and \ref{sec:remax-pg}, with all assumptions and complete proofs in App.~\ref{sec:proofs}.
The exact advantage computation used in our implementation is provided in App.~\ref{app:reppo}.
The MinAtar setup, environments, evaluation protocol, normalization, and all hyperparameters for every method appear in Sec.~\ref{sec:exp} and App.~\ref{app:minatar_experiment} (including Tables and additional analyses).
We report results over 10 random seeds and 100 evaluation episodes per seed and summarize performance with RLiable aggregates; per‑game curves and ablations are in App.~\ref{app:minatar_experiment:additional_results}.
Complete details for the bandit experiments (problem setups, optimization procedure, and baselines) are in App.~\ref{app:bandit}.
Official code is available at \url{https://github.com/nissymori/remax-rl}.

\section{Extended Related Work}\label{app:related_work}

This section provides a brief overview of exploration in RL.

\subsection{Exploration in RL}

\paragraph{Optimism in the Face of Uncertainty (OFU).}
A prominent exploration strategy is OFU.
Methods based on OFU mainly fall into two categories:
confidence-based \citep{strehl2005theoretical,jaksch2010nearOptimal,dann2015sampleComplexity}
and bonus-based \citep{strehl2006pac,strehl2008analysis,azar2017minimax,jin2018isQlearning}.
While these approaches enjoy strong theoretical guarantees,
they do not directly extend to deep RL, where explicit visitation counts are impractical to maintain.
\citet{bellemare2016unifying} generalize visitation counts to enable OFU-style exploration in deep RL.
There are several follow-ups that explore the count-based approach to deep RL, including \citet{lobel2023flipping,ostrovski2017count}.
In contrast to OFU, \proposed{} does not employ an explicit exploration mechanism; instead, exploration emerges by maximizing a multiple‑retry objective defined over Q‑values.

\paragraph{Intrinsic Motivation (IM).}
A complementary line of work that scales well with deep RL is intrinsic motivation (IM), which is broadly categorized into prediction-error-based, information-gain-based, and novelty-based methods.
Prediction-error-based methods learn a dynamics model and encourage visiting states (or state–action pairs) whose successor states are hard to predict \citep{stadie2015incentivizing,pathak2017curiosity}; the idea dates back to \citet{schmidhuber1991possibility} and \citet{thrun91onplanning}.
However, they can overemphasize inherently noisy states \citep{schmidhuber1991curious}.
Information-gain-based methods instead seek states that reduce model uncertainty \citep{schmidhuber2010formal,sun2011planning,houthooft2016vime,sukhija2024maxinforl}.
Novelty-based methods directly incentivize visiting "novel" states (or state–action pairs).
Notions of novelty include pseudo-counts \citep{bellemare2016unifying,lobel2023flipping,taiga2021bonus,ostrovski2017count},
the estimated probability of a state appearing in a replay buffer \citep{fu2017ex2},
reachability-based metrics \citep{savinov2018episodic},
and intra-episode state diversity \citep{badia2020never}.
\citet{tang2017exploration} discretize the state space with hashing to obtain counts.
These methods typically require estimating auxiliary density or dynamics models (e.g., transition models or visitation frequencies).
By contrast, our method does not introduce any additional estimation targets.
Two notable exceptions that avoid explicit density estimation are RND \citep{burda2018exploration} and E-values \citep{fox2018dora}.
Combining such novelty detectors to flag unexplored states and to intensify the use of \proposed{} for exploration is an interesting future direction.

\paragraph{Entropy Maximization.}
Entropy-based exploration is widely used with policy-gradient and actor–critic methods \citep{williams1991function,mnih2016asynchronous,espeholt2018impala,levine2018reinforcement,eysenbach2021maximum}.
SAC is a popular example for continuous control \citep{haarnoja2018sac}.
While these approaches increase \emph{action} entropy, increasing \emph{state} entropy may align better with exploration goals \citep{pitis2020maximumEntropy,mutti2021taskAgnostic}, since action entropy alone promotes undirected exploration.
\citet{baram2021actionRedundancy} propose maximizing transition entropy (the entropy of the next-state distribution given the current state) as a proxy for state entropy.
Recent work explored beyond entropy regularization, such as f-divergence regularization \citep{labbibeyond}.
In contrast, with \proposed{}, as shown in Sec.~\ref{sec:bandit}, directed (uncertainty-aware) exploration emerges from maximizing the retry objective.

\paragraph{Posterior Sampling.}
Another family draws inspiration from Thompson sampling \citep{thompson1933likelihood,honda2014optimality,agrawal2017near}.
Bootstrapped DQN \citep{osband2016deep} samples a Q-function from an ensemble to emulate Thompson sampling, later enhanced with randomized priors and value functions \citep{osband2018randomized,osband2019deep, ishfaq2021randomized}.
Other works use Bayesian neural networks to model the Q-function posterior \citep{azizzadenesheli2018efficient,bayrooti2024efficient,osband2023approximate} or adopt model-based posterior sampling \citep{sasso2023posterior}.
Some works use Langevin Monte Carlo and efficient approximate sampling schemes to realize Thompson-style exploration in deep RL \citep{ishfaq2023provable,ishfaq2025langevin}.
Integrating \proposed{} with such methods to \emph{explicitly} model the Q-value distribution is a promising avenue.

\subsection{Retry-based Objectives}
\label{app:related_work:retry-based-objectives}

The retry-based objective ReMax was first introduced in a tree-based form in an earlier preprint of this work \citep{anonymous2022remax}.
We represent a trajectory tree $\mathcal{T}$ by its $K$ indexed root-to-leaf trajectories, $\mathcal{T}=(\tau^{(1)},\ldots,\tau^{(K)})$, where $K$ is the number of leaves and $\tau^{(k)}=(s_0,a_0^{(k)},s_1^{(k)},\ldots,a_{T_k-1}^{(k)},s_{T_k}^{(k)})$ is the trajectory from the root to the $k$-th leaf.
Here, $T_k$ is the length of the trajectory, and $s_t^{(k)}$ and $a_t^{(k)}$ denote the state and action, respectively, at depth $t$ along the $k$-th path.
The root-to-leaf trajectories share prefixes and branch into different continuations.
More precisely, for any two distinct trajectories $\tau^{(i)}$ and $\tau^{(j)}$, let $b_{ij}$ denote the depth of their first branching point.
They satisfy $s_t^{(i)}=s_t^{(j)}$ and $a_t^{(i)}=a_t^{(j)}$ for all $t<b_{ij}$, and share the branching state $s_{b_{ij}}^{(i)}=s_{b_{ij}}^{(j)}$, but take different outgoing transitions thereafter.
Thus, a trajectory tree represents multiple possible continuations from states along shared trajectory prefixes.

Let $R(\tau^{(k)}):=\sum_{t=0}^{T_k-1}r(s_t^{(k)},a_t^{(k)})$ denote the return of the $k$-th root-to-leaf trajectory, and let $\mathfrak{T}$ denote the space of finite trajectory trees.
Given a tree-level aggregation functional $G:\mathfrak{T}\to\mathbb{R}$, we define the return of a trajectory tree as
\begin{equation}
    R_G(\mathcal{T})
    :=
    G(\mathcal{T}).
\end{equation}

Let $\rho_\pi$ denote the distribution over trajectory trees induced by policy $\pi$.
The corresponding tree-based RL objective is
\begin{equation}
    J_G(\pi)
    :=
    \mathbb{E}_{\mathcal{T}\sim\rho_\pi}
    \left[
        R_G(\mathcal{T})
    \right].
\end{equation}
\citet{anonymous2022remax} used the maximum leaf-return aggregator $G_{\max}(\mathcal{T}):=\max_{k\in[K]}R(\tau^{(k)})$, yielding the original tree-based ReMax objective.
The tree-based formulation, however, is not restricted to this choice: replacing $G_{\max}$ with another tree-level aggregation functional defines a broader family of objectives, which we refer to as \textbf{retry RL}.

To describe how trajectory trees are generated, \citet{anonymous2022remax} introduced a \textit{resettable MDP}, in which the agent can take a \textit{reset} action to return to a previously visited state.
They realized this resetting mechanism using a resettable simulator and proposed to optimize the resulting tree-based objective with a REINFORCE-style estimator.
In the present work, we instead develop a Q-function-based formulation that represents alternative continuations through action values, avoiding the costly explicit construction of trajectory trees.
In this subsection, we review subsequent work on retry-based objectives and discuss follow-up studies that extend the ReMax framework.

\paragraph{Pass@K/Max@K policy optimization.}
Pass@K was introduced as an evaluation metric for code generation and was subsequently adopted for LLM reasoning, measuring whether at least one of $K$ sampled outputs is correct \citep{chen2021evaluating}.
Beyond evaluation, recent work directly optimizes pass@K and related objectives.
Such objectives arise naturally in LLM RL, where each prompt can be viewed as a state and multiple candidate answers can be sampled as actions.
\citet{tang2025optimizing} analyze $K$-sample objectives, including pass@K and majority voting, characterize their bias--variance trade-offs and KL efficiency, and propose leave-one-out control variates.
\citet{walder2025pass} and \citet{bagirov2025bonworlds} extend this framework to continuous rewards through max@K and derive unbiased estimators based on reward transformations computed from mini-batches of sampled rewards or returns.
\citet{chen2025pass} propose an RLVR method that uses pass@K as its training signal and employs bootstrap grouping and analytic advantages to reduce rollout costs.
\citet{takashiro2026maxk} propose a max@K advantage baseline that reduces the variance of the policy-gradient estimator.
Collectively, these studies demonstrate the potential of retry-based training to improve exploration and robustness in reasoning tasks.
In particular, max@K is a special case of ReMax when the reward vector is fixed and deterministic.

\paragraph{Our follow-up studies}
Several follow-up studies have extended the ReMax objective in complementary directions.\footnote{For completeness, we include our follow-up studies that were made publicly available on arXiv after the acceptance of this work at ICML 2026 but before the preparation of the final version.}
\citet{nishimori2026retry} extend ReMax to continuous action spaces using pathwise derivatives, whereas \citet{tong2026finite} derive sublinear regret bounds for ReMax under posterior sampling when $M=2$.
The unbiased max@K advantage estimator proposed by \citet{takashiro2026maxk} can be viewed as the canonical log-ratio gradient estimator for ReMax.
\citet{parmas2026ordergrad} generalize the maximum aggregator to a broader class of order-statistics and provide a unified framework for estimating their policy gradients, encompassing pathwise estimators \citep{nishimori2026retry}, log-ratio estimators \citep{takashiro2026maxk}, and our exact estimator.
ReMax can serve not only as a method for efficient decision making but also as a plug-in objective for diversifying policy behavior.
In diffusion RL, \citet{onoda2026multi} propose the multi-axis max@K objective
$\sum_{i=1}^{D} \max_{j \in [K]} r_i(a_j),$
where each $r_i$ represents a distinct semantic mode of an image.
This objective is an instance of ReMax under a uniform distribution over the reward functions $\{r_i\}_{i=1}^{D}$.

\paragraph{Set RL and other related work.}
Related approaches, often referred to as set RL, optimize objectives defined jointly over multiple trajectories
\citep{hamid2025polychromic,puri2026escaping,li2026setpo}.
A set of trajectories independently sampled from a common state induces a trajectory tree when their shared prefixes are merged, and its set-level objective defines a corresponding tree-level aggregation functional.
Set RL can therefore be viewed as a special case of retry RL with an independent set-sampling structure.
Unlike set RL, retry RL does not require the branches to be sampled independently from a common state; the location and number of subsequent branches may depend on the trajectories sampled so far.
Subsequent work has also considered closely related objectives based on vector-valued rewards and reward uncertainty.
\citet{bahlousboldi2026vpo} consider vector-valued rewards and maximize a randomly weighted scalarization over a set of sampled actions: $\E[w \sim \text{Dir}(\alpha)]{\max_{j \in [K]} w \cdot r(a_j)}$, where $\text{Dir}(\alpha)$ denotes the Dirichlet distribution with parameter $\alpha$.
This objective is a special case of ReMax in which the reward distribution is induced by the random scalarization weights.
\citet{gx2026using} likewise study an objective that is equivalent to ReMax, while providing a complementary perspective based on reward uncertainty.
Finally, \citet{tang2025recursive} explore alternative aggregators for evaluating sequences of actions, including the maximum, minimum, and variance.

\subsection{Risk-sensitive RL}
Risk-sensitive reinforcement learning modifies the criterion used to evaluate return distributions, rather than relying solely on their expectation.
Classical risk-sensitive Markov decision processes study exponential utility or entropic criteria, which lead to risk-aware Bellman equations \citep{howard1972risk,fei2021exponential}.
Another prominent criterion is Conditional Value-at-Risk (CVaR), which focuses on the lower tail of the return distribution and has been used to formulate safer RL objectives \citep{rockafellar2000optimization,ijcai2022p510}.
These approaches are related to our method in that they also go beyond the standard expected-return objective and reason about distributions of returns.
However, their primary goal is to encode a risk preference over environmental returns.
In contrast, \proposed{} evaluates the expected maximum over multiple sampled actions, thereby favoring actions that may yield high values under multiple retries and encouraging exploration.

Distributional RL provides another related perspective by explicitly modeling the full return distribution in deep RL \citep{bellemare2017distributional,dabney2018distributional,bellemare2023distributional}.
This line of work represents the inherent randomness of returns, often viewed as \emph{aleatoric} uncertainty, and has led to practical algorithms based on categorical \citep{bellemare2017distributional} or quantile approximations of return distributions \citep{dabney2018distributional,bellemare2023distributional}.
By contrast, the motivation of \proposed{} is primarily \emph{epistemic}: it targets uncertainty arising from insufficient exploration and limited knowledge of action values.
Nevertheless, \proposed{} can in principle be combined with distributional RL.
For example, one could combine a retry-based objective with an explicit distributional value model, using the learned return distribution to evaluate the expected maximum over sampled action values or to control the degree of risk-seeking behavior.
Exploring such combinations may be a promising direction.
\section{Proofs}\label{sec:proofs}
This section contains the proofs of the propositions and theorems in the main text.

\subsection{Proof of Proposition~\ref{prop:efficient-exact-remax}} \label{app:proof:efficient-exact-remax}

\textbf{Proposition~\ref{prop:efficient-exact-remax}.}
Let $q=(q_1,\dots,q_K)$ and write the order statistics $q_{(1)}\ge\cdots\ge q_{(K)}$, breaking ties arbitrarily, with aligned masses $\pi_{(j)}$.
Define $C_0 := 0$ and $C_j := \sum_{u=1}^j \pi_{(u)}$, $j=1,\ldots,K$.
Then
\begin{equation}
J^M_{\scriptscriptstyle\mathrm{ReMax}}(\pi,s,q)
=q_{(1)}
+\sum_{j=1}^{K-1}
\big(q_{(j+1)}-q_{(j)}\big)\,\big(1-C_j\big)^{M}.
\end{equation}

\begin{proof}
    We compute the expectation by conditioning on the best sampled rank.
    Let $(j)$ denote the action with the $j$-th largest value after sorting
    the entries of $q$, with ties broken arbitrarily.
    For sampled actions $A_1,\ldots,A_M$, define
    \[
    R^\star
    :=
    \min\{j \in [K] : A_m = (j) \text{ for some } m \in [M]\}.
    \]
    Then the maximum sampled value is
    \[
    \max_{m\in[M]} q_{A_m} = q_{(R^\star)}.
    \]
    Therefore,
    \begin{align}
    J^M_{\scriptscriptstyle\mathrm{ReMax}}(\pi,s,q)
    &=
    \mathbb{E}_{A_{[M]}\sim \pi}
    \left[
    \max_{m\in[M]} q_{A_m}
    \right] \\
    &=
    \sum_{j=1}^K
    \mathbb{P}(R^\star=j) q_{(j)} .
    \end{align}

    It remains to compute $\mathbb{P}(R^\star=j)$.
    The event $R^\star=j$ occurs iff all $M$ draws miss the top-$(j-1)$
    actions, but not all $M$ draws miss the top-$j$ actions.
    Since the total policy mass on the top-$j$ actions is $C_j$, we have
    \begin{align}
    \mathbb{P}(R^\star=j)
    &=
    \mathbb{P}(\text{all } M \text{ draws miss the top-}(j-1))
    -
    \mathbb{P}(\text{all } M \text{ draws miss the top-}j) \\
    &=
    (1-C_{j-1})^M - (1-C_j)^M .
    \end{align}
    Let
    \[
    \alpha_j := (1-C_j)^M .
    \]
    Then $\alpha_0=1$ and $\alpha_K=0$, and hence
    \begin{align}
    J^M_{\scriptscriptstyle\mathrm{ReMax}}(\pi,s,q)
    &=
    \sum_{j=1}^K
    (\alpha_{j-1}-\alpha_j) q_{(j)} \\
    &=
    \alpha_0 q_{(1)}
    +
    \sum_{j=1}^{K-1}
    \alpha_j \bigl(q_{(j+1)}-q_{(j)}\bigr)
    -
    \alpha_K q_{(K)} \\
    &=
    q_{(1)}
    +
    \sum_{j=1}^{K-1}
    \bigl(q_{(j+1)}-q_{(j)}\bigr)
    (1-C_j)^M .
    \end{align}
    This completes the proof.
    \end{proof}
\subsection{Proof of Proposition~\ref{prop:ei-based-pg}.} \label{app:proof:ei-based-pg}

\textbf{Proposition~\ref{prop:ei-based-pg}.}
Let $W_{M-1} := \max\{q_{A_1}, \dots, q_{A_{M-1}}\}$, we have:

\begin{equation}
    \nabla_\theta J^M_{\scriptscriptstyle\mathrm{ReMax}}(\theta, s, q)
    = M\E[a\sim\pi_\theta]{\nabla_\theta \log \pi_\theta(a|s) \E[A_{[M-1]}]{{\,(q_{a}-W_{M-1})_+}}}.
\end{equation}

\begin{proof}
We start by considering the PG with a per-term baseline.
\begin{equation}
\nabla_\theta J_M(\pi_\theta, s, q)
=\E[A_{[M]}]{\,\sum_{m=1}^M \nabla_\theta \log \pi_\theta(A_m)\,\big(\max_{j\in[M]}q_{A_j}-b_m\big)\, }.
\end{equation}
Choose $b_m$ as
\begin{equation}
b_m \ :=\ W_{-m} \ :=\ \max\{q_{A_1},\dots,q_{A_{m-1}},q_{A_{m+1}},\dots,q_{A_M}\},
\end{equation}
which preserves unbiasedness since
\begin{equation}
\E[A_m\sim\pi_\theta]{\,\nabla_\theta\log \pi_\theta(A_m)\,b_m\,} \;=\; b_m\,\nabla_\theta \sum_{i=1}^K \pi_\theta(i) \;=\; 0,
\end{equation}
With this baseline,
\begin{equation}
\max_{j\in[M]}q_{A_j}-W_{-m} \;=\; \big(q_{A_m}-W_{-m}\big)_+.
\end{equation}
Hence
\begin{equation}
\nabla_\theta J_M(\pi_\theta, s, q) =  \E[A_{[M]}]{\sum_{m=1}^M \nabla_\theta \log \pi_\theta(A_m)(q_{A_m}-W_{-m})_+}.
\end{equation}

\paragraph{Condition on the first $M{-}1$ samples.}
By symmetry of i.i.d.\ draws,
\begin{equation}
\nabla_\theta J_M(\pi_\theta, s, q)
= M\,\E[A_{[M-1]}]{\,\E[A_M\sim\pi_\theta]{\nabla_\theta \log \pi_\theta(A_M)\,(q_{A_M}-W_{M-1})_+}\, },
\end{equation}
where
\begin{equation}
W_{M-1} \ :=\ \max\{q_{A_1},\dots,q_{A_{M-1}}\}.
\end{equation}
For fixed $(q,A_{[M-1]})$, $W_{M-1}$ is a constant and $A_M\sim\pi_\theta$, so
\begin{equation}
\E[A_M\sim\pi_\theta]{\nabla_\theta \log \pi_\theta(A_M)\,(q_{A_M}-W_{M-1})_+}
=\sum_{i=1}^K \pi_\theta(i)\,\nabla_\theta\log \pi_\theta(i)\,(q_i-W_{M-1})_+.
\end{equation}

\paragraph{Separate the action expectation.}
Then, we have
\begin{equation}
\nabla_\theta J_M(\pi_\theta, s, q)
\ =\ M\,\E[i\sim\pi_\theta]{\,\E[A_{[M-1]}\sim\pi_\theta]{\,\nabla_\theta \log \pi_\theta(i)\; (q_i - W_{M-1})_+}},
\end{equation}
which completes the proof.
\end{proof}

\subsection{Proof of Proposition~\ref{prop:ei-expected-min}.} \label{app:proof:ei-expected-min}
\textbf{Proposition~\ref{prop:ei-expected-min}.}
Let $q\in\R^K$ be Q-values at a state, $\pi\in\Delta^{K-1}$ a policy, $R\in\R$ a reference, and $M\in\mathbb{N}$.
Define $v_i:=(R-q_i)_+$ and sort $q$ as $q_{(1)}\ge\cdots\ge q_{(K)}$ with aligned masses $\pi_{(j)}$.
Define $C_0 := 0$ and $C_j := \sum_{u=1}^j \pi_{(u)}$, $j=1,\ldots,K$.
Then
\begin{equation}
\mathrm{EI}_M(R;\pi,q) = v_{(1)} + \sum_{j=1}^{K-1}\big(v_{(j+1)}-v_{(j)}\big)\,(1-C_j)^{M-1}.
\end{equation}

\begin{proof}
    By definition,
    \[
    \mathrm{EI}_M(R;\pi,q)
    =
    \mathbb{E}_{A_{[M-1]}\sim\pi}
    \left[
    \left(R-\max_{m\in[M-1]}q_{A_m}\right)_+
    \right].
    \]
    Since $v_i=(R-q_i)_+$ and $q_{(1)}\ge\cdots\ge q_{(K)}$, we have
    $v_{(1)}\le\cdots\le v_{(K)}$ and
    \[
    \left(R-\max_{m\in[M-1]}q_{A_m}\right)_+
    =
    \min_{m\in[M-1]} v_{A_m}.
    \]
    Using the same best-rank decomposition as in the proof of
    Proposition~\ref{prop:efficient-exact-remax}, with $M-1$ draws, the
    probability that the best sampled rank is $j$ is
    \[
    (1-C_{j-1})^{M-1}-(1-C_j)^{M-1}.
    \]
    Therefore,
    \[
    \mathrm{EI}_M(R;\pi,q)
    =
    \sum_{j=1}^K
    \left\{(1-C_{j-1})^{M-1}-(1-C_j)^{M-1}\right\}v_{(j)}.
    \]
    Applying the same telescoping argument as in
    Proposition~\ref{prop:efficient-exact-remax} gives
    \[
    \mathrm{EI}_M(R;\pi,q)
    =
    v_{(1)}
    +
    \sum_{j=1}^{K-1}
    \bigl(v_{(j+1)}-v_{(j)}\bigr)(1-C_j)^{M-1}.
    \]
    This completes the proof.
    \end{proof}

\section{Details of the Bandit Experiments} \label{app:bandit}
This appendix provides details of the bandit experiments.

\subsection{Binary Bandit}\label{app:bandit:binary}
We can compute the ReMax objective for the binary bandit analytically as follows:
\begin{equation}
   J^M_{\scriptscriptstyle\mathrm{ReMax}}(p) := 0.75 \cdot (1 - (1-p)^M) + 0.25 \cdot (1-p^M),
\end{equation}
where $p$ is the probability of pulling arm 1.

\subsection{Bernoulli bandit}\label{app:bandit:bernoulli}
We describe (i) the Bernoulli-bandit setup, (ii) the experimental design for Fig.~\ref{fig:bandit} (Center), and (iii) the computations for each method.

\paragraph{(i) Bernoulli-bandit setup.}
We consider two arms with rewards $R_i=\alpha_i X_i$, where $X_i\sim\mathrm{Bernoulli}(p_i)$ and $\mu_i=\E{R_i}=\alpha_i p_i$.
For a fixed realization $r=(r_0,r_1)$ and a policy $\pi\in[0,1]$ denoting $\Pr(a{=}0)=\pi$ (thus $\Pr(a{=}1)=1-\pi$), the ReMax objective with $M{=}2$ i.i.d.\ draws is
\[
J^{2}(\pi \mid r)=\pi^2 r_0 + 2\pi(1-\pi)\max\{r_0,r_1\} + (1-\pi)^2 r_1.
\]
Taking expectation over Bernoulli outcomes
$e_1=(\alpha_0,0),\ e_2=(0,\alpha_1),\ e_3=(\alpha_0,\alpha_1),\ e_4=(0,0)$
with probabilities
$p_{e_1}=p_0(1-p_1)$, $p_{e_2}=(1-p_0)p_1$, $p_{e_3}=p_0p_1$, $p_{e_4}=(1-p_0)(1-p_1)$ gives
\[
\E{J^{2}(\pi \mid R)}=\sum_{k=1}^4 p_{e_k}\,J^{2}(\pi \mid e_k).
\]

\paragraph{(ii) Experimental design.}
We fix $p_0=1$ and $\alpha_0=2$, and sweep arm-1 scale $\alpha_1\in[1,10]$ while adjusting $p_1$ to keep the mean constant: $\alpha_1 p_1 = 1$.
For each $\alpha_1$, we evaluate $\pi^\star(a{=}1)=1-\pi^\star$ under $M=2$ and plot it together with the softmax baseline in Fig.~\ref{fig:bandit} (Center).

\paragraph{(iii) Method computations.}
\emph{ReMax ($M{=}2$).}
Compute $\E{J^{2}(\pi \mid R)}$ via the above event decomposition and maximize over $\pi\in[0,1]$ (we use a dense numerical search; a closed form exists because the objective is quadratic in $\pi$).
\\
\emph{Softmax (entropy-regularized).}
For temperature $\beta>0$ (we use $\beta{=}1$),
\[
\pi_{\mathrm{soft}}(a{=}1)
=\frac{\exp(\mu_1/\beta)}{\exp(\mu_0/\beta)+\exp(\mu_1/\beta)},
\quad
\mu_0=\alpha_0 p_0,\ \mu_1=\alpha_1 p_1\ (=1\ \text{in our sweep}).
\]

\subsection{Bandit with Posterior}\label{app:bandit:bandit-with-posterior}
\begin{algorithm}[H]
    \caption{ReMax with Posterior}
    \label{alg:remax_ts_5line}
    \begin{algorithmic}[1]
      \STATE Initialize prior $\Pi_{0, i}$ for each arm.
      \FOR{$t=1,2,\dots,T$}
        \STATE Compute $\pi_t = \argmax_\pi \E[\mu_t \sim \Pi_t]{J^M_{\scriptscriptstyle\mathrm{ReMax}}(\pi;\mu_t)}$ by Alg.~\ref{alg:remax_optimization}.
        \STATE Play $a_t\!\sim\!\pi_t$, observe $r_t\!\in\!\R$.
        \STATE Update posterior of arm $a_t$, $\Pi_{t+1, a_t}$, by the posterior update rule.
      \ENDFOR
    \end{algorithmic}
\end{algorithm}

\begin{algorithm}[H]
    \caption{ReMax optimization}
    \label{alg:remax_optimization}
    \begin{algorithmic}[1]
      \REQUIRE Posterior $\Pi_t$, batch size $B$, draws $M$, epochs $S$ and policy $\theta_{t-1}$
      \STATE Initialize policy $\pi_\theta$ with $\theta_{t-1}$
      \FOR{$s=1,2,\dots,S$}
        \STATE Sample $\mu^{b, t} \sim \Pi_t$ for $b=1,2,\dots,B$.
        \STATE Compute $J^M_{\scriptscriptstyle\mathrm{ReMax}}(\pi_\theta;\mu^{b, t})$ for $b=1,2,\dots,B$.
        \STATE Update $\pi_\theta$ by gradient ascent.
      \ENDFOR
            \STATE \textbf{return} $\pi_t = \pi_\theta$
    \end{algorithmic}
\end{algorithm}

To confirm empirically sublinear regret, we conduct a posterior-driven bandit experiment in two settings: Beta--Bernoulli and Gaussian--Gaussian.
We consider a ground-truth prior $\Pi^*$ over the arm means $\{\mu_i\}_{i=1}^K$.
Initially, each $\mu_i\sim\Pi^*$ and the learner’s prior is $\Pi_0=\Pi^*$.
At each round $t$, we select $a_t$, observe $r_t$ drawn with mean $\mu_{a_t}$, and update the posterior $\Pi_{t+1,a_t}$.

\paragraph{Beta--Bernoulli.}
The prior $\Pi^*$ is $\mathrm{Beta}(\alpha_0,\beta_0)$ with $\alpha_0=\beta_0=1$ for all arms.
Rewards are Bernoulli with mean $\mu_i$.
The posterior update is
\[
\Pi_{t+1, i} = \mathrm{Beta}(\alpha_t + r_t,\, \beta_t + 1 - r_t).
\]

\paragraph{Gaussian--Gaussian.}
The prior $\Pi^*$ is $\mathcal{N}(\mu_0,\sigma_0^2)$ with $\mu_0=0$ and $\sigma_0^2=1$ for all arms.
Rewards are $\mathcal{N}(\mu_i,\sigma_R^2)$ with $\sigma_R^2=1$.
The posterior update is
\[
\Pi_{t+1, i} = \mathcal{N}(\mu_{i,t+1}, \sigma_{i,t+1}^2),\qquad
\sigma_{i,t+1}^2 = \Big(\tfrac{1}{\sigma_{i,t}^2} + \tfrac{1}{\sigma_R^2}\Big)^{-1},\quad
\mu_{i,t+1} = \sigma_{i,t+1}^2\!\left(\tfrac{\mu_{i,t}}{\sigma_{i,t}^2} + \tfrac{r_t}{\sigma_R^2}\right).
\]

\paragraph{ReMax optimization.}
Arm selection by ReMax is shown in Alg.~\ref{alg:remax_ts_5line}.
At each $t$, we optimize the ReMax objective via Alg.~\ref{alg:remax_optimization}, the exact computation of the objective from Prop.~\ref{prop:efficient-exact-remax} with batch size $B=16$ and epochs $S=50$.
For ease of optimization, at each round $t$ we initialize the policy $\theta_t$ with the parameters from the previous round $t-1$.
To avoid optimizer state carrying over across rounds, we reinitialize the optimizer at each round.

\paragraph{Baselines.}
We compare against Thompson sampling \citep{thompson1933likelihood,honda2014optimality,agrawal2017near} and UCB \citep{auer2002finite}, both with sublinear-regret guarantees.
Thompson sampling: sample $\mu_{a}$ from $\Pi_{t,a}$ and select $a=\arg\max_a \mu_a$.
UCB: after initializing by pulling all arms, select the arm with highest empirical mean plus a bonus $c \sqrt{\log(t)/(2N_a)}$, where $N_a$ is the number of pulls; we use $c=1.0$ for both settings.
To compare with the entropy-regularized exploration, we prepared a Softmax baseline, where we took the softmax of the posterior means of each arm and selected the arm following the softmax distribution. We fine-tune the temperature parameter in ($0.01, 0.1, 1.0$) and used $0.1$ for the experiments.


\newpage
\section{RePPO}\label{app:reppo}

Below is the code for the advantage computation in RePPO.
\begin{lstlisting}[style=py, caption=Advantage Computation for RePPO]
  def expected_improvement_min(
    R: jnp.ndarray,  # (B, N_ref)
    q: jnp.ndarray,  # (B, K)
    pi: jnp.ndarray, # (B, K)
    M: float,
):
    """EI_M(R;pi) = E[min_{1..M} (R - q_A)_+],  A~pi i.i.d.
       Returns: (B, N_ref)
    """
    idx = jnp.argsort(-q, axis=-1)
    q_sorted = jnp.take_along_axis(q, idx, axis=-1)
    pi_sorted = jnp.take_along_axis(pi, idx, axis=-1)

    C = jnp.cumsum(pi_sorted, axis=-1)  # (B, K)

    v = jnp.maximum(R[..., None] - q_sorted[:, None, :], 0.0)  # (B, N_ref, K)
    v_first = v[..., 0]
    dv = v[..., 1:] - v[..., :-1]
    eps = 1e-8
    w = jnp.power(jnp.clip(1.0 - C[..., :-1], eps, 1.0), M)  # (B, K-1)
    EI = v_first + jnp.sum(dv * w[:, None, :], axis=-1)      # (B, N_ref)
    return EI


def reppo_advantage(
    R: jnp.ndarray,        # (B,) or (B,1)
    q: jnp.ndarray,        # (B, K)
    pi: jnp.ndarray,       # (B, K)
    action: jnp.ndarray,   # (B,)
    M: float,
):
    """Compute RePPO advantage with Q-replacement by return.
       Returns: (B,)
    """
    if R.ndim == 1:
        R = R[:, None]  # (B, 1)

    # Q-replacement: set q[action] <- R
    q_ref = q.at[jnp.arange(q.shape[0]), action].set(R[:, 0])

    # Improvement terms
    R_plus = expected_improvement_min(R, q_ref, pi, M - 1)[..., 0]  # (B,)
    q_plus = expected_improvement_min(q, q_ref, pi, M - 1)          # (B, K)
    baseline = jnp.sum(pi * jax.lax.stop_gradient(q_plus), axis=-1)  # (B,)
    return R_plus - baseline

\end{lstlisting}
\newpage

\begin{figure*}[t]
    \centering
    \begin{subfigure}[t]{0.255\linewidth}
        \centering
        \includegraphics[width=\linewidth]{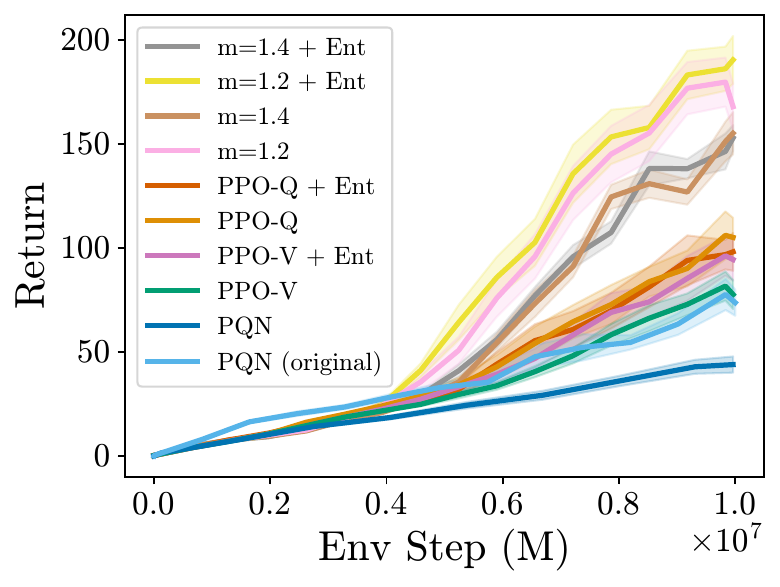}
        \caption{Breakout}
    \end{subfigure}\hfill
    \begin{subfigure}[t]{0.243\linewidth}
        \centering
        \includegraphics[width=\linewidth]{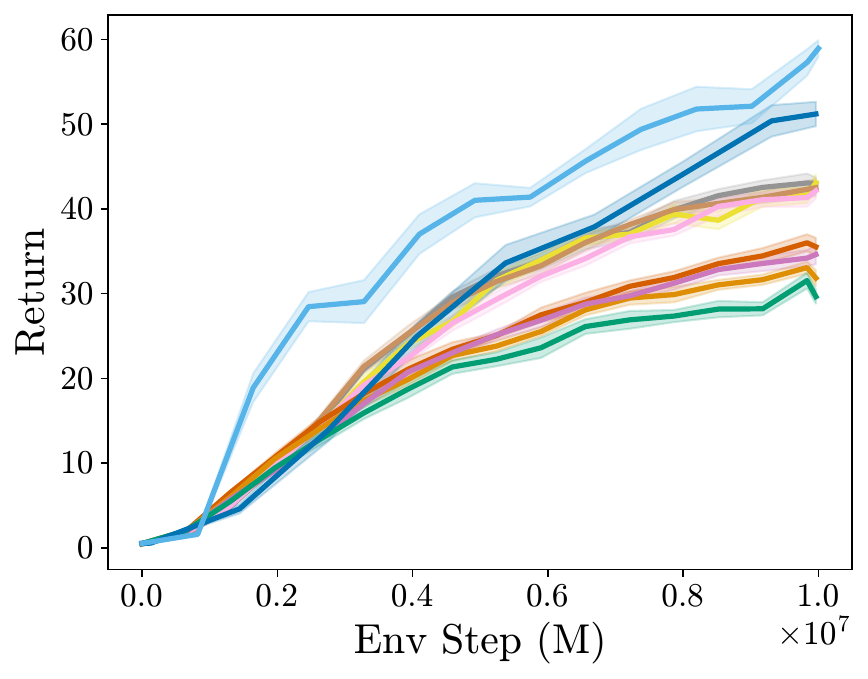}
        \caption{Asterix}
    \end{subfigure}\hfill
    \begin{subfigure}[t]{0.243\linewidth}
        \centering
        \includegraphics[width=\linewidth]{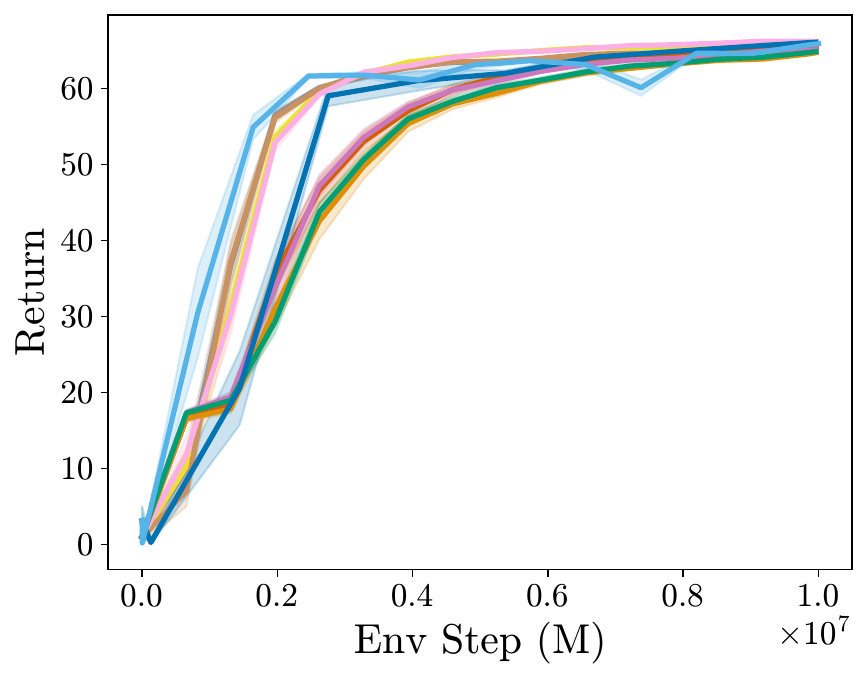}
        \caption{Freeway}
    \end{subfigure}\hfill
    \begin{subfigure}[t]{0.243\linewidth}
        \centering
        \includegraphics[width=\linewidth]{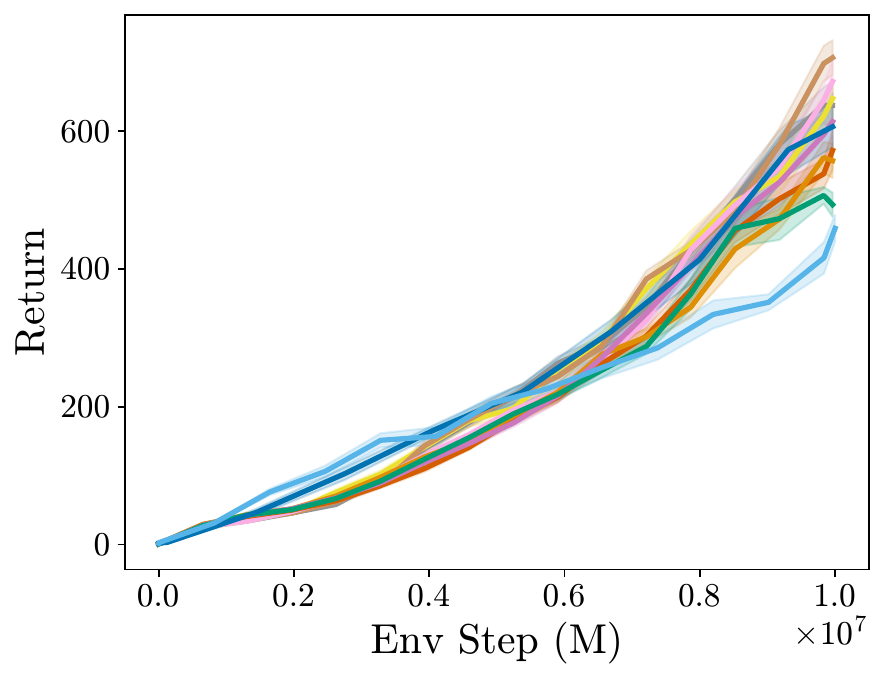}
        \caption{Space Invaders}
    \end{subfigure}
    \caption{Per-game learning curves, ordered as Breakout, Asterix, Freeway, and Space Invaders. Mean $\pm$ s.e.\ over 10 seeds.}
    \label{fig:minatar_per_game}
\end{figure*}

\section{MinAtar Experiment}\label{app:minatar_experiment}
In this appendix, we provide additional details on the MinAtar experiment.
\subsection{Experimental setup}\label{app:minatar_experiment:hyperparameters}

\paragraph{Network architecture.}
We use the same network architecture as the public implementations of PPO and PQN (official implementation).
\textbf{PPO (Actor–Critic).} A shared CNN (Conv $2{\times}2$ + ReLU + avg-pool) and MLP produce a latent state, which branches into (i) an actor head with two hidden layers (ReLU/Tanh) outputting action logits, and (ii) a critic head with two hidden layers outputting per-action Q-values (i.e., a Q-critic instead of a scalar $V$).
\textbf{PQN (Q-Network).} Inputs are scaled (optional BatchNorm), then passed to a CNN feature extractor (Conv $3{\times}3$ with LayerNorm/BatchNorm + ReLU) and an MLP; a final linear layer outputs Q-values for all actions. This is a single-head, value-based model tailored to pure Q-learning.

\paragraph{Code references.}
We list the code references used in our experiments.
\begin{itemize}
    \item MinAtar: \texttt{pgx} implementation \citep{koyamada2023pgx}\footnote{\url{https://github.com/sotetsuk/pgx}}
    \item PPO: \texttt{pgx} implementation \citep{koyamada2023pgx}\footnote{\url{https://github.com/sotetsuk/pgx}}, which is also based on \texttt{purejaxrl} \citep{lu2022discovered}\footnote{\url{https://github.com/luchris429/purejaxrl}}
    \item PQN: official implementation \citep{gallici2024simplifying}\footnote{\url{https://github.com/mttga/purejaxql}}
\end{itemize}

\paragraph{Hyperparameters.}
For PQN with 1024 parallel environments, we tuned the learning rate ($0.0005$, $0.001$) and GAE $\lambda$ ($0.65$, $0.8$, $0.95$) for each environment.
For RND, we tuned the learning rate ($0.001$, $0.0003$, $0.0001$) of the RND network and bonus coefficient ($0.5$, $1.0$, $1.5$) for each environment.
We report the hyperparameters for RePPO, PPO-V, PPO-Q, and PQN (both original and tuned configurations) in Tables~\ref{tab:reppo-hparams}, \ref{tab:ppo-hparams}, and \ref{tab:pqn-cnn-hparams} (App.~\ref{app:hparam_tables}), respectively.

\paragraph{Training and evaluation.}
Agents are trained for \(10\)M environment steps with \(10\) random seeds.
During evaluation, we average \(100\) test episodes per seed.
For PPO variants, evaluation uses the argmax of the policy’s logits, whereas training samples actions from the policy.
We report \emph{normalized scores} aggregated across games using median, interquartile mean (IQM), and mean, following the RLiable framework \citep{agarwal2021deep}.
Scores are normalized by the maximum score achieved across all methods. Per-game scores are reported in App.~\ref{app:minatar_experiment:additional_results}.
The maximum scores used for normalization are: Breakout 251.15, Asterix 64.95, Freeway 67.05, Space Invaders 880.91.

\subsection{Additional results}\label{app:minatar_experiment:additional_results}
We present supplementary analyses to complement the main results. Unless stated otherwise, curves show the mean and standard error over 10 seeds.

\paragraph{Per-game results.}
Fig.~\ref{fig:minatar_per_game} reports learning curves for each MinAtar game.
RePPO clearly outperforms PPO on \emph{Breakout} and \emph{Asterix}, and converges faster than the baselines on \emph{Freeway}, consistent with the main findings.
On \emph{Asterix}, PQN outperforms all other methods.

\begin{figure*}[t]
    \centering
    \includegraphics[width=0.85\linewidth]{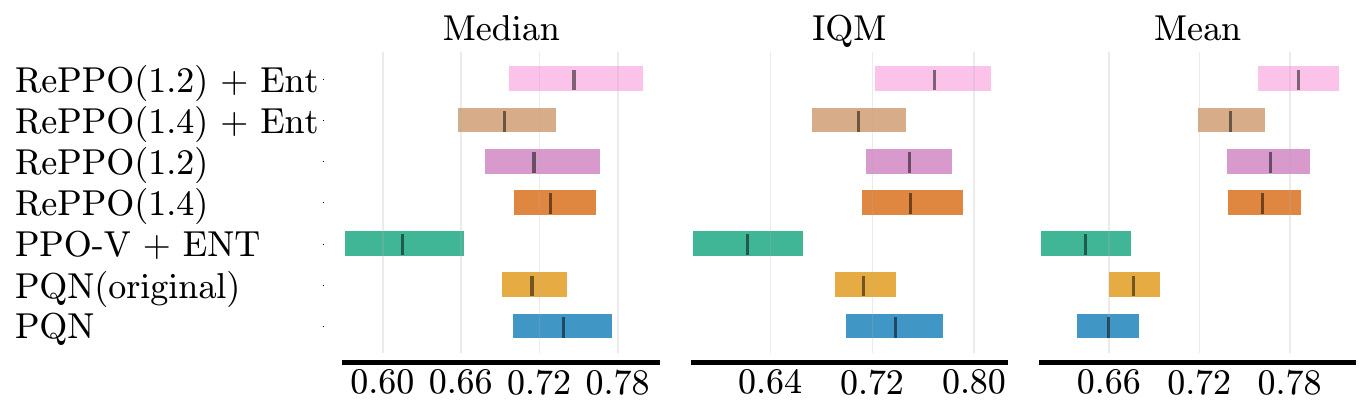}
    \caption{Aggregate metrics (Median, IQM, Mean) across all games.}
    \label{fig:minatar_vs_pqn_original}
\end{figure*}

\begin{figure*}[t]
    \centering
    \begin{subfigure}[t]{0.245\linewidth}
        \centering
        \includegraphics[width=\linewidth]{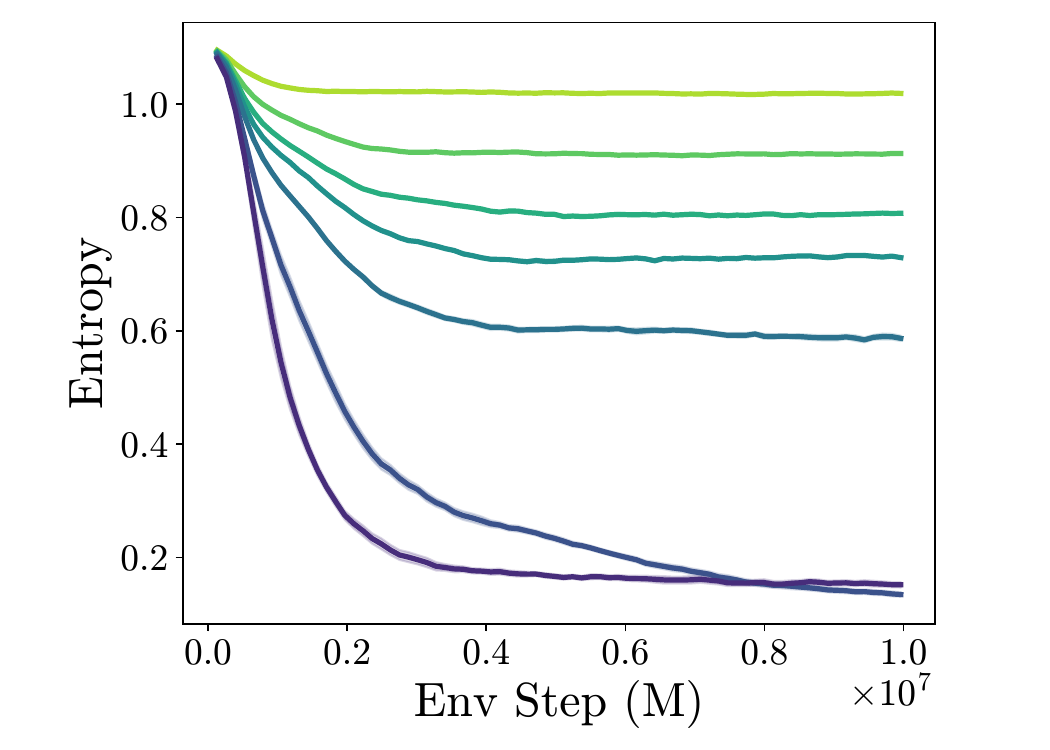}
        \caption{Breakout}
    \end{subfigure}%
    \begin{subfigure}[t]{0.245\linewidth}
        \centering
        \includegraphics[width=\linewidth]{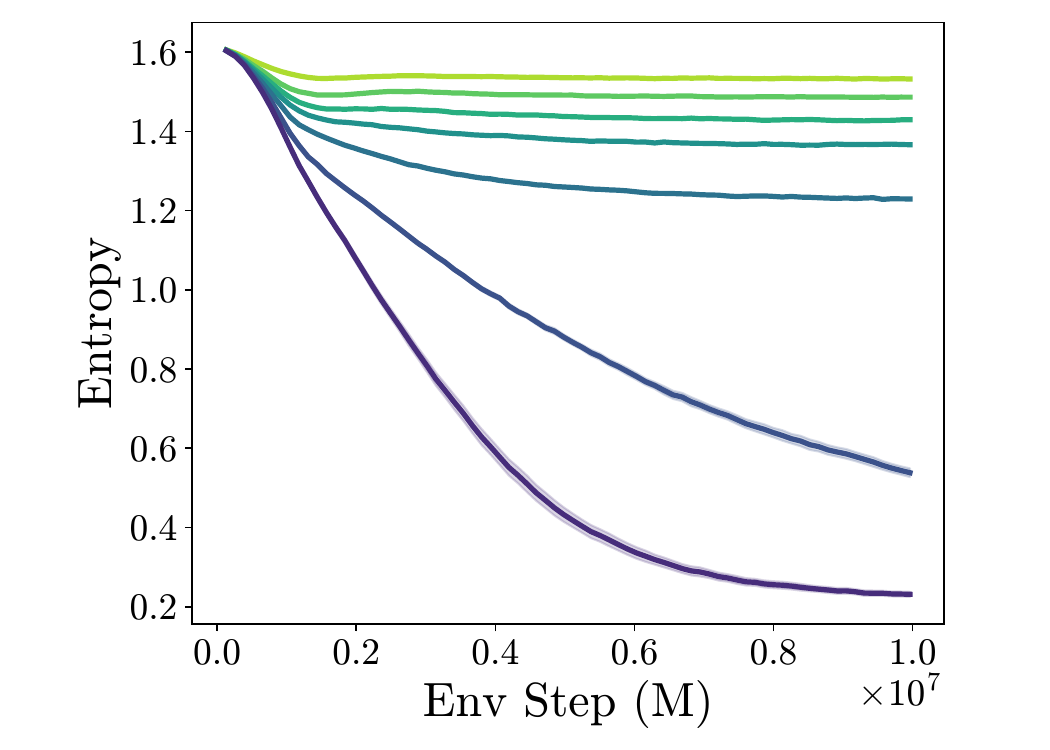}
        \caption{Asterix}
    \end{subfigure}%
    \begin{subfigure}[t]{0.245\linewidth}
        \centering
        \includegraphics[width=\linewidth]{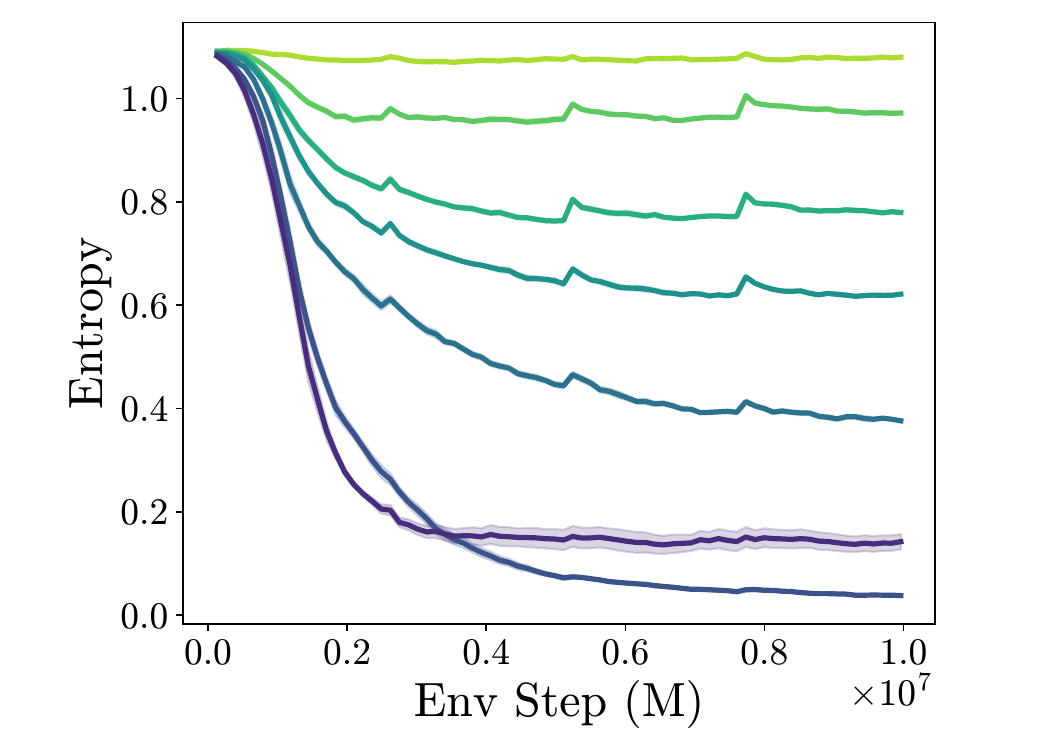}
        \caption{Freeway}
    \end{subfigure}%
    \begin{subfigure}[t]{0.265\linewidth}
        \centering
        \includegraphics[width=\linewidth]{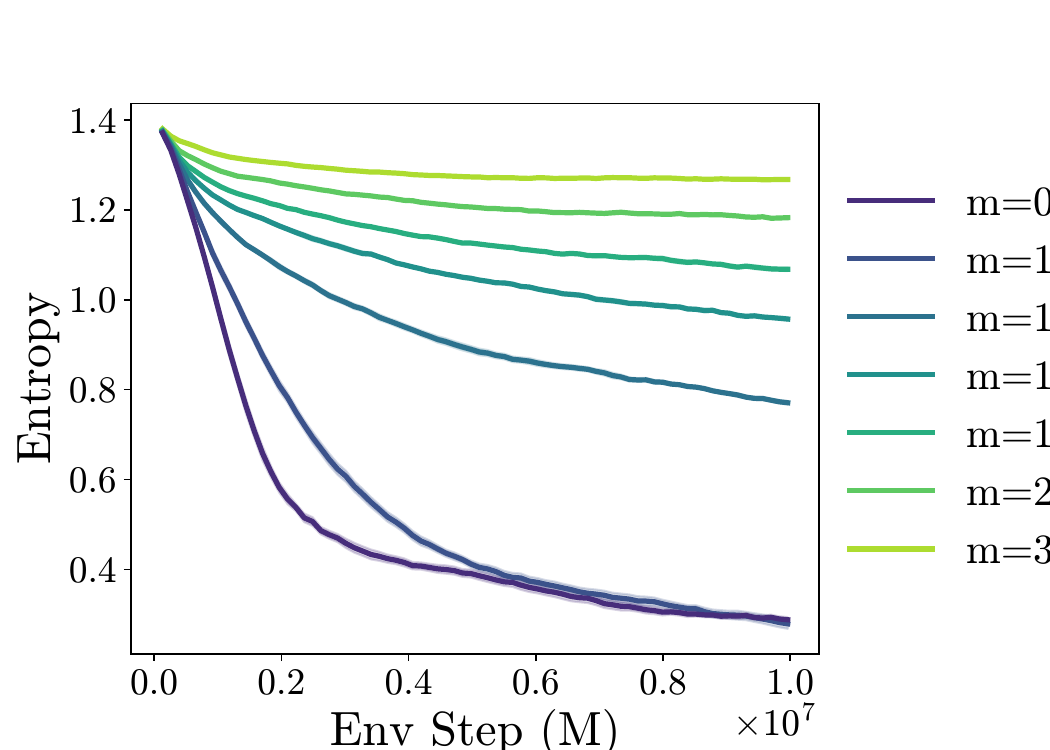}
        \caption{Space Invaders}
    \end{subfigure}
    \caption{Policy entropy for all environments over $m\in\{0.9,1.0,1.2,1.4,1.6,2,3\}$, ordered as Breakout, Asterix, Freeway, and Space Invaders.}
    \label{fig:minatar_entropy_other_envs}
\end{figure*}

\paragraph{Entropy across environments.}
Fig.~\ref{fig:minatar_entropy_other_envs} shows policy entropy for all environments and $m\in\{0.9,1.0,1.2,1.4,1.6,2,3\}$.
As expected, entropy aligns with the retry parameter $m$: small $m$ leads to rapid entropy decay, while larger $m$ sustains higher entropy.
Entropy can also be tuned more finely by varying $m$ between these values.

\paragraph{Comparison to the original PQN.}

\begin{wrapfigure}{r}{0.30\linewidth}
  \centering
  \vspace{-1.0em}
  \includegraphics[width=\linewidth]{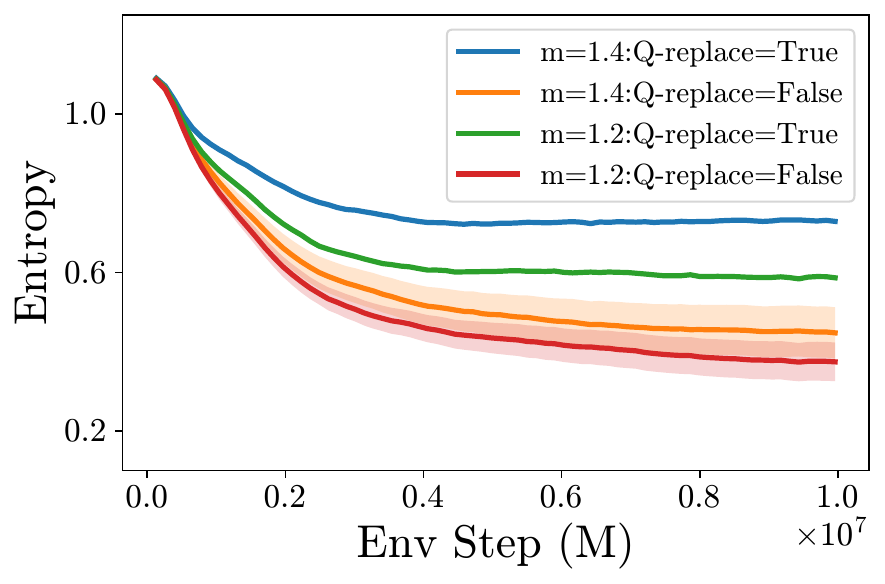}
  \caption{Entropy with and without Q-replacement on \emph{Breakout}.}
  \label{fig:minatar_entropy_q_replacement_breakout}
  \vspace{-1.0em}
\end{wrapfigure}
In the main text, we compared RePPO to PQN tuned for our setup.
Fig.~\ref{fig:minatar_vs_pqn_original} adds results for the original PQN.
Overall, the original PQN is comparable to the one that we tuned for our setup.
PQN is comparable to RePPO on Median and IQM, but underperforms on Mean, the same trend we observed in the main text.
This confirms that the difference of the setup does not affect the performance of PQN that much, confirming the validity of our analysis.

\paragraph{Details on the analysis of Q-replacement.}
Recall that Q-replacement substitutes the critic estimate $Q(s,a)$ at the sampled action with the empirical return $R$ before computing the EI advantage.
We hypothesize that this substitution mitigates distortions of the EI advantage caused by inaccurate Q-values, which may otherwise reduce exploration.
To test this, we compare entropy with and without Q-replacement on \emph{Breakout} under the same retry parameters $m \in \{1.2, 1.4\}$ (Fig.~\ref{fig:minatar_entropy_q_replacement_breakout}).
Across both values of $m$, the no-replacement variant exhibits consistently lower entropy throughout training.
This suggests that Q-replacement helps preserve the exploratory pressure induced by the retry parameter and contributes to RePPO's overall performance.

\paragraph{The standard deviation of the EI-based advantage.}

Fig.~\ref{fig:remax_advantage_std_breakout} reports the standard deviation of the EI-based advantage for \emph{Breakout} during training, computed over the minibatch and averaged over 5 random seeds, with error bars denoting the standard error across seeds.
This statistic measures the variability of the scalar advantage signal used in the actor update, and can therefore serve as a diagnostic of the stability of the advantage estimates.

\begin{wrapfigure}{r}{0.28\linewidth}
  \centering
  \vspace{-1.0em}
  \includegraphics[width=\linewidth]{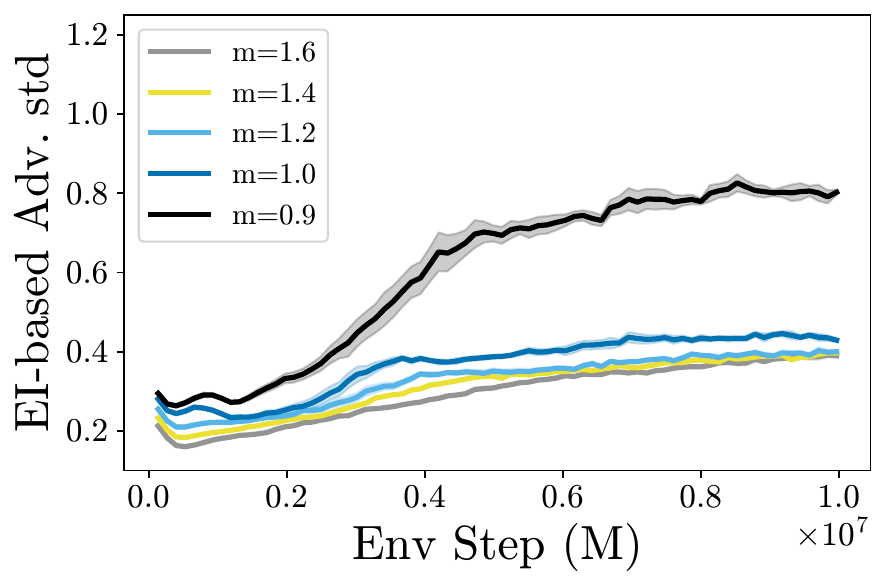}
  \caption{Standard deviation of the EI-based advantage on \emph{Breakout}.}
  \label{fig:remax_advantage_std_breakout}
  \vspace{-1.0em}
\end{wrapfigure}

We observe that retry parameters that performed well in our experiments, such as $m=1.2$ and $m=1.4$, tend to yield smaller advantage standard deviations than a less exploratory setting such as $m=0.9$.
This suggests that the same range of $m$ that promotes exploration can also produce a more stable EI-based advantage signal on \emph{Breakout}.
We emphasize, however, that this observation does not by itself isolate the cause of the variance reduction; it may reflect the combined effects of the EI transformation, the induced policy entropy, critic accuracy, and Q-replacement.

\newpage
\section{Atari Experiment}\label{app:atari_experiment}
We use the 10 Atari environments identified as dense-reward, hard-exploration problems by \citet{bellemare2016unifying}, namely: Alien, Amidar, BattleZone, Frostbite, Hero, Ms. Pacman, Q*bert, Surround, Wizard of Wor, and Zaxxon.
Our implementation is based on the CleanRL codebase \citep{huang2022cleanrl} and employs EnvPool \citep{weng2022envpool} for parallel environment execution.

\subsection{Experimental setup}\label{app:atari_experiment:hyperparameters}
We follow the hyperparameters listed in Tables~\ref{tab:reppo-hparams-atari} and~\ref{tab:ppo-hparams-atari} (App.~\ref{app:hparam_tables}), which are adapted from CleanRL.
As baselines, we use PPO-V and PPO-Q, each evaluated with and without entropy regularization.
RePPO is trained with retry parameters $m = 0.8, 0.9, 1.0, 1.2, 1.4$, and we report results for all configurations.
Training is conducted for $1 \times 10^{7}$ environment steps.\\
Evaluation is carried out in parallel with training using eight environments, and we report normalized scores at the final training step.
All results are averaged over five random seeds and aggregated across the 10 environments using the RLiable framework \citep{agarwal2021deep} to compute the median, interquartile mean (IQM), and mean performance and it is shown in Fig.~\ref{fig:atari_results}.
We also plotted the raw return curves for all games in Fig.~\ref{fig:atari_return_all_games}.
\subsection{Results}\label{app:atari_experiment:additional_results}
\begin{figure*}[t]
    \centering
    \includegraphics[width=0.9\linewidth]{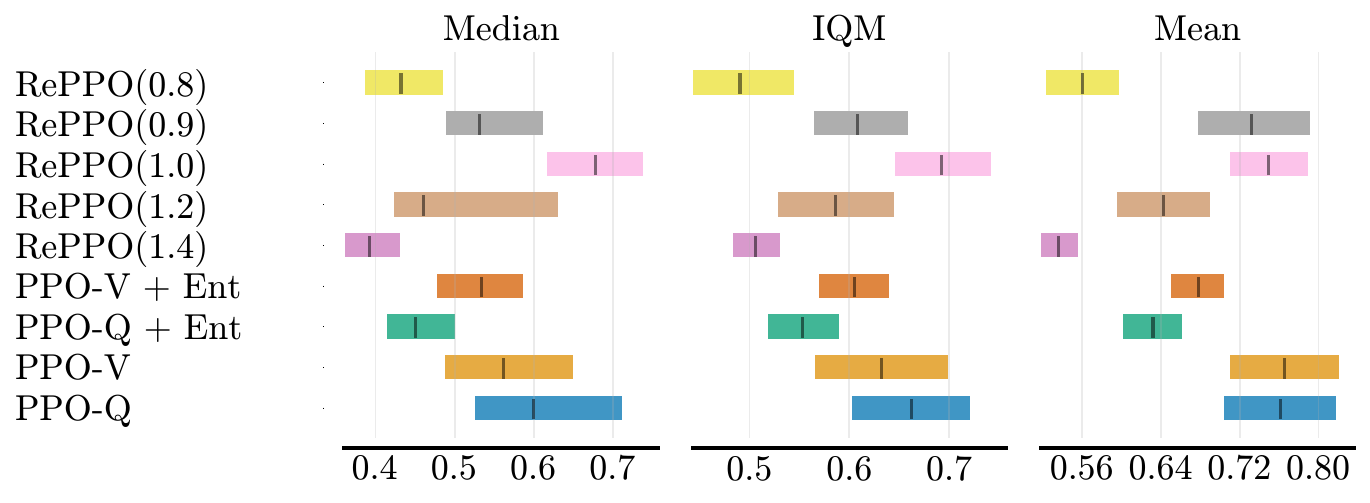}
    \caption{
      Normalized scores aggregated with median, IQM, and mean across 10 games; boxes denote RLiable summaries over 5 seeds.
      For RePPO, we see the performance peak around $m = 0.9$ to $1.0$, and PPO without entropy performs better than that with entropy.
      This indicates that those environments indeed require less exploration.
    }
    \label{fig:atari_results}
\end{figure*}
Fig.~\ref{fig:atari_results} shows the normalized scores aggregated with median, IQM, and mean across 10 games.
Across all environments, methods with weaker exploration, such as PPO-V, PPO-Q, and RePPO with $m = 0.9$ or $1.0$, achieve the highest performance.
In contrast, RePPO with larger retry parameters ($m = 1.2$ and $1.4$) and entropy-regularized PPO variants exhibit lower performance.
This results in a performance peak around $m = 0.9$ to $1.0$.
These findings suggest that, in practice, Bellemare’s suite of hard-exploration environments may require relatively little exploration.
Fig.~\ref{fig:atari_entropy_all_games} shows the evolution of policy entropy during training on all games.
For $m = 1.2$ and $1.4$, the policy maintains higher entropy, indicating that RePPO indeed encourages more exploratory behavior even in complex, pixel-based environments such as Atari.
Taken together, these results demonstrate that RePPO promotes exploration in Atari, but also remains flexible: when little exploration is needed, choosing a smaller retry parameter (e.g., $m = 0.9$ or $1.0$) yields strong performance.
In contrast, larger values encourage exploration when desired.

\begin{figure*}[t]
    \centering
    \setlength{\tabcolsep}{0pt}
    \renewcommand{\arraystretch}{0}
    \begin{tabular}{@{}ccccc@{}}
    \includegraphics[width=0.2\linewidth]{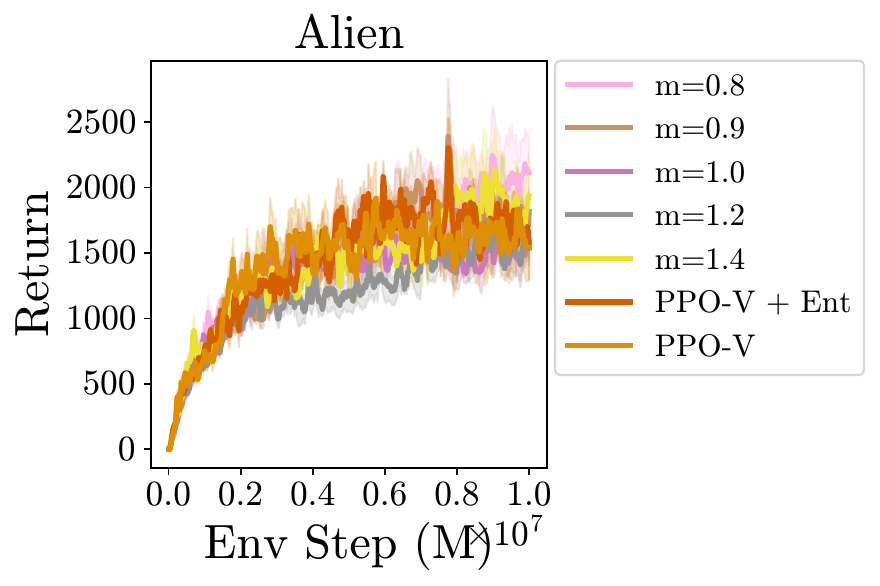} &
    \includegraphics[width=0.2\linewidth]{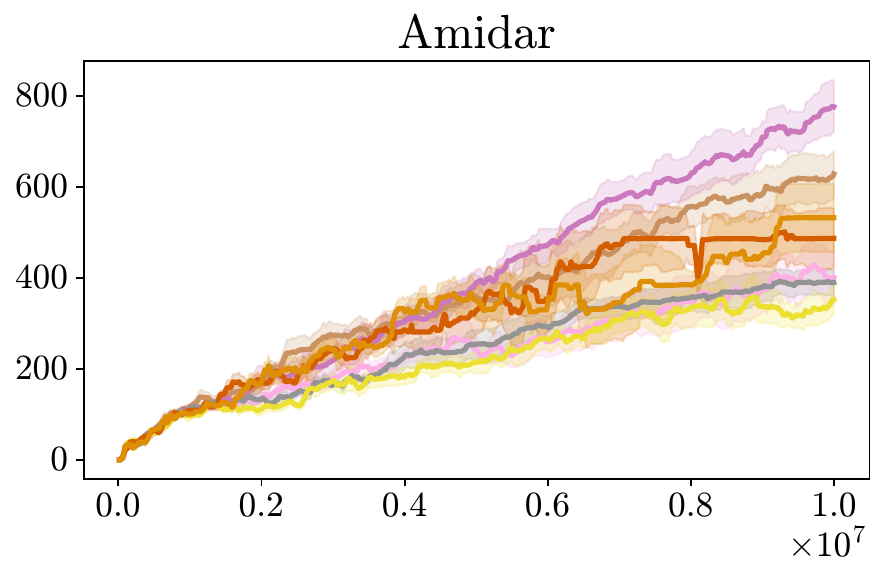} &
    \includegraphics[width=0.2\linewidth]{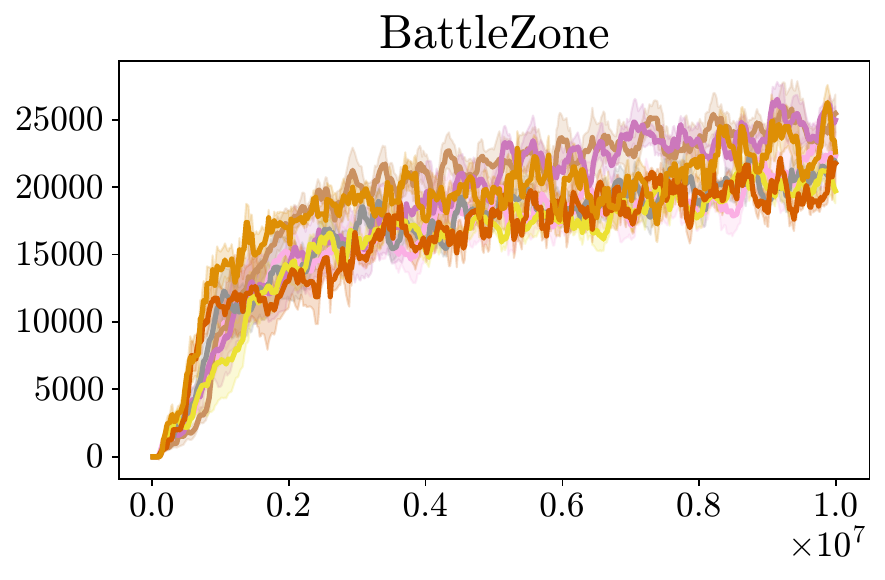} &
    \includegraphics[width=0.2\linewidth]{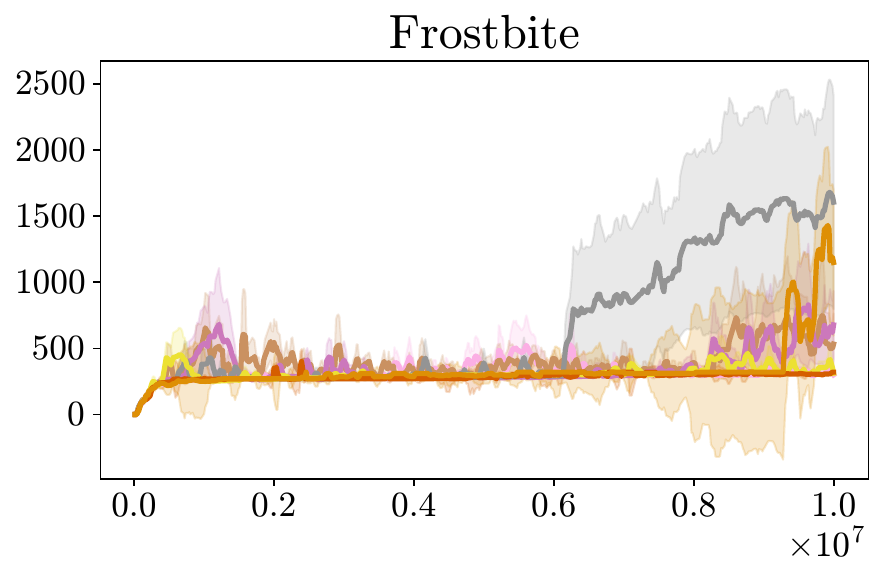} &
    \includegraphics[width=0.2\linewidth]{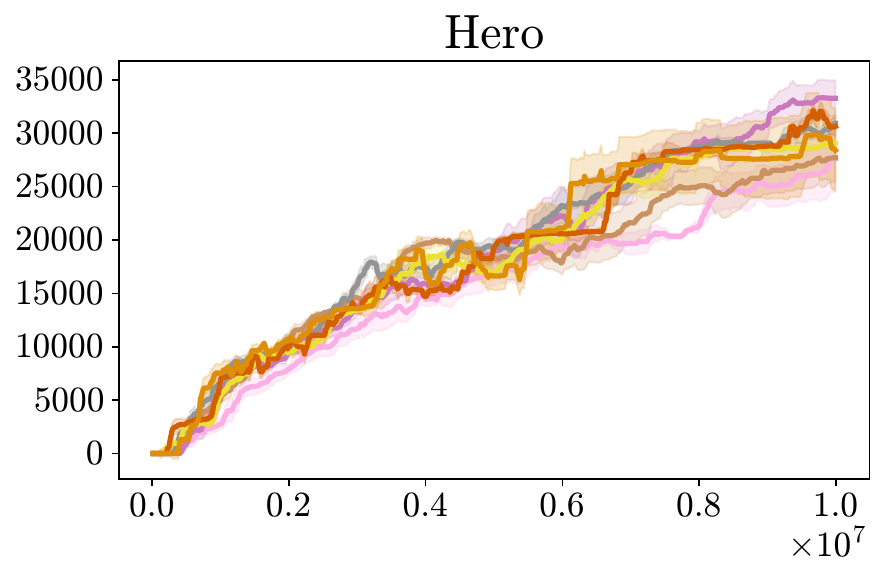} \\
    \includegraphics[width=0.2\linewidth]{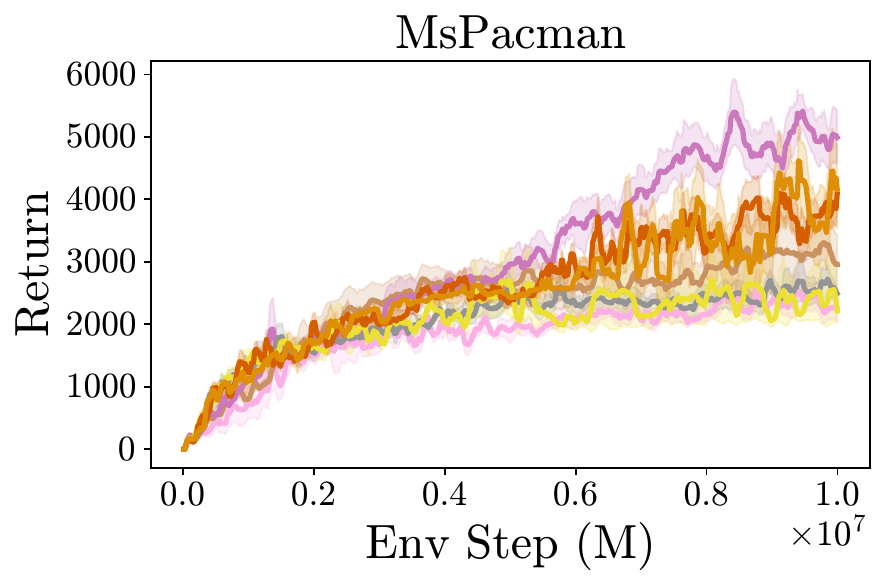} &
    \includegraphics[width=0.2\linewidth]{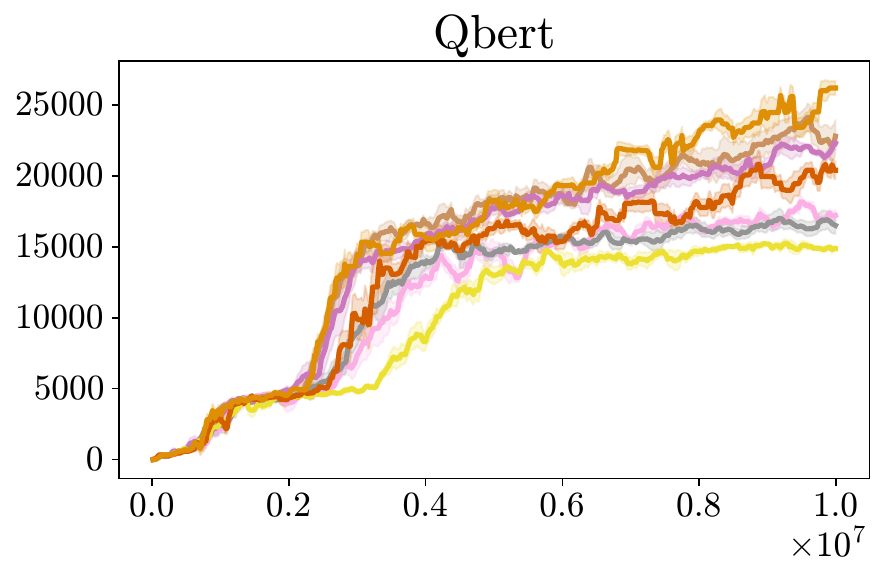} &
    \includegraphics[width=0.2\linewidth]{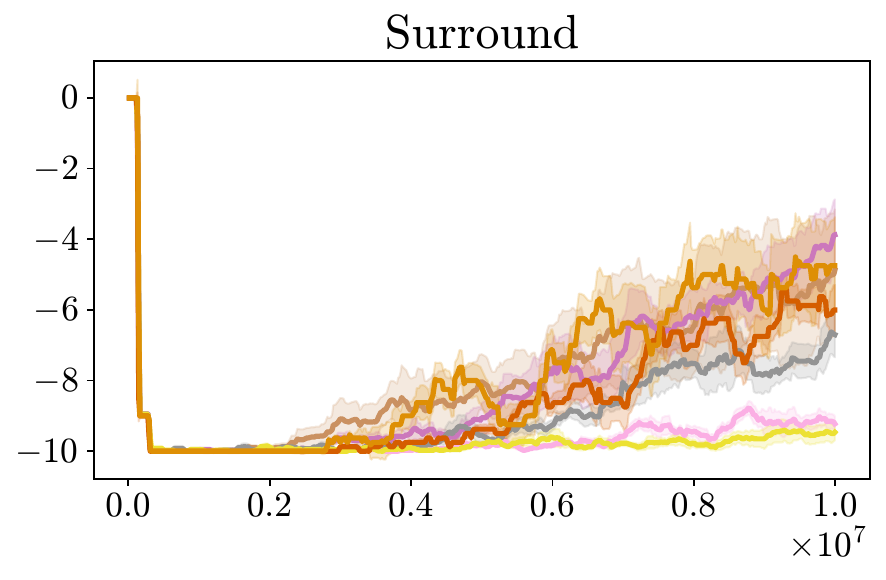} &
    \includegraphics[width=0.2\linewidth]{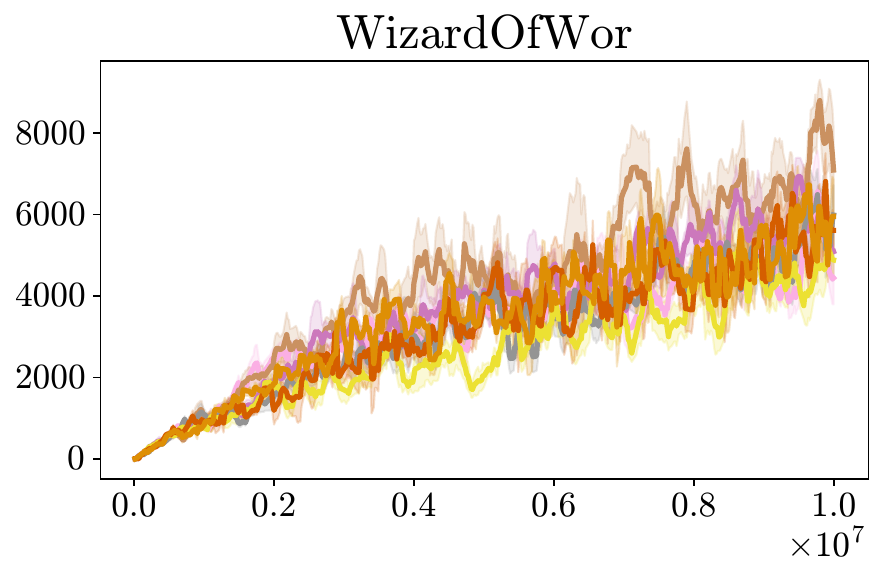} &
    \includegraphics[width=0.2\linewidth]{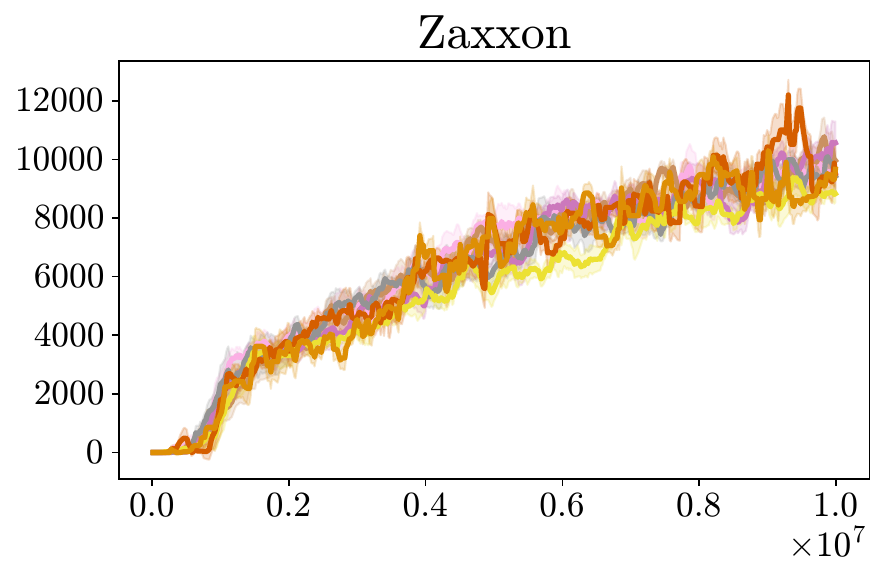} \\
    \end{tabular}
    \caption{
      Plot of the return curves for all games. Mean $\pm$ s.e.\ over 5 seeds.
      Overall, PPO without entropy and RePPO with $m = 0.9$ and $1.0$ perform better than that with entropy and RePPO with $m = 1.2$ and $1.4$.
      This indicates that those environments indeed require less exploration.
    }
    \label{fig:atari_return_all_games}
\end{figure*}

\begin{figure*}[t]
    \centering
    \setlength{\tabcolsep}{0pt}
    \renewcommand{\arraystretch}{0}
    \begin{tabular}{@{}ccccc@{}}
    \includegraphics[width=0.2\linewidth]{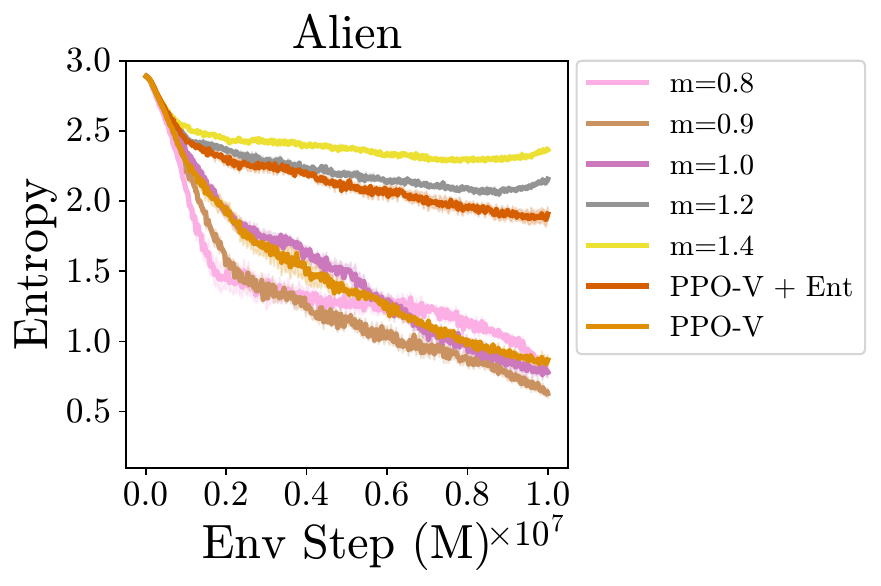} &
    \includegraphics[width=0.2\linewidth]{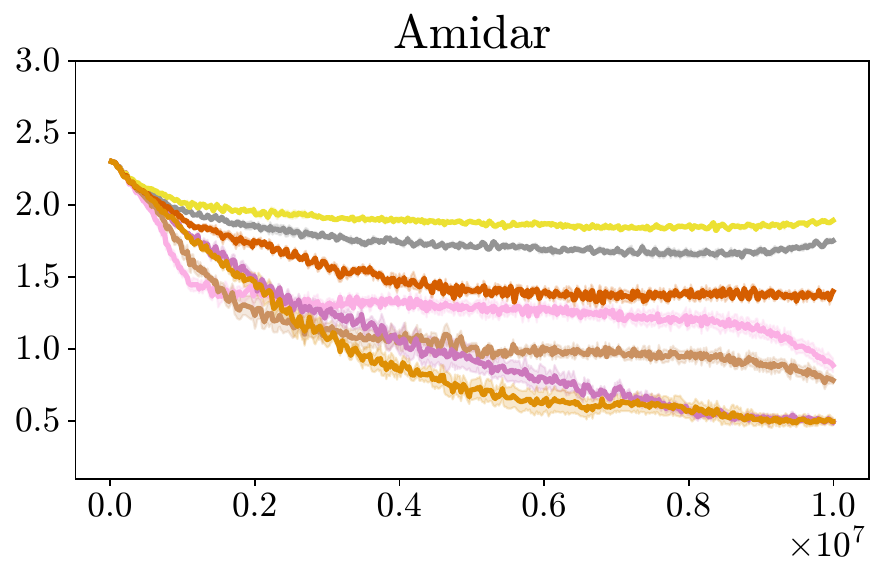} &
    \includegraphics[width=0.2\linewidth]{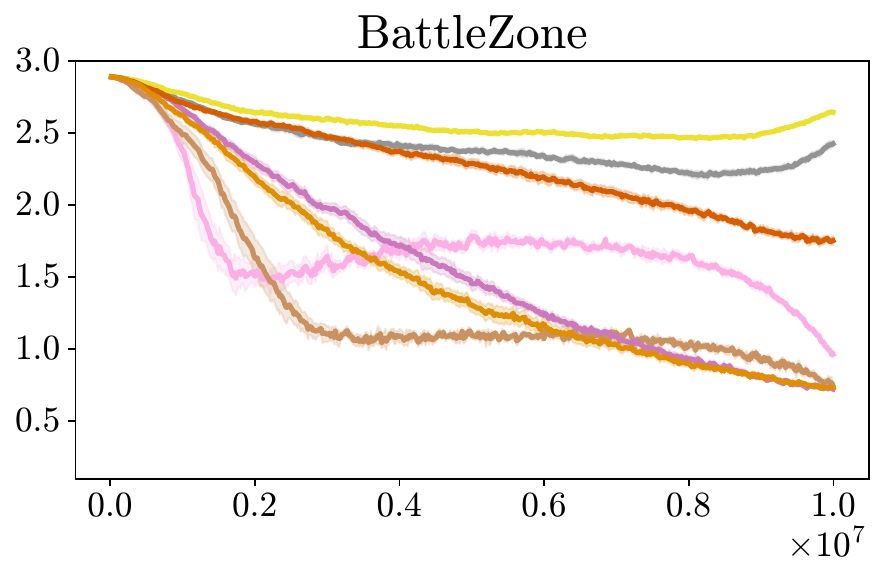} &
    \includegraphics[width=0.2\linewidth]{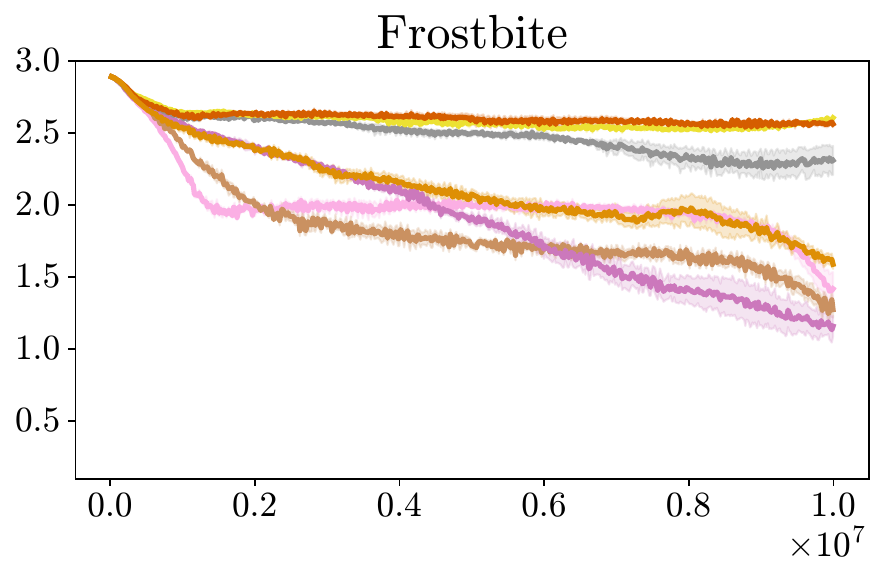} &
    \includegraphics[width=0.2\linewidth]{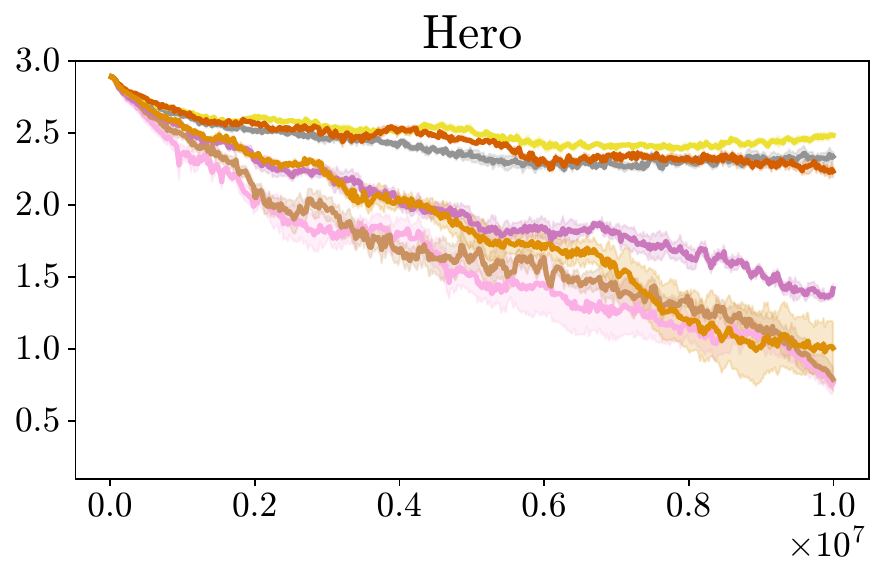} \\
    \includegraphics[width=0.2\linewidth]{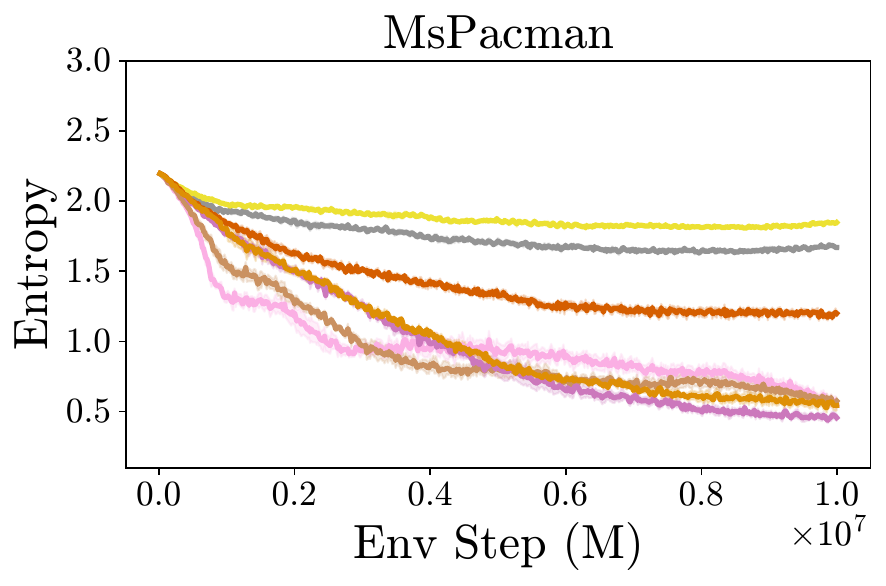} &
    \includegraphics[width=0.2\linewidth]{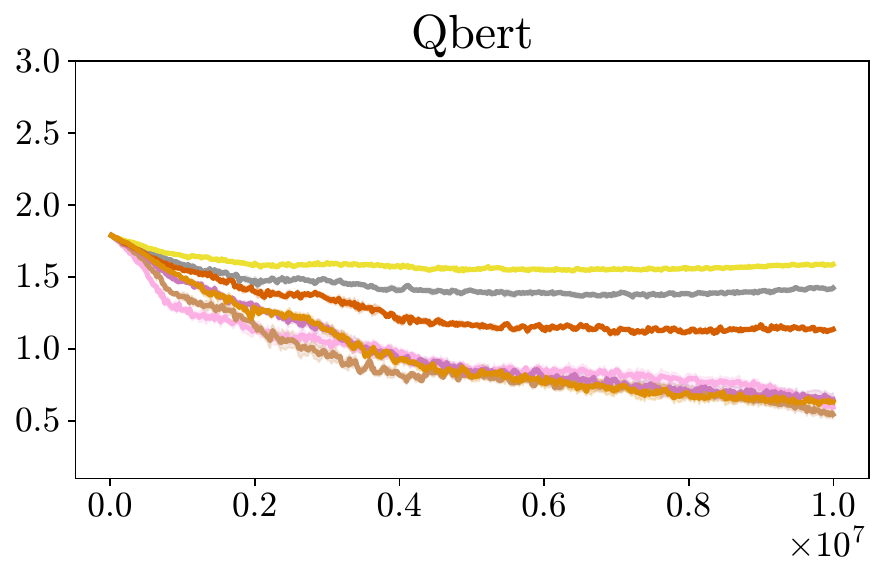} &
    \includegraphics[width=0.2\linewidth]{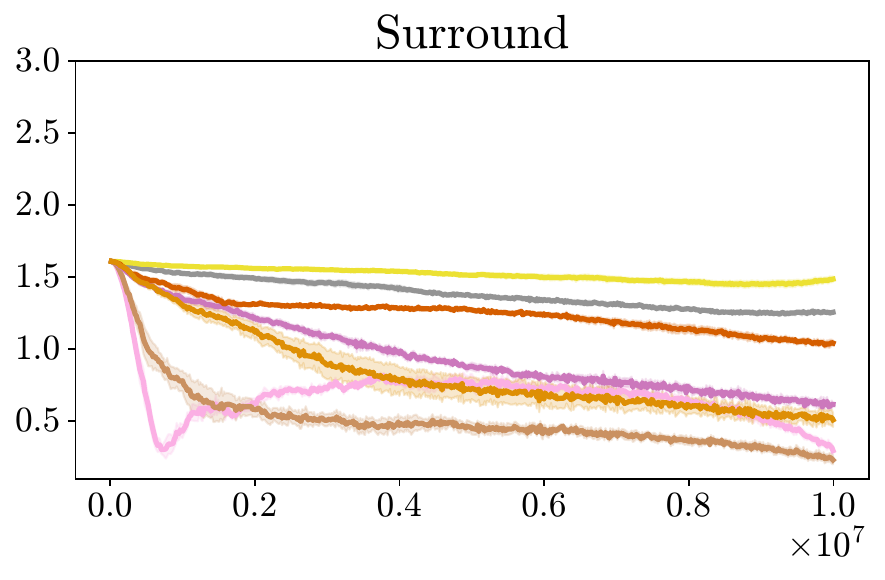} &
    \includegraphics[width=0.2\linewidth]{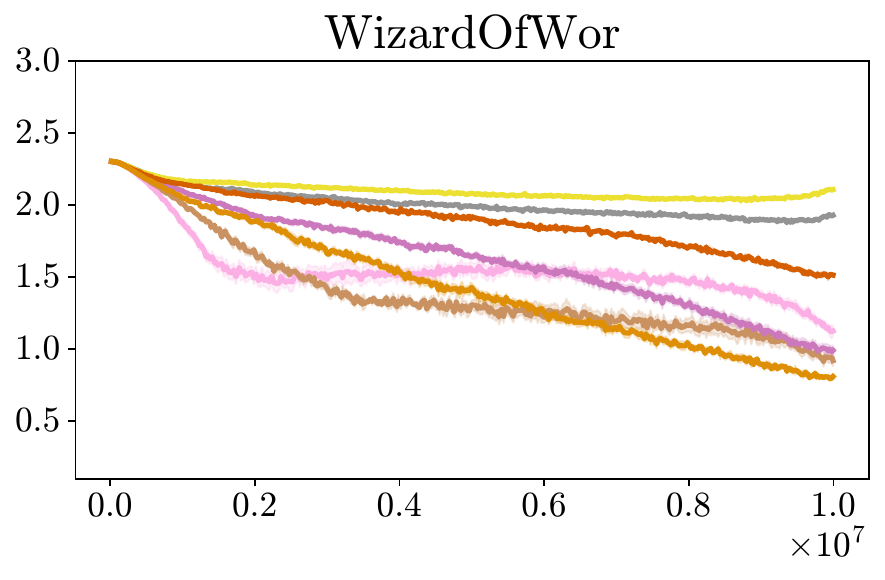} &
    \includegraphics[width=0.2\linewidth]{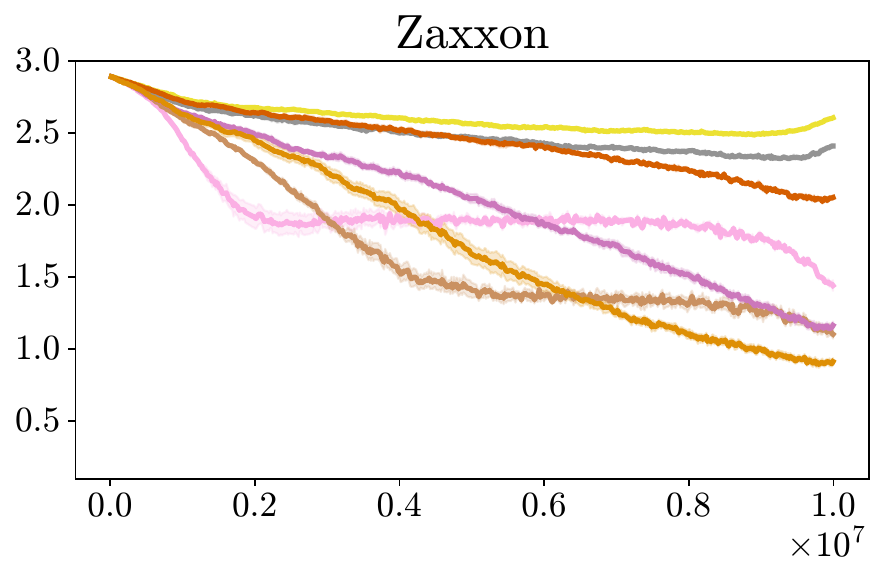} \\
    \end{tabular}
    \caption{
      Policy entropy during training on all games. Mean $\pm$ s.e.\ over 5 seeds. RePPO with $m = 1.2$ and $1.4$ maintains higher entropy, while that with $m = 0.9$ and $1.0$ exhibits faster entropy decay, demonstrating the RePPO's ability to control the trade-off between exploration and exploitation.
    }
    \label{fig:atari_entropy_all_games}
\end{figure*}

\newpage
\section{Craftax Experiment}\label{app:craftax_experiment}

\paragraph{Hyperparameters.}
For PPO-V and PPO-V + RND (RND), we used the same hyperparameters as in the original implementation\footnote{\url{https://github.com/MichaelTMatthews/Craftax_Baselines}}.
For RePPO, we adopted the same hyperparameters and modified only the RePPO-specific retry parameter $m$.
The full set of hyperparameters is listed in Table~\ref{tab:reppo-hparams-craftax} (App.~\ref{app:hparam_tables}).

\subsection{Results}\label{app:craftax_experiment:additional_results}
\begin{wrapfigure}{r}{0.33\linewidth}
    \centering
    \vspace{-1.0em}
    \includegraphics[width=\linewidth]{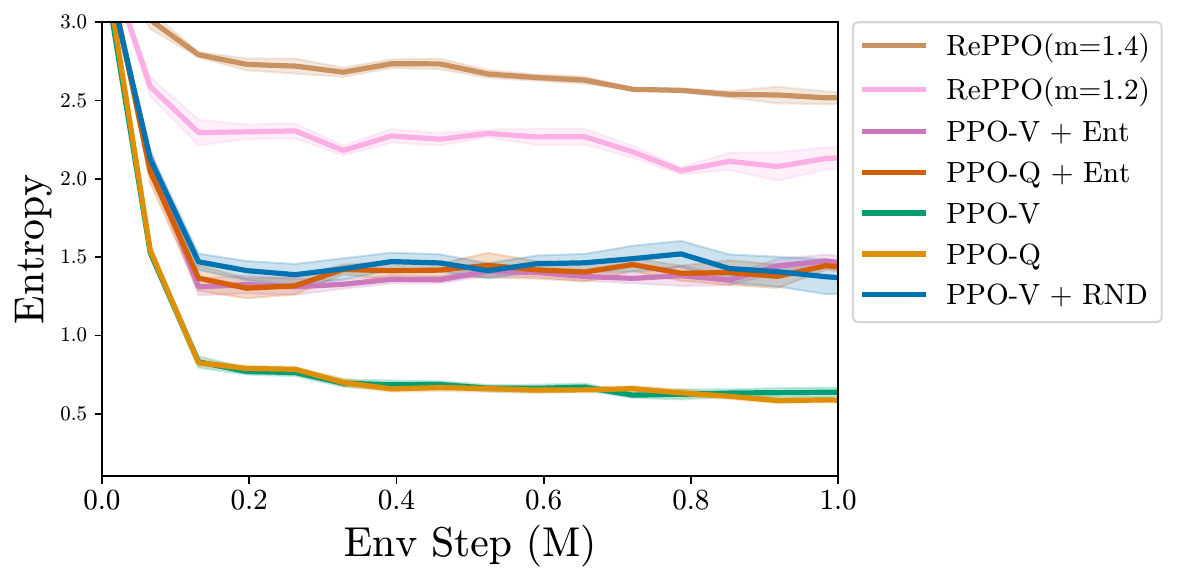}
    \caption{Entropy (Craftax).}
    \label{fig:entropy_craftax}
    \vspace{-1.0em}
\end{wrapfigure}

Fig.~\ref{fig:entropy_craftax} reports the policy entropy during training over 1B environment steps for RePPO ($m\in\{1.2, 1.4\}$, without entropy bonus) and the baseline PPO variants with and without entropy regularization, as well as PPO-V combined with an RND \citep{burda2018exploration} intrinsic bonus.
Three regimes are clearly visible.
First, both RePPO configurations maintain the highest entropy throughout training, with $m=1.4$ sustaining a noticeably higher level than $m=1.2$, mirroring the behavior we observed on MinAtar and Atari.
Second, PPO-V and PPO-Q without an entropy bonus collapse to low-entropy policies within the first $\sim$10\% of training, indicating premature exploitation.
Third, the entropy-regularized baselines (PPO-V/Q + Ent) and PPO-V + RND settle at an intermediate entropy level controlled by the bonus coefficient.
Crucially, RePPO achieves this elevated entropy purely through its retry mechanism, without any explicit exploration bonus.
Combined with \Cref{tab:craftax_results} in Sec.~\ref{sec:exp}, where RePPO~(1.2) matches the performance of entropy-regularized PPO and PPO + RND and clearly outperforms PPO variants without an entropy bonus, this confirms that RePPO effectively promotes exploration even on large-scale, open-ended environments such as Craftax---without requiring entropy regularization or an additional intrinsic-reward model.

\section{Speed Benchmark}\label{app:speed_benchmark}

\begin{figure}[t]
  \centering
  \begin{subfigure}[t]{0.41\linewidth}
    \centering
    \includegraphics[width=\linewidth]{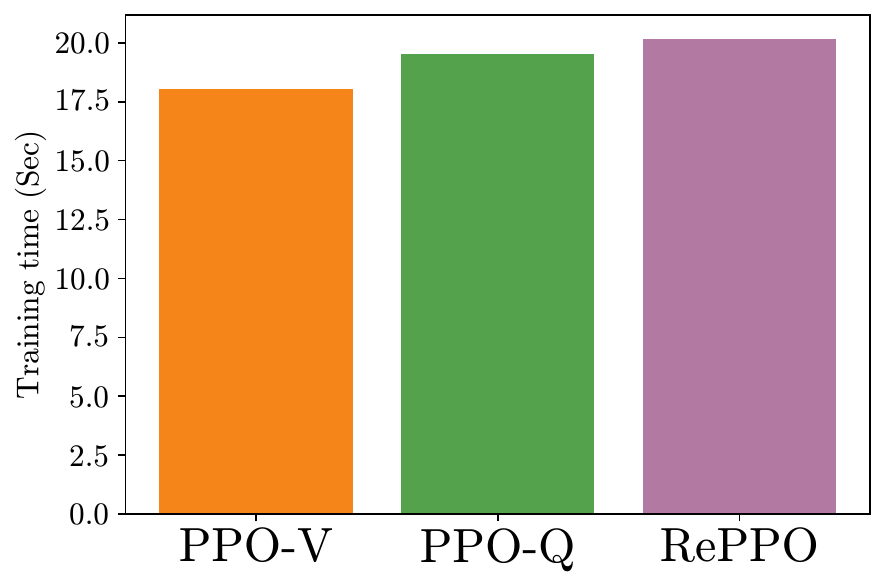}
    \caption{MinAtar (Breakout).}
    \label{fig:speed_benchmark}
  \end{subfigure}\hfill
  \begin{subfigure}[t]{0.48\linewidth}
    \centering
    \includegraphics[width=\linewidth]{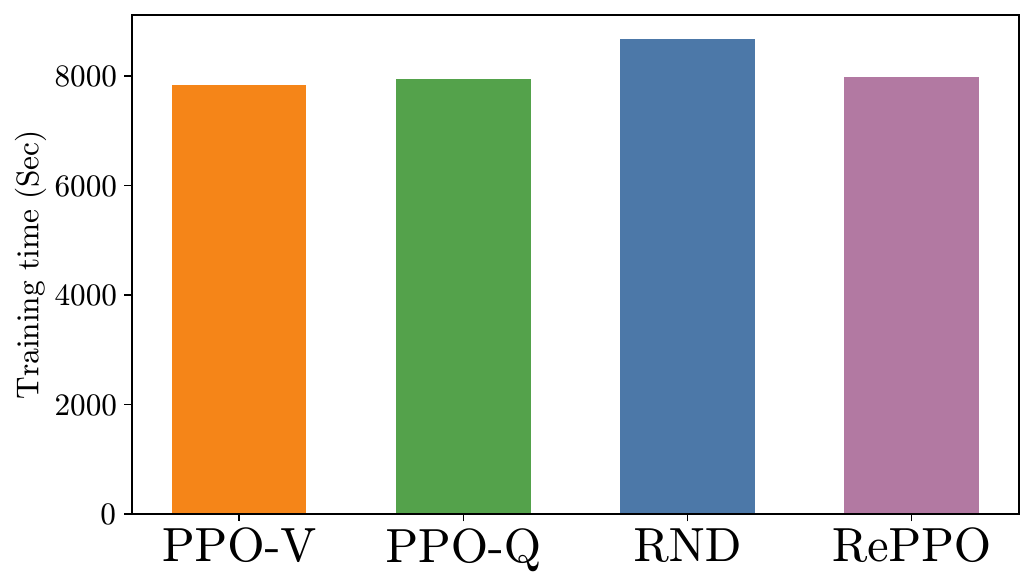}
    \caption{Craftax.}
    \label{fig:speed_benchmark_craftax}
  \end{subfigure}
  \caption{Average training time of RePPO and baseline methods.}
  \label{fig:speed_benchmark_combined}
\end{figure}

\paragraph{MinAtar.}
We compare the training speed of RePPO, PPO-V, and PPO-Q in MinAtar.
Fig.~\ref{fig:speed_benchmark} reports the average wall-clock time on \emph{Breakout} for 10M timesteps (no evaluation), averaged over 5 seeds.
Hyperparameters match Sec.~\ref{sec:exp}.
Since our implementation uses the JAX framework, we exclude JIT warmup time for all methods.
As expected, RePPO is slower than PPO-V and PPO-Q because it computes EI for the advantage.
However, the additional cost is comparable to the gap between PPO-V and PPO-Q, that is, to the overhead incurred by replacing a $V$-critic with a $Q$-critic.
This suggests that the additional cost of RePPO, such as sorting Q-values and computing EI, is negligible compared to the improvement in performance.

\paragraph{Craftax.}
We also compare the training speed of RePPO with PPO-V and PPO-V + RND in Fig.~\ref{fig:speed_benchmark_craftax}.
Computation time is averaged over 5 random seeds.
RePPO performs comparably to PPO-V and PPO-Q, and runs faster than PPO-V + RND---an expected outcome, as RePPO does not require additional models beyond those used in PPO-V and PPO-Q.

\section{LLM Usage}
We made limited, assistive use of Large Language Models (LLMs) for presentation-related tasks.
In particular, we used LLMs to revise wording for readability, provide minor assistance when organizing proof steps, suggest code refactoring options, and propose small figure improvements.
LLMs were not used for research ideation, study design, or the development of substantive scientific contributions.

\newpage
\section{Hyperparameter Tables}\label{app:hparam_tables}
This appendix collects the hyperparameter tables referenced throughout the experiment sections.
\begin{table*}[h]
    \centering
    \caption{Hyperparameters for RePPO (MinAtar).}
    \label{tab:reppo-hparams}
    \small
    \begin{tabular}{ll}
    \toprule
    \textbf{Name (symbol)} & \textbf{Value} \\
    \midrule
 total timesteps & $1.0\times 10^{7}$ \\
 learning rate  & $3.0\times 10^{-4}$ \\
 rollout length  & 128 \\
 parallel envs  & 1024 \\
 update epochs  & 3 \\
 minibatch size  & 1024 \\
 discount $\gamma$ & 0.99 \\
 GAE $\lambda$  & 0.8 \\
 clip $\epsilon$  & 0.2 \\
 entropy coefficient  & (0.0, 0.01) \\
 value loss coefficient  & 0.5 \\
 max grad norm  & 0.5 \\
 optimizer & Adam (with global-norm clip) \\
 RePPO draws $m$ & (1.2, 1.4) \\
    \bottomrule
    \end{tabular}
\end{table*}

\begin{table*}[h]
    \centering
    \caption{Hyperparameters for PPO-V and PPO-Q (MinAtar).}
    \label{tab:ppo-hparams}
    \small
    \begin{tabular}{ll}
    \toprule
    \textbf{Name (symbol)} & \textbf{Value} \\
    \midrule
    total timesteps & $1.0\times 10^{7}$ \\
    learning rate  & $3.0\times 10^{-4}$ \\
    rollout length  & 128 \\
    parallel envs  & 1024 \\
    update epochs  & 3 \\
    minibatch size  & 1024 \\
    discount $\gamma$ & 0.99 \\
    GAE $\lambda$  & 0.95 \\
    clip $\epsilon$  & 0.2 \\
    entropy coefficient  & (0.0, 0.01) \\
    value loss coefficient  & 0.5 \\
    max grad norm  & 0.5 \\
    optimizer & Adam (with global-norm clip) \\
    \textbf{RND related} & \\
    learning rate & $0.001$ (Asterix), $0.0003$ (Breakout, Freeway), $0.0001$ (Space Invaders) \\
    bonus coefficient & 1.5 (Freeway), 1.0 (others) \\
    \bottomrule
    \end{tabular}
\end{table*}

\begin{table*}[h]
    \centering
    \caption{Hyperparameters for PQN on MinAtar: original (128 parallel envs) and tuned (1024 parallel envs).}
    \label{tab:pqn-cnn-hparams}
    \small
    \begin{tabular}{lcc}
    \toprule
    \textbf{Name} & \textbf{Original} & \textbf{Tuned (1024 envs)} \\
    \midrule
    total timesteps & $1.0\times 10^{7}$ & $1.0\times 10^{7}$ \\
    parallel envs & 128 & 1024 \\
    rollout length & 32 & 128 \\
    minibatch size & 128 & 1024 \\
    update epochs & 2 & 3 \\
    learning rate & $5.0\times 10^{-4}$ & $1.0\times 10^{-3}$ \\
    GAE $\lambda$ & 0.65 & 0.8 \\
    LR linear decay & \texttt{True} & \texttt{True} \\
    max grad norm & 10.0 & 10.0 \\
    discount $\gamma$ & 0.99 & 0.99 \\
    $\epsilon$-start / finish / decay ratio & $1.0 \;/\; 0.05 \;/\; 0.1$ & $1.0 \;/\; 0.05 \;/\; 0.1$ \\
    normalization type & \texttt{layer\_norm} & \texttt{layer\_norm} \\
    optimizer & RAdam (global-norm clip) & RAdam (global-norm clip) \\
    \bottomrule
    \end{tabular}
\end{table*}

\begin{table*}[h]
    \centering
    \caption{Hyperparameters for RePPO (Atari).}
    \label{tab:reppo-hparams-atari}
    \small
    \begin{tabular}{ll}
    \toprule
    \textbf{Name (symbol)} & \textbf{Value} \\
    \midrule
    total timesteps & $1.0\times 10^{7}$ \\
    learning rate  & $2.5\times 10^{-4}$ \\
    rollout length  & 128 \\
    parallel envs  & 128 \\
    update epochs  & 4 \\
    minibatch size  & 128 \\
    discount $\gamma$ & 0.99 \\
    GAE $\lambda$  & 0.95 \\
    clip $\epsilon$  & 0.1 \\
    entropy coefficient  & (0.0, 0.01) \\
    value loss coefficient  & 0.5 \\
    max grad norm  & 0.5 \\
    optimizer & Adam (with global-norm clip) \\
    RePPO draws $m$ & (0.8, 0.9, 1.0, 1.2, 1.4) \\
    \bottomrule
    \end{tabular}
\end{table*}

\begin{table*}[h]
    \centering
    \caption{Hyperparameters for PPO-V and PPO-Q (Atari).}
    \label{tab:ppo-hparams-atari}
    \small
    \begin{tabular}{ll}
    \toprule
    \textbf{Name (symbol)} & \textbf{Value} \\
    \midrule
    total timesteps & $1.0\times 10^{7}$ \\
    learning rate  & $2.5\times 10^{-4}$ \\
    rollout length  & 128 \\
    parallel envs  & 128 \\
    update epochs  & 4 \\
    minibatch size  & 128 \\
    discount $\gamma$ & 0.99 \\
    GAE $\lambda$  & 0.95 \\
    clip $\epsilon$  & 0.1 \\
    entropy coefficient  & (0.0, 0.01) \\
    value loss coefficient  & 0.5 \\
    max grad norm  & 0.5 \\
    optimizer & Adam (with global-norm clip) \\
    \bottomrule
    \end{tabular}
\end{table*}

\begin{table*}[h]
    \centering
    \caption{Hyperparameters for RePPO, PPO-V and PPO-Q (Craftax).}
    \label{tab:reppo-hparams-craftax}
    \small
    \begin{tabular}{ll}
    \toprule
    \textbf{Name (symbol)} & \textbf{Value} \\
    \midrule
    total timesteps & $1.0\times 10^{9}$ ($1$B) \\
    learning rate  & $2\times 10^{-4}$ \\
    rollout length  & 64 \\
    parallel envs  & 1024 \\
    update epochs  & 4 \\
    minibatch size  & 8192 \\
    discount $\gamma$ & 0.99 \\
    GAE $\lambda$  & 0.8 \\
    clip $\epsilon$  & 0.2 \\
    entropy coefficient  & \{0.0, 0.01 (for PPO-V and PPO-Q)\} \\
    value loss coefficient  & 0.5 \\
    max grad norm  & 1.0 \\
    \textbf{RND related} & \\
    learning rate & $3.0\times 10^{-4}$ \\
    bonus coefficient & 1.0 \\
    \textbf{RePPO related} & \\
    RePPO draws $m$ & (1.2, 1.4) \\
    \bottomrule
    \end{tabular}
\end{table*}

\end{document}